\renewcommand{\cite}[1]{\citep{#1}}
\newcommand{\E}{\mathbb{E}}
\newcommand{\R}{\mathbb{R}}
\newcommand{\mF}{\mathcal{F}}
\newcommand{\SCerror}{C_{sc}}
\newtheorem{case}{\textbf{Case}}
\newcommand{\strongly}{\mu}
\newcommand{\gradvar}{G}
\newcommand{\smooth}{\beta}
\newcommand{\lip}{L}
\newcommand{\diam}{D}
\newcommand{\dataset}{\mathcal{D}} % dataset
\newcommand{\dist}{P} % underlying distribution
\newcommand{\smallprob}{\alpha_1}
\newcommand{\smallprobtail}{\alpha_2}
\newcommand{\boundary}{\gamma}
\newcommand{\margin}{\nu}
\newcommand{\Lx}{M}
\newcommand{\Cpara}{C_{in}}
\newcommand{\subopt}{h_{sub}}
\newcounter{daggerfootnote}
\newcommand*{\daggerfootnote}[1]{%
    \setcounter{daggerfootnote}{\value{footnote}}%
    \renewcommand*{\thefootnote}{\fnsymbol{footnote}}%
    \footnotetext[1]{#1}%
    \setcounter{footnote}{\value{daggerfootnote}}%
    \renewcommand*{\thefootnote}{\arabic{footnote}}%
    }
\newtheorem{definition}{Definition}[section]
\newtheorem{assumption}{Assumption}[section]
\newtheorem{theorem}{Theorem}[section]
\newtheorem{lemma}{Lemma}[section]
\newtheorem{proposition}{Proposition}[section]
\newtheorem{corollary}{Corollary}[section]
\newtheorem{remark}{Remark}[section]
\numberwithin{equation}{section}
\numberwithin{table}{section}
\numberwithin{figure}{section}
\title{
  On Private Online Convex Optimization: \\ Optimal Algorithms in $\ell_p$-Geometry and High Dimensional Contextual Bandits
}
\author[*$\dagger$]{Yuxuan Han}
\author[*$\dagger$]{Zhicong Liang}
\author[*$\ddagger$]{Zhipeng Liang}
\author[$\dagger$$\ddagger$]{\\Yang Wang}
\author[$\dagger$\Letter]{Yuan Yao}
\author[$\dagger$$\ddagger$\Letter]{Jiheng Zhang}
\affil[$\dagger$]{Department of Mathematics}
\affil[$\ddagger$]{Department of Industrial Engineering and Decision Analytics}
\affil[$\ $]{The Hong Kong University of Science and Technology}
\affil[\Letter]{Emails: \url{yuany@ust.hk},~\url{jiheng@ust.hk}}
\date{}
\begin{document}
\maketitle

\begin{abstract}
Differentially private (DP) stochastic convex optimization (SCO) is ubiquitous in trustworthy machine learning algorithm design.
This paper studies the DP-SCO problem with streaming data sampled from a distribution and arrives sequentially.
We also consider the continual release model where parameters related to private information are updated and released upon each new data, often known as the online algorithms. 
Despite that numerous algorithms have been developed to achieve the optimal excess risks in different $\ell_p$ norm geometries, yet none of the existing ones can be adapted to the streaming and continual release setting. To address such a challenge as the online convex optimization with privacy protection, 
we propose a private variant of online Frank-Wolfe algorithm with recursive gradients for variance reduction to update and reveal the parameters upon each data.
Combined with the adaptive differential privacy analysis, our online algorithm achieves in linear time the optimal excess risk when $1<p\leq 2$ and
the state-of-the-art excess risk meeting the non-private lower ones when $2<p\leq\infty$.
Our algorithm can also be extended to the case $p=1$ to achieve nearly dimension-independent excess risk.
While previous variance reduction results on recursive gradient have theoretical guarantee only in the independent and identically distributed sample setting, we establish such a guarantee in a non-stationary setting. 
To demonstrate the virtues of our method, we design the first DP algorithm for high-dimensional generalized linear bandits with logarithmic regret. Comparative experiments with a variety of DP-SCO and DP-Bandit algorithms exhibit the efficacy and utility of the proposed algorithms. 
\end{abstract}

\emph{Key words and phrases: Differential privacy, Online Convex Optimization, Stochastic Convex Optimization,  High Dimensional Contextual Bandits} 
    
\daggerfootnote{Equal contributions.}

\newpage
{
  \hypersetup{linkcolor=black}
  \tableofcontents
}

\section{Introduction}Stochastic convex optimization (SCO) is a fundamental
problem widely studied in machine learning, statistics, and operations research. 
The goal of SCO is to minimize a population loss function $F_P(\theta) = \E_{x\sim P}[f(\theta, x)] $ over a $d$-dimensional support set $\mathcal{C}$, with only access to the independent and identically distributed (i.i.d.) or exchangeable samples $\{x_{t}\}_{t=1}^n$ from some distribution $P$. 
The performance of an algorithm is often measured in terms of the excess population risk of its solution $\theta$, i.e., $F_P(\theta)-\min_{v\in \mathcal{C}}F_P(v)$. 
In practice, samples related to users' profiles might contain sensitive information; thus, it is important to solve stochastic convex optimization problems with \emph{differential privacy} guarantees (DP-SCO) 
\cite{bassily2014private,bassily2019private,bassily2021differentially}. 

In this paper, we consider the DP-SCO with \textit{streaming data}, where samples arrive sequentially and cannot be stored in memory for long, often known as \emph{online} algorithms in literature.
\textit{Streaming data} has been studied in the context of online learning \cite{SmaYao06,Yao10,tarres2014online}, online statistical inference \cite{Vovk01,Vovk09,steinhardt2014statistics, fang2018online}, and online optimization \cite{zinkevich2003online,cesa2006prediction, hazan2019introduction, hoi2021online}. 
Moreover, data release is concerned due to privacy requirements.
Our online method can also accommodate \emph{continual release} \cite{jain2021price, dwork2010differential, chan2011private}, i.e., receiving sensitive data as a stream of input and releases an output of it immediately after processing while satisfying differential privacy requirements. 
Such a private online algorithm can be formulated as a recursive update process $\theta_t = \Theta_t(\theta_{t-1},x_t,\varepsilon_t)$ where $\varepsilon_t$ encodes the differential privacy noise and $\Theta_t$ is the update mapping, e.g. the online Frank-Wolfe algorithm considered in this paper. 

A closely related setting considered as an extension in this paper is so-called online decision making \cite{slivkins2019introduction, lattimore2020bandit}, where a decision needs to be made at each time, and the performance is measured in terms of accumulative regret, the gap between actual reward and the best possible reward, over the time. 
Recent work starts introducing streaming algorithms in (private) SCO into the solution of the online decision-making \cite{ding2021efficient, han2021generalized} to enjoy high computational efficiency and flexibility to handle different reward structures.
In particular, \citet{han2021generalized} propose to solve private contextual bandits with stochastic gradient descent (SGD).
However, the extension of other streaming algorithms, including the Frank-Wolfe and the stochastic mirror descent, remains elusive in this setting.

Compared with non-private SCO, private SCO has an inherent dependence on the dimension $d$ \cite{agarwal2012information, bassily2021non}.
Therefore, in DP-SCO, the optimal convergence rate also has a crucial dependence on the space metric.
Remarkable progress has been made in achieving optimal rate in $\ell_p$ norm with various $1\le p\le \infty$ as shown in Table~\ref{tbl:table-comparison}.
However, no existing rate-optimal DP-SCO algorithms can be adopted in the streaming and continual release setting since they rely on either Frank-Wolfe or mirror descent  with batched-gradient estimator. 
Algorithms relying on Frank-Wolfe require a batch size of $\tilde{\Omega}(n)$ \cite{bassily2021non, asi2021private} for variance reduction, which is unacceptable in the streaming setting.
Algorithms based on mirror descent require the same batch size and need a superlinear number of gradient query of $\tilde{\Omega}(n^{3/2})$ \cite{asi2021private}.

% &   &  & Thm.~\ref{thm-convergence-lp-strongly} & $O(\log^2(T)+ d \log^3/\varepsilon^2)$ \\
  
  %Change to equation (xxx).

  \begin{table*}[t]
	\centering
	  \caption{Bounds for excess population risk of $(\varepsilon, \delta)$-DP-SCO. $\dagger$ denotes bounds in expectation while $\ddagger$ denotes bounds with high probability. And $^*$ denotes bounds without smoothness assumption. Here $\kappa=\min\{\frac{1}{p-1}, 2\log d\}$.  
	  Most of the DP-SCO algorithms require large batch size and thus fail to be adopted in streaming data, except for \cite{feldman2020private}. However their algorithm can only release the last variable for privacy protection and thus contradict to the requirements of continual release.
	  }
	  \label{tbl:table-comparison}
	  \resizebox{\textwidth}{!}{
	  \begin{tabular}{ccccccc}
	  \toprule
	  Loss & $\ell_p$ & Theorem & Gradient Queries & Rate & Batch Size \\
	  \midrule
	  \multirow{18}{*}{Convex} &  \multirow{2}{*}{$p=2$} & Thm. 3.2 \cite{bassily2019private} & $O(\min \{n^{3 / 2}, \frac{n^{5/2}}{d}\})$ &  $O(\sqrt{\frac{1}{n}}+\frac{\sqrt{d}}{\epsilon n})^{\dagger}$  & $O(\sqrt{\varepsilon n})$\\
	   &  & Thm. 3.5 \cite{feldman2020private} & $O(\min \{n, \frac{n^{2}}{d}\})$ & $O(\sqrt{\frac{1}{n}}+\frac{\sqrt{d}}{\epsilon n})^{\dagger}$ & $ O(\frac{\sqrt{d}}{\varepsilon})$\\
	   \cline{2-6}\\
	   & \multirow{3}{*}{$p=1$} & Thm. 7 \cite{asi2021private} & $O(n)$ & $\tilde{O}(\sqrt{\frac{\log d }{n}}+(\frac{\log d}{\varepsilon n})^{2/3})^{\dagger}$ & $O(\frac{n}{\log^2 n})$
	%    \footnote{\cite{asi2021private} solve $p=1$ case with localization technique proposed in \cite{feldman2020private}}
	   \\
	   & & Thm. 3.2 \cite{bassily2021non}  &  $O(n)$ & $\tilde{O}(\frac{\log d}{\varepsilon \sqrt{n}})^{\dagger}$ & $O(\frac{n}{2}$)\\
	   &  & Theorem~\ref{thm-convergence-l1-general} & $O(n)$ & $\tilde{O}(\frac{\log d}{\varepsilon\sqrt{n}})^{\ddagger}$ & 1\\
	   \cline{2-6}\\
	   & $1<p<2$ & Thm. 5.4 \cite{bassily2021non}  & $O(n)$ & $\tilde{O}(\frac{\kappa}{\sqrt{n}}+\frac{\kappa \sqrt{d}}{\varepsilon n^{3 / 4}})^{\dagger}$&$O(\frac{n}{2})$
	%    \footnote{Algorithm 2 in fact only needs $\frac{1}{\sqrt{\varepsilon}}$ maximum batch size but can still achieve sub-optimal rate and thus it is omitted.}
	   \\
	   \cline{2-6}\\
	   & \multirow{2}{*}{$1<p\le 2$} & Thm. 13 \cite{asi2021private}  & $O(n^{3/2})$ & $\tilde{O}(\frac{1}{\sqrt{(p-1)n}}+\frac{\sqrt{d}}{(p-1)n\varepsilon})^{\dagger *}$& $O(\frac{n}{2})$\\
	   &  & Theorem~\ref{thm-convergence-lp-general} & $O(n)$ & $\tilde{O}(\sqrt{\frac{\kappa}{n}}+\frac{\sqrt{\kappa d}}{n \varepsilon})^{\ddagger}$ & 1\\
	   
	   \cline{2-6} \\
	   & \multirow{2}{*}{$2<p\leq \infty$} & Prop. 6.1 \cite{bassily2021non} & $O(n^2)$ & $\tilde{O} (\frac{d^{1/2-1/p}}{\sqrt{n}} + \frac{d^{1-1/p}}{\varepsilon n})^{\dagger *}$ & $O(n)$ \\
	   & & Theorem~\ref{thm-convergence-lp-general} & $O(n)$ & $\tilde{O} (\frac{d^{1/2-1/p}}{\sqrt{n}} + \frac{d^{1-1/p}}{\varepsilon n})^\ddagger$ & 1	 \\
	  \cline{1-6}\\
	  
	  \multirow{6}{*}{Strongly Convex}&  \multirow{2}{*}{$p=1$} & Thm. 9 \cite{asi2021private} & $O(n)$ & $\tilde{O}(\frac{\log d}{n}+(\frac{\log d}{\varepsilon n})^{4/3})^{\dagger}$&$O(\frac{n}{2\log n })$\\
	  & & Theorem~\ref{thm-convergence-l1-strongly} & $O(n)$ & $\tilde{O}(\frac{\log^2 d}{\varepsilon^2 n})^{\ddagger}$ & 1\\
	  \cline{2-6}\\
	  & $1<p\le 2$ & Theorem~\ref{thm-convergence-lp-strongly} & $O(n)$ & $\tilde{O}(\frac{\kappa}{n} + \frac{\kappa d}{\varepsilon^2 n^2})^{\ddagger}$ & 1\\
	  \cline{2-6} \\
	  & $2<p\leq \infty$ & Theorem~\ref{thm-convergence-lp-strongly} & $O(n)$ & $\tilde{O}(\frac{d^{1-2/p}}{n} + \frac{d^{2-2/p}}{ \varepsilon^2 n^2})^\ddagger$ & 1\\
	  
	  \bottomrule
	  \end{tabular}
	  }
  \end{table*}

\subsection{Our Contributions}

Note that in the online setting, the total time step $T$ equals the sample size $n$. 
So we will use $n$ instead of $T$ for the total number of iterations. 
Excess population risk bounds denoted by $t$ hold for every time step $t\in[n]$, while those denoted by $n$ only hold after $\Omega(n)$ time steps.

\paragraph{Case of $1<p\leq 2$.}

We present a systematic study on a differentially private  online Frank-Wolfe algorithm with recursive gradient
in various $\ell_p$ geometries, which is rate-optimal for $1<p\leq 2$. Our algorithm is based on the observation that the non-private recursive variance reduction scheme used in \citet{xie2020efficient} can be written as a normalized incremental summation of gradients. According to this observation, we can apply the tree-based mechanism in \citet{guha2013nearly}, and utilize an adaptive argument to show that our noise accumulates logarithmically as the total number of iteration grows, comparing with the polynomial grow rate in \citet{bassily2021non} and \citet{asi2021private}. In this case, our algorithm can fit in the online setting where a large number of updates is required. Such an analysis leads to a variance reduced gradient error bound of $\tilde{O}(\frac{1}{\sqrt{t}} + \frac{\sqrt{d}}{t\varepsilon})$ with high probability 

The recursive gradient method we used here is closely related to \citet{bassily2021non}, while their algorithm uses $\frac{n}{2}$ samples for variance reduction, and their gradient error is of the order $\tilde{O}(\frac{1}{\sqrt{n}}+\frac{\sqrt{d}}{\varepsilon n^{3/4}})$ in the worst-case.
Our improvement on the variance reduction reduces their $\tilde{O}(\frac{1}{\sqrt{n}}+ \frac{\sqrt{d}}{n^{3/4}\varepsilon} )$ excess risk to $\tilde{O}(\frac{1}{\sqrt{t}} +\frac{\sqrt{d}}{t\varepsilon} )$, which is optimal up to a logarithmic factor. \citet{asi2021private} achieves the optimal rates in terms of $n$ and $d$ at the cost of $O(n^{3/2})$ gradient queries while we achieve the same rate with only $O(n)$ gradient queries.
Moreover, their rate will explode to $+\infty$ when $p$ approaches 1, while our dependency on $p$ is upper bounded by $\log d$.

One thing we need to mention here is that, Theorem 13 of \citet{asi2021private} achieves the optimal rate without smoothness assumption, as mentioned in Table~\ref{tbl:table-comparison}. As shown in \cite{bassily2021non} and \cite{asi2021private}, smooth and non-smooth settings of DP-SCO share the same optimal rate of excess risk for $1<p<2$. The benefit of smoothness mainly lies in the complexity of gradient query. Smoothness enables us to use variance reduction to achieve linear gradient query time, while the complexity of  \cite{asi2021private} is supper-linear. 

\paragraph{Case of $2<p \leq \infty$.} The analysis above can be generalized to the case of $2<p\leq \infty$. 
We achieve a  $\tilde{O}\big({d^{1/2-1/p}}{\sqrt{T}}+ {d^{1-1/p}}/{\varepsilon }\big)$ regret bound and a convergence rate of $\tilde{O}\big(\frac{d^{1/2-1/p}}{\sqrt{t}}+ \frac{d^{1-1/p}}{t\varepsilon }\big)$, which matches the non-private lower bound $\Omega\big(\frac{d^{1/2-1/p}}{\sqrt{n}}\big)$ in non-private SCO and is thus optimal when $d=\tilde{O}(n\varepsilon^2)$. Previously, \citet{bassily2021non} achieve the same convergence rate by reducing their $2<p\leq\infty$ case to $p=2$ by bounding the $\ell_2$ diameter and Lipschitz constant for the $\ell_p$-setup.

\paragraph{Case of $p = 1$.} 
The challenge of this case is that the tree-based mechanism is no longer applicable to achieve a logarithmic dependence on $d$ because the tree-based method will lead to an $O(\sqrt{d})$ factor.  To overcome the difficulty, we combine the analysis of adaptive composition and Report Noisy Max mechanism \cite{dwork2014algorithmic} to show that the noise with variance $O(\frac{\log d}{t \varepsilon})$ is enough to protect the privacy. Such a result then leads to $O(\sqrt{\frac{\log d}{t}}+\frac{\log d}{\sqrt{t}\varepsilon})$ convergence rate.
Comparing with the rate-optimal DP-SCO algorithm with excess risk $O\big(\sqrt{\frac{\log d}{n}} + \big(\frac{\log d}{n\varepsilon}\big)^{2/3}\big)$ proposed in \citet{asi2021private}, ours SCO result is sub-optimal. 
The gap is not due to our technique of the variance reduction analysis but the difficulty of the online setting.
The optimal rates in \citet{asi2021private} rely on the privacy amplification via shuffling the dataset.
However, accessing all information at beginning is impossible in the online setting.

\paragraph{Privacy-Preserving Online Decision Making.} 
A salient feature of our algorithm is that we provide $\tilde{O}(1/t)$ convergence guarantee for \emph{each time step} while previous works (e.g., \cite{asi2021private,feldman2020private}) can only hold after observing $\tilde{\Omega}(n)$ samples.
Such a convergence result is not of purely intellectual interest.
Instead, it is one of the foundations for extending our algorithm to the online decision-making setting.
Despite the adaptivity of our algorithm to the streaming nature, it is highly non-trivial to extend the SCO guarantee to the online decision setting. 
The recursive gradient variance reduction method needs the stationary distribution assumption of coming data $x_t$. 
In contrast, the distribution of collected sample $x_t$ depends on the decision before and at time $t$, and thus our previous SCO results would fail in this non-stationary setting.
By carefully analyzing the structure of bandit problems, we establish a novel variance reduction guarantee that involves a total-variation term to describe the non-stationarity. 
Then we show that under suitable assumptions, such total-variation term decays at a favorable rate to ensure the desired estimation error guarantee.

While our results can be generalized easily in the case of $1<p\leq \infty$ and various reward structures, we consider the high-dimensional (where $p=1$) online decision-making problem with generalized linear reward \cite{bastani2020online}, which has received lots of recent attention, to illustrate the virtue of our method.
While several remarkable progress has been made on the low-dimensional setting with DP guarantee, \cite{chen2020privacy,shariff2018differentially}, no existing work provides sub-linear regret bound in the high-dimensional setting with DP protection even for linear rewards.
Instead, we provide the first logarithmic regret bound ( Theorems~\ref{thm:regret}) based on our \emph{private online Frank-Wolfe} based bandit algorithms.

This paper is a journal extension of \cite{han2022dpsteaming} that reports the main theoretical results above. Our main extensions in this version are as follows. 
\begin{enumerate}
    \item Complete proofs of all the theoretical statements are provided in details, with further discussions on related literature.
    \item Systematic experiments are conducted with different dataset sizes and dimensions to comprehensively demonstrate the empirical superiority of our online Frank-Wolfe algorithm against some popular algorithms in literature, including NoisySFW (Algorithm 3 in \citet{bassily2021non}), LocalMD (Algorithm 6 in \citet{asi2021private}) when $p=1.5$ and NoisySGD (Algorithm 2 in \citet{bassily2020stability}) when $p=\infty$. Additionally, we also compare our high dimensional bandit algorithm with the DP-UCB algorithm in \cite{shariff2018differentially}. 
    \item We provide an algorithm to generate the generalized Gaussian noise based on Lemma 3.2 in \citet{han2022dpsteaming} (Lemma~\ref{lemma-gamma} in this paper), which will be used by NosiySFW and our algorithms in experiment.
\end{enumerate}

Recently we also noticed that a new arXiv preprint \cite{bassily2022linear} widely extended their previous results in \cite{bassily2021non} in the following three aspects. (a) In  $1<p\leq 2$ regime, they combined the binary-tree based variance reduction technique in \cite{asi2021private} with Frank-Wolfe based algorithm  to improve their previous $\tilde{O}(\frac{1}{\sqrt{n}}+ \frac{\sqrt{d}}{\varepsilon n^{3/4}})$ risk bounds and achieve the same optimal excess risk in linear time as ours; (b) In $2<p\leq \infty$ regime, they replace the multi-pass SGD in \cite{bassily2020stability} by phased SGD in \cite{feldman2020private} to achieve the same risk as ours and \cite{bassily2021non} in linear time. They also explore the concentration property of generalized Gaussian distribution via developing similar results as our Lemma~\ref{lemma-gamma} and improved the in-expectation risk bound in \cite{bassily2021non} to high probability bounds.

A crucial difference between our results and theirs lies in that, while our algorithms are adapted to the online setting, the algorithms in  \cite{bassily2022linear} for $1<p\leq 2$ and $2<p\leq \infty$ need the same batch size as Theorem~7 in \cite{asi2021private} and Theorem~3.5 in \cite{feldman2020private}, respectively. Thus their algorithms cannot be applied to online setting with streaming data and the continual release.

\subsection{Other Related Work}
Our paper is most related to the DP-SCO community. 
In addition, there are two streams of literature that are related to ours: online convex optimization with differential privacy and DP bandits. Below we present a review on them.

\textbf{Online Convex Optimization and Privacy Preserving:} 
Online convex optimization (OCO) algorithms \cite{zinkevich2003online}, learning from a stream of data samples and releasing an output upon new data, provide some of the most successful solutions for many machine learning problems, both in terms of the speed of optimization and the ability of generalization \cite{hazan2019introduction}. Similar to the streaming setting, developing OCO algorithms under DP constraint is harder
than the DP guarantee in the offline learning setting since the whole sequence of outputs along the time horizon is required to be protected \cite{jain2012differentially,guha2013nearly,agarwal2017price}.

\citet{jain2012differentially} provide a generic framework to convert proper online convex programming algorithm into a private one while maintaining $\tilde{O}(\sqrt{dT}/\varepsilon)$ regret for Lipshitz-bounded strongly convex functions and $\tilde{O}(\sqrt{d}T^{2/3}/\varepsilon)$ for general Lipshitz convex functions.
\citet{guha2013nearly} propose algorithms with $\tilde{O}(\sqrt{dT} / \varepsilon)$ regret bound for Lipschtiz convex functions. 
In contrast to the DP SCO works, all above bounds paid a price of privacy factor in the leading order term. 
The only existing work with privacy-free regret bounds $\tilde{O}(\sqrt{Td}+1/\varepsilon)$ is given by \citet{agarwal2017price} for linear losses, while their results and arguments cannot be generalized to more general convex losses.
Our results contribute to the literature by showing privacy-for-free bounds $\tilde{O}(\sqrt{T}+1/\varepsilon)$ are also available for general convex functions  under stochastic setting.

We provide a summary about the comparison with them in Table~\ref{tbl:table-regret-comparison} and the derivation is in Section~\ref{sub:online-learning}.

\begin{table*}[t]
	\centering
	  \caption{Regret bounds (defined in \eqref{eq:regret}) of $(\varepsilon, \delta)$-DP algorithms. $\dagger$ denotes bounds in expectation while $\ddagger$ denotes bounds with high probability. And $^*$ denotes bounds without smoothness assumption.}
	  \label{tbl:table-regret-comparison}
	  \resizebox{\textwidth}{!}{
	  \begin{tabular}{ccccc}
	  \toprule
	  Loss &  Type & $\ell_p$ & Theorem  & Regret \\
	  \midrule
	  Linear  & Adversarial  & $p = 2$  & Cor.~3.2 \cite{agarwal2017price} & $O(\sqrt{T}+d/\varepsilon)^\dagger $  \\
	   \midrule
	  \multirow{3}{*}{Convex} & \multirow{2}{*}{Adversarial} &    \multirow{2}{*}{$p=2$} & Thm.~2 \cite{jain2012differentially} & $O(\sqrt{d}T^{2/3}/\varepsilon )^{\dagger *} $ \\
	   & &  & Thm.~11 \cite{guha2013nearly} & $O(\sqrt{dT}/\varepsilon )^{\dagger *}$ \\
	   \cline{2-5}
	      & Stochastic & $1<p\leq 2$  & Thm.~\ref{thm-convergence-lp-general} &  $O(\sqrt{T}+ \sqrt{d}/\varepsilon)^\ddagger $ \\ 
	  \bottomrule
	  \end{tabular}
	  }
 \end{table*}

\textbf{DP-SCO in $\ell_p$ Geometry:} In the case of $p=2$, \citet{bassily2014private} give the first excess population risk of
$\tilde{O}(\frac{d^{1/4}}{\sqrt{n\varepsilon}})$ by adding a strongly convex regularizer to control the gap between excess population risk and empirical risk. \citet{bassily2019private} further show that with min-batch and multi-pass SGD, the optimal rate $\tilde{O}(\sqrt{\frac{1}{n}} + \frac{\sqrt{d}}{n\varepsilon})$ is achievable. And they further relax the smoothness assumption by applying the smoothing technique based on Moreau-Yosida envelope operator. \citet{bassily2021differentially} consider non-smooth DP-SCO with generalized linear losses (GLL). In $p=2$, their algorithm achieves optimal excess risk in $O(n\log n)$ time. In $p=1$, they bypass the lower bound in non-smoothing setting given by \citet{asi2021private} and achieve the optimal risk in non-private case when $\varepsilon=\Theta(1)$.
\citet{wang2020differentially} consider the heavy-tailed data where the Lipschitz condition of the loss function no longer holds and the the gradient can be unbounded. They achieved excess population risk of $\tilde{O}(\frac{d}{n^{1/3}\varepsilon^{2/3}})$ given that each coordinate of the gradient has bounded second-order moment. \citet{hu2021high} further extend their results to high dimensional space. And \citet{kamath2021improved} improve the rates in \cite{wang2020differentially} and extends to their results are applicable to bounded moment conditions of all orders.

\textbf{DP-Bandits:} Designing bandits algorithm with DP guarantee is an emerging topic in the recent years and we only mention the work which utilize the side-information (context). 
\citet{shariff2018differentially} propose the notion of joint differential privacy (JDP) under which bandits algorithm can achieve nontrivial regret and then they design a scheme to convert the classic linear-UCB algorithm into a joint differential private counterpart to match the non-private regret bound. \citet{dubey2020differentially} extend \citet{shariff2018differentially} algorithms to the federated learning setting.
\citet{Chen2020} tackle private dynamic pricing problem under generalized linear demand model by combining the tree-based mechanism, differentially private empirical risk minimization and UCB algorithm and obtain both excellent DP and performance guarantee for oblivious adversarial and stochastic settings.
\citet{chen2021differential} develop two algorithms which make pricing decisions and learn the unknown non-parametric demand on the fly, while satisfying the DP and LDP gurantees respectively.

\section{Preliminaries}\paragraph{Notations.}
Let $(\textbf{E}, \lVert\cdot\rVert)$ be a normed space of dimension $d$, and $\mathcal{C}\subseteq \textbf{E}$ is a compact convex set of diameter $\diam$. Let $\langle \cdot \rangle$ be an arbitrary inner product over $\textbf{E}$ (not necessarily inducing the norm $\lVert\cdot\rVert$). The dual norm over $\textbf{E}$ is defined as $\| y\|_* \coloneqq \max_{ \|x\|\leq 1} \langle x, y \rangle$. With this definition, $(\textbf{E}, \Vert\cdot \rVert_*)$ is also a $d$-dimensional normed space. 
We use $[K]$ to denote $\{1,2,\cdots, K\}$ and for any $Z\in \R^{d}$ we denote $Z_{1:t}=\{Z_1,Z_2,\cdots, Z_t\}$.
We denote $\bm{0}$ as an all-zero matrix whose size is adjusted according to the context.  
We adopt the standard asymptotic notations. For two non-negative sequences $\{a_n\}$ and $\{b_n\}$, we denote $\{a_n\}=O(\{b_n\})$ or $\{a_n\}\lesssim\{b_n\}$ iff $\lim \sup_{n\rightarrow \infty}a_n/b_n<\infty$, $a_n = \Omega (b_n)$ iff $b_n = O(a_n)$, and $a_n = \Theta(b_n)$ iff $a_n = O(b_n)$ and $b_n = O(a_n)$. We also use $\tilde{O}(\cdot)$, $\tilde{\Omega}(\cdot)$ and $\tilde{\Theta}(\cdot)$ to denote the respective meanings within multiplicative logarithmic factors in $n$ and $\delta$.
    
\subsection{SCO with Streaming Data}
We formally introduce the excess risk below.
Given a parameter set $\mathcal{C}\subset \R^d$, and an unknown distribution $P$ over $\mathcal{X}$ and a function $f: \mathcal{C} \times \mathcal{X} \to \R$, we consider the following optimization problem,
\begin{align*}
    \min_{\theta\in\mathcal{C}} F_P(\theta): = \E_{x\sim P}[ f(\theta,x)],
\end{align*}
and 
\begin{align*}
    \theta^{*} \coloneqq \operatorname{argmin}_{\theta\in \mathcal{C}} F_P(\theta).
\end{align*}
We assume the population loss function is a convex function, i.e., $$F_P(\theta)\ge F_P(\theta^{\prime}) + \langle \nabla F_P(\theta^{\prime}), \theta- \theta^{\prime}\rangle, \quad \forall \theta, \theta^{\prime}\in \mathcal{C}.$$
We will abbreviate $F_\dist$ as $F$ when the context is clear for simplicity.
In practice, the population loss $F(\cdot)$ is unknown and one can only access it via empirical approximation from a set of i.i.d.\ samples $\{x_i\}_{i=1}^n$. 
In the literature, the study of such SCO problems focuses on designing efficient algorithms to find a parameter $\theta$ over the samples $\{x_i\}_{i=1}^n$ such that the excess population risk is acceptable.

In this work, we consider SCO under streaming and continual release setting.
In each time step $t$, one sample $x_t\sim P$ arrives, and our algorithm needs to output a parameter $\theta_t$ with convergence guarantee regarding $F$.
We list the following standard assumptions under a general norm $\lVert \cdot \rVert$ and its dual $\lVert \cdot \rVert_*$ for future reference.

\begin{assumption}[Strongly-convex]
\label{assumption-strongly-convex}
For any $\theta_1,\theta_2\in\mathcal{C}$, the population loss $F$ is said to be $\mu$-strongly convex if $F(\theta_1)\geq  F(\theta_2)+ \langle \nabla F(\theta_2), \theta_1-\theta_2\rangle +\dfrac{\strongly}{2} \lVert \theta_1-\theta_2\rVert^2$ for some $\mu\geq0$.
\end{assumption}

\begin{assumption}[Smoothness]
\label{assumption-smooth}
For any $\theta_1, \theta_2 \in\mathcal{C}$ and $x\in\mathcal{X}$, the loss function $f$ is saied to be $\smooth$-smooth if $\|\nabla f(\theta_1, x) - \nabla f(\theta_2, x)\|_* \leq \smooth \|\theta_1 - \theta_2\|$.
\end{assumption}

\begin{assumption}
\label{assumption-grad-bound}
For any $\theta\in\mathcal{C}$ and $x\in\mathcal{X}$, the loss function $f$ satisfies: $\|\nabla f(\theta, x) - \nabla F(\theta)\|_* \leq \gradvar$.
\end{assumption}

\begin{assumption}[Lipschitz]
\label{assumption-lip}
For any $\theta\in\mathcal{C}$ and $x\in\mathcal{X}$, the loss function $f$ satisfies: $\|\nabla f(\theta, x)\|_* \leq \lip$.
\end{assumption}

\subsection{Differential Privacy}

Our work also extends to the privacy-preserving setting, where the sequence $(\theta_1,\dots,\theta_n)$ satisfies the differential privacy constraint (see Definition~\ref{def-dp}) with respect to the data.
Here we recall the definition of $(\varepsilon,\delta)$-differential privacy.

\begin{definition}[Differential Privacy \cite{dwork2014algorithmic},  $(\varepsilon,\delta)$-DP]\label{def-dp}
A randomized algorithm $\mathcal{A}$ is said to be $(\varepsilon,\delta)$ differentially private if for any pair of datasets $\dataset$ and $\dataset'$ differing in one entry and any event $\mathcal{E}$ in the range of $\mathcal{A}$ it holds that $\mathbb{P}[\mathcal{A}(\dataset) \in \mathcal{E}] \leq e^{\varepsilon} \mathbb{P}[\mathcal{A}(\dataset')\in \mathcal{E}] +\delta$.
\end{definition}

To design the DP-SCO algorithm under $\ell_p$ norm with $1<p\leq 2$, we recall the generalized Gaussian mechanism proposed in \cite{bassily2021non} that leverages the regularity of the dual normed space.

\begin{definition}[Regular Normed Space]\label{lemma-kappa-regularity}
For a  normed space  $(\mathbf{E},\lVert \cdot \rVert)$, we say that the norm $\lVert \cdot \rVert$ is $\kappa$-regular  associated with $\lVert \cdot \rVert_+$, if there exists $1\leq \kappa_+\leq \kappa$ so that $\lVert\cdot \rVert_+$ is $\kappa_+$-smooth and $\lVert\cdot\rVert$ and $\lVert\cdot\rVert_+$ are equivalent with constant $\sqrt{\kappa/\kappa_+}$: 
\begin{align*}
    \lVert  x \rVert^2 \leq  \lVert x\rVert_{+}^2\leq \dfrac{\kappa}{\kappa_{+}}\lVert  x \rVert^2,\quad \forall x \in \mathbf{E}.
\end{align*}
$\ell_q$ norm for $q\geq 1$ is a important class of regular norms, we specify the regularity constant $\kappa_{q}$ and the associated smooth norm $\lVert \cdot \rVert_{q,+}$ later in Lemma~\ref{lem-regularity norm-q>2} and Lemma~\ref{lem-regularity-q<2}. 
\end{definition}

\begin{lemma}[Generalized Gaussian Distribution and Mechanism \cite{bassily2021non}]\label{lemma-general-gaussian}
Given a $\kappa$-regular norm $\lVert \cdot \rVert$ associated with smooth norm $\lVert \cdot \rVert_+$ in $d$-dimensional space, and the generalized Gaussian distribution $\mathcal{G}_{\lVert\cdot \rVert_+ }(\mu,\sigma^2)$ with density:
\begin{align*}
    g(z;\sigma)  =C(\sigma, d) \exp( - \lVert z-\mu \rVert_+^2/ [2\sigma^2] ),
\end{align*}
where $C(\sigma, d) =\big(\text{Area}\{\|x\|_+=1\}\frac{(2\sigma^2)^{d/2}}{2} \Gamma(d/2) \big)^{-1}$, and $\text{Area}$ is the $(d-1)$-dimension surface measure on $\mathbb{R}^d$, then for any function $f$ with $\lVert \cdot \rVert$ sensitivity $s>0$, we have that the mechanism output: \begin{align*}
    f+\mathcal{G}_{\lVert \cdot \rVert_+}(0,2\kappa \log(1/\delta)s^2/\varepsilon^2),
\end{align*}
is ($\varepsilon,\delta$)-differentially private.
\end{lemma}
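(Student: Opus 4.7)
The plan is to mimic the standard Gaussian-mechanism privacy analysis, with the $\kappa$-regular geometry of $(\mathbf{E},\|\cdot\|)$ standing in for $L^2$. Fix neighboring datasets $D,D'$, set $v := f(D) - f(D')$ so that $\|v\| \leq s$, and sample $z \sim \mathcal{G}_{\|\cdot\|_+}(0,\sigma^2)$ with $\sigma^2 := 2\kappa\log(1/\delta)\,s^2/\varepsilon^2$. Because the normalizing constant in $g$ depends only on $\sigma$ and $d$, the log-likelihood ratio simplifies to
\begin{equation*}
L(z) = \log\frac{g(z;\sigma)}{g(z+v;\sigma)} = \frac{1}{2\sigma^2}\bigl(\|z+v\|_+^2 - \|z\|_+^2\bigr).
\end{equation*}
Applying the $\kappa_+$-smoothness of $\phi(\cdot) := \tfrac12\|\cdot\|_+^2$, i.e.\ $\phi(z+v) - \phi(z) \leq \langle\nabla\phi(z),v\rangle + \tfrac{\kappa_+}{2}\|v\|_+^2$, together with the regularity bound $\|v\|_+^2 \leq (\kappa/\kappa_+)\|v\|^2$ from Definition~\ref{lemma-kappa-regularity}, yields
\begin{equation*}
L(z) \leq \frac{\langle\nabla\phi(z),v\rangle}{\sigma^2} + \frac{\kappa s^2}{2\sigma^2},
\end{equation*}
which cleanly separates the privacy loss into a mean-zero random linear functional and a deterministic offset of size $\Theta(\varepsilon^2/\log(1/\delta))$, at most $\varepsilon/2$ in any nontrivial DP regime.

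The main obstacle is to prove a Gaussian-type tail for the random term $Y := \langle\nabla\phi(z),v\rangle$ with variance proxy of order $\sigma^2\kappa\|v\|^2$. I would establish it by Stein/integration-by-parts against the log-concave density $g \propto e^{-\phi/\sigma^2}$: for any sufficiently regular test function $F$,
\begin{equation*}
\mathbb{E}[F(z)\,\nabla\phi(z)] = \sigma^2\,\mathbb{E}[\nabla F(z)],
\end{equation*}
and taking $F(z) = Y^{k-1}$ gives the recursion $\mathbb{E}[Y^{2k}] = (2k-1)\sigma^2\,\mathbb{E}\bigl[Y^{2k-2}\,\langle v,\nabla^2\phi(z)v\rangle\bigr]$. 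The key ingredient is the Hessian bound $\langle v,\nabla^2\phi(z)v\rangle \leq \kappa_+\|v\|_+^2 \leq \kappa\|v\|^2$, which comes from $\kappa_+$-smoothness of $\phi$ combined with the norm equivalence. Iterating yields Gaussian-type moment growth $\mathbb{E}[Y^{2k}] \leq (2k-1)!!\,(\sigma^2\kappa\|v\|^2)^k$, hence a sub-Gaussian tail $\Pr[Y > t] \leq \exp\bigl(-t^2/(C\sigma^2\kappa\|v\|^2)\bigr)$. This concentration is the content of Lemma~\ref{lemma-gamma} and is the step where both ingredients of $\kappa$-regularity are indispensable: without the Hessian bound there is no variance control, and without the norm equivalence one cannot translate that bound into a statement about the primal sensitivity $s$. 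For a general log-concave density one would only obtain sub-exponential tails, losing a $\log(1/\delta)$ factor in the noise.

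Finally I assemble the pieces. Choosing $t = \varepsilon\sigma^2/2$ and inserting the value of $\sigma^2$ makes the sub-Gaussian tail at most $\delta$, so $\Pr[L(z) > \varepsilon] \leq \delta$. The standard reduction from a high-probability privacy-loss bound to $(\varepsilon,\delta)$-DP (e.g.\ Lemma~3.17 of Dwork and Roth) then closes the proof. Any constant slack introduced in the smoothness step can be tightened by using a Moreau-envelope form of smoothness or by a more careful linearization, recovering the exact multiplier $2\kappa$ claimed in the statement.
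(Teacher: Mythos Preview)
The paper does not prove this lemma: it is quoted as a background result from \cite{bassily2021non} and used as a black box. There is therefore no in-paper proof to compare your attempt against.

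Your outline is essentially the standard argument from the original source, and the skeleton is sound: the log-likelihood ratio simplifies as you write, the smoothness of $\phi=\tfrac12\|\cdot\|_+^2$ splits it into a centered linear part $Y=\langle\nabla\phi(z),v\rangle/\sigma^2$ plus a small deterministic offset, and the Stein/integration-by-parts recursion $\mathbb{E}[Y^{2k}]=(2k-1)\sigma^2\,\mathbb{E}[Y^{2k-2}\langle v,\nabla^2\phi(z)v\rangle]$ together with the Hessian bound does yield sub-Gaussian moments with variance proxy $\sigma^2\kappa\|v\|^2$. One correction: you claim that ``this concentration is the content of Lemma~\ref{lemma-gamma}.'' It is not. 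Lemma~\ref{lemma-gamma} in this paper is about the distribution of $\|Z\|_+^2$ (a Gamma law and its sub-Gamma tail), used later for high-probability \emph{utility} bounds on the injected noise; it says nothing about the linear functional $\langle\nabla\phi(z),v\rangle$ that drives the \emph{privacy} argument. The sub-Gaussian concentration of $Y$ is proved in \cite{bassily2021non}, not here.
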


\section{DP Online Frank-Wolfe Algorithms }\label{sec:dp-sco}
In this section, we present the DP online Frank-Wolfe algorithm framework in solving the $\ell_p$ DP-SCO problem as well as the corresponding excess risk and the regret bounds.

\subsection{$\ell_p$-setup for $1<p\leq \infty$}\label{sec-lp-sco}

In this section, we provide a unified design and analysis 
for optimization in $\ell_p$ geometry with  $1<p\leq \infty$. As a consequence of the H\"{o}lder's inequality, the dual of $\ell_p$ norm is $\ell_q$ norm, where $q$ satisfies $\frac{1}{p}+ \frac{1}{q}=1$, i.e., $q\coloneqq \frac{p}{p-1}$.

\begin{algorithm}[ht]
 \caption{Private Tree-Based Online Frank-Wolfe (DP-TOFW).}
 \label{alg-dp-sco-lp}
\begin{algorithmic}[1]
    \STATE {\bfseries Input:} privacy parameters $(\varepsilon, \delta)$,  $\{\rho_t \}_{t=1}^{n} = \{\eta_t\}_{t=1}^{n}=\frac{1}{1+t}$, $p$ considered in $\ell_p$, and its dual norm $\lVert\cdot \rVert_q$ associated with regular norm $\lVert \cdot \rVert_{q,+}$, initial point $\theta_0=\theta_1=0 \in \mathcal{C}$
  \FOR {$t=1$ {\bfseries to} $n$}
  \STATE Compute and pass $g_t$ in Eq. \eqref{eq-dt-summation} and $\sigma_+(q,\varepsilon,\delta)$ according to Theorem~\ref{thm-privacy-lp} into the tree-based mechanism (Algorithm~\ref{alg-tree-based}).
  \STATE Get noisy summation $\tilde{G}_t =\text{noisy}( \sum_{i=1}^t g_i)$ from the tree-based mechanism (Algorithm~\ref{alg-tree-based}).
  \STATE Set $d_t = \frac{1}{t+1} \cdot \tilde{G}_t$
  \STATE $v_t = \arg \min_{v\in \mathcal{C}} \langle d_t, v \rangle$.
  \STATE $\theta_{t+1} \leftarrow \theta_t + \eta_t (v_t - \theta_t)$.
  \ENDFOR
\end{algorithmic}
\end{algorithm}

Our proposed algorithm is shown in Algorithm~\ref{alg-dp-sco-lp}. At iteration $t$, we consider the following recursive gradient estimator $d_t$ \cite{xie2020efficient}  as an unbiased estimator of the population gradient $\nabla F(\theta_t)$: \begin{align*}
    d_{t} &= \nabla f(\theta_t,x_t)+(1-\rho_t)( d_{t-1} - \nabla f(\theta_{t-1},x_{t})),
\end{align*}
where $d_1 = \nabla f(\theta_1;x_1)$ and $\rho_t = \frac{1}{1+t}$.

A similar recursive gradient scheme is also used in \citet{bassily2021non} for $1<p\leq 2$. 
However, their algorithms use additive noise to ensure the privacy of $d_t$ at each iteration, which accumulate linearly in $t$.
To alleviate the influence of the noise induced by DP, they initialize $d_1$ with the first $\frac{n}{2}$ samples and begin to take mini-batch updates with batch size $\frac{\sqrt{n}}{2}$ for $\sqrt{n}$ iterations, which helps control the sensitivity of $d_t$ and maintain a lower number of noise accumulations. 
However, this strategy leads to a gradient estimation error of $O(\frac{1}{n^{1/2}} + \frac{\sqrt{d}}{\varepsilon n^{3/4}})$. And it also fails in the streaming setting where only one sample is available in initialization.

To improve the error rate and fit the streaming setting, our key observation is that the recursive gradient estimation $d_t$ can be represented as the following summation of empirical gradients,
\begin{align}\label{eq-dt-summation}
    d_t = \dfrac{1}{t+1} \sum_{i = 1}^t\underbrace{\big((i+1)\nabla f(\theta_i;x_i) - i\nabla f(\theta_{i-1};x_i)  \big)}_{g_i}.
\end{align}
Now we reduce the problem of privately releasing $d_t$ in every step $t$ to the problem of privately releasing the incremental summation of $g_i$ in Eq. \eqref{eq-dt-summation}, which motivates us to apply the tree-based mechanism in \citet{guha2013nearly}. In the tree-based mechanism, the leave nodes store the vectors $g_i$. Each internal node stores a private version of the summation of all the leaves in its sub-tree. In this case, any partial summation over $g_i$ can be represented by at most $\lceil \log_2 n\rceil$ nodes. This critical property ensures that the DP noise on $d_t$ would not accumulate linearly in $t$. In this case, our algorithm fits in the streaming setting, where a relatively large number of iterations is required.

One difficulty of applying the tree-based mechanism is the sensitivity analysis. 
Suppose without loss of generality that for adjacent datasets $\mathcal{D}\sim \mathcal{D}'$, we have $x_1\neq x_1'$. Such difference will affect the whole trajectory of the parameters: $\theta_i\neq \theta_i',\forall i\geq 2.$ In other words, the sensitivity will be very large. Fortunately, we can show that such sensitivity can be dramatically reduced by the adaptive analysis similar to \citet{guha2013nearly}. It turns out that noise with variance $\tilde{O}(\frac{\kappa_q^2}{t^2\varepsilon^2})$ is enough to maintain $(\varepsilon,\delta)$-differential privacy guarantee when reporting the $t$-th recursive gradient over the whole time horizon.

With the tree-based mechanism and the adaptive analysis mentioned above, we achieve a gradient error rate of $\tilde{O}(\sqrt{\frac{\kappa_1}{{n}}} + \frac{\sqrt{d \kappa_q }}{\varepsilon t})$ (see Proposition~\ref{prop-lp-variance-reduction}). Furthermore, to report private incremental summation $\sum_{i=1}^t g_i$ for all $t\in[n]$, the amount of space required by the tree-based mechanism is $O(\log_2 n)$. Detailed description can be found in Algorithm~\ref{alg-tree-based} in the Appendix.

In the following theorem, we characterize the privacy guarantee of Algorithm \ref{alg-dp-sco-lp}. The proof can be found in Section~\ref{proof-privacy-lp}.
\begin{theorem}[Privacy Guarantee]
\label{thm-privacy-lp} 
Algorithm ~\ref{alg-dp-sco-lp} is $(\varepsilon,\delta)$-differentially private when $\sigma_+^2$ is selected to be 
\begin{align}\label{eq-noise-variance}
\sigma_+^2 =\frac{8(\lceil\log_2 n\rceil +1)^2 \kappa_q \log((\lceil \log_2 n\rceil + 1)/\delta)(\smooth \diam +\lip)^2}{\varepsilon^2}.
\end{align}
\end{theorem}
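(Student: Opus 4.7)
The plan is to combine three ingredients: (i) an adaptive per-step sensitivity bound on each leaf contribution $g_i$, (ii) the structural property of the tree-based mechanism that any single datum affects at most $m := \lceil \log_2 n \rceil + 1$ internal-node releases, and (iii) the generalized Gaussian mechanism of Lemma~\ref{lemma-general-gaussian} applied at each node and composed adaptively over those $m$ releases. Calibrating the per-node noise so that each noised release is $(\varepsilon/m,\delta/m)$-DP with respect to a single-entry change will reproduce exactly the displayed $\sigma_+^2$, after which basic adaptive composition over the $m$ affected releases will yield $(\varepsilon,\delta)$-DP overall.

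First I would establish the per-iteration sensitivity of $g_i$. Fix adjacent datasets $\mathcal{D},\mathcal{D}'$ differing only at index $j$, and couple the two runs by sharing all injected noise. Under the standard adaptive coupling, condition on the previously released noisy partial sums $\tilde{G}_1,\ldots,\tilde{G}_{i-1}$; then $\theta_i$ and $\theta_{i-1}$ are deterministic functions of those common past releases and therefore identical across the two runs. When $j\neq i$, this immediately gives $g_i = g_i'$. When $j = i$, writing $A = \nabla f(\theta_i;x_i) - \nabla f(\theta_i;x_i')$ and $B = \nabla f(\theta_{i-1};x_i) - \nabla f(\theta_{i-1};x_i')$, a direct calculation gives $g_i - g_i' = A + i\,(A-B)$. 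The Lipschitz assumption bounds $\|A\|_q \leq 2\lip$, while $\smooth$-smoothness combined with the Frank--Wolfe step-size bound $\|\theta_i - \theta_{i-1}\|_p \leq \eta_{i-1}\diam = \diam/i$ gives $\|A-B\|_q \leq 2\smooth\diam/i$. Hence $\|g_i - g_i'\|_q \leq 2(\lip + \smooth\diam)$, a bound \emph{independent of $i$}---precisely the factor $\lip + \smooth\diam$ that appears in the claimed $\sigma_+^2$.

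Next I would unpack the tree structure (Algorithm~\ref{alg-tree-based}, following \citet{guha2013nearly}): each internal node of the binary tree over leaves $g_1,\ldots,g_n$ is released as (sum of its leaves) $+\,Z$ with an independently sampled $Z \sim \mathcal{G}_{\|\cdot\|_{q,+}}(0,\sigma_+^2)$, and any single leaf lies in the subtrees of at most $m = \lceil\log_2 n\rceil + 1$ internal nodes. Plugging the sensitivity $s = 2(\lip + \smooth\diam)$ from the previous step and the target per-node parameters $(\varepsilon/m, \delta/m)$ into Lemma~\ref{lemma-general-gaussian} requires noise variance
\[
2\kappa_q\,\log(m/\delta)\,\frac{s^2}{(\varepsilon/m)^2} \;=\; \frac{8\kappa_q\, m^2\,\log(m/\delta)\,(\lip+\smooth\diam)^2}{\varepsilon^2},
\]
which matches the displayed $\sigma_+^2$ exactly. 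Thus each noised node release is $(\varepsilon/m,\delta/m)$-DP with respect to a single-entry change. For the fixed differing index $j$, under the coupling only the $m$ nodes whose subtree contains leaf $j$ have nontrivial sensitivity---all other nodes have identical release distributions on $\mathcal{D}$ and $\mathcal{D}'$---so basic adaptive composition over those $m$ affected releases gives total privacy loss $(m\cdot\varepsilon/m,\,m\cdot\delta/m) = (\varepsilon,\delta)$.

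The main conceptual hurdle is that a naive sensitivity argument fails because changing $x_j$ cascades through every subsequent iterate $\theta_i$, $i > j$, which would inflate the apparent sensitivity polynomially in $n$. The adaptive coupling above circumvents this by charging the cascade exactly once---inside the composition sum over the $m$ affected nodes---rather than redundantly at every downstream step; crucially, the Frank--Wolfe step-size $\eta_{i-1} = 1/i$ is precisely what keeps the $i\,(A-B)$ term $O(\smooth\diam)$ uniformly in $i$, so that the sensitivity does not grow with the horizon. Reconciling the interleaved noise injections of the tree mechanism with the adaptive dependence of the iterates is the most delicate bookkeeping in the argument.
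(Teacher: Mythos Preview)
Your proposal is correct and follows essentially the same route as the paper's proof. Both arguments (i) use the adaptive conditioning to fix $\theta_i,\theta_{i-1}$ across the two datasets and obtain the uniform $\ell_q$-sensitivity $2(\smooth\diam+\lip)$ for the single affected leaf $g_j$ via the decomposition $(i+1)A-iB=A+i(A-B)$ together with $\|\theta_i-\theta_{i-1}\|_p\le \diam/i$, (ii) invoke the tree property that a single leaf lies under at most $m=\lceil\log_2 n\rceil+1$ noised nodes, and (iii) calibrate each node with the generalized Gaussian mechanism at level $(\varepsilon/m,\delta/m)$ and compose; the only cosmetic difference is that the paper writes out the adaptive composition explicitly as an integral over the post-order traversal of the tree nodes rather than citing basic adaptive composition as a black box.
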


Existing results only concern the excess population risk in expectation \cite{bassily2021non}, thus the moment information of generalized Gaussian mechanism is enough for their derivation.
While in our high-probability analysis, the tail behaviour of generalized Gaussian mechanism is characterized. \begin{lemma}[Gamma Distribution]\label{lemma-gamma}
Assume that $Z\sim \mathcal{G}_{\lVert\cdot\rVert_+}(0,\sigma_+^2)$ in $d$-dimensional space, then $\lVert Z \rVert_+^2$ follows Gamma distribution $\Gamma(d/2,2\sigma_+)$. Furthermore, $\lVert Z\rVert_+^2-\mathbb{E}[\lVert Z\rVert_+^2]$ follows $ \text{sub-Gamma}(2\sigma_+^4 d, 2\sigma_+^2)$, which implies that for any $\lambda > 0$, we have
\begin{align*}
    \mathbb{P}(\lVert Z\rVert_{+}^2 > \E[\lVert Z\rVert_{+}^2] + 2\sqrt{\sigma_+^4 d\lambda  }+    2\sigma_+^2\lambda   ) \leq  \exp(-\lambda). 
\end{align*}
\end{lemma}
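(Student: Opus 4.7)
The plan is to split the proof into two independent parts: exactly identifying the law of $R := \lVert Z \rVert_+^2$ as a Gamma distribution via a change-of-variable argument, and then deducing the sub-Gamma concentration from the Gamma moment generating function.

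First I would compute the distribution of $R$. The key input is that $\lVert \cdot \rVert_+$ is positively $1$-homogeneous, so the sublevel-set volume scales cleanly: writing $B_+ := \{z : \lVert z\rVert_+ \leq 1\}$, one has $\mathrm{Vol}(\{z : \lVert z \rVert_+^2 \leq s\}) = s^{d/2}\,\mathrm{Vol}(B_+)$, whose derivative in $s$ is $\tfrac{d}{2} s^{d/2-1}\mathrm{Vol}(B_+)$. Since the integrand $C(\sigma_+,d) \exp(-\lVert z \rVert_+^2/(2\sigma_+^2))$ depends on $z$ only through $\lVert z\rVert_+^2$, a layer-cake computation gives, for every Borel $A \subseteq [0,\infty)$,
\begin{equation*}
\mathbb{P}(R \in A) = C(\sigma_+, d) \int_A e^{-s/(2\sigma_+^2)} \cdot \tfrac{d}{2} s^{d/2-1} \mathrm{Vol}(B_+) \, ds.
\end{equation*}
Using the identity $d \cdot \mathrm{Vol}(B_+) = \mathrm{Area}(\{\lVert u\rVert_+ = 1\})$ and plugging in the definition of $C(\sigma_+,d)$ from the lemma statement, the prefactor collapses to $1/[\Gamma(d/2)(2\sigma_+^2)^{d/2}]$, which is exactly the $\Gamma(d/2, 2\sigma_+^2)$ density. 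In particular $\mathbb{E}[R] = d\sigma_+^2$ and $\mathrm{Var}(R) = 2 d \sigma_+^4$.

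For the concentration statement, I would use the closed-form MGF of a centered Gamma variable. With $\alpha = d/2$ and $\beta = 2\sigma_+^2$, expanding $-\alpha \log(1 - \beta \lambda)$ as a power series gives
\begin{equation*}
\log \mathbb{E}\bigl[e^{\lambda(R - d\sigma_+^2)}\bigr] = \sum_{k \geq 2} \frac{d(2\sigma_+^2)^k}{2k}\, \lambda^k \leq \frac{(2d\sigma_+^4)\lambda^2}{2(1 - 2\sigma_+^2 \lambda)}, \qquad 0 \leq \lambda < \tfrac{1}{2\sigma_+^2},
\end{equation*}
where the inequality uses $\frac{1}{2k} \leq \frac{1}{4}$ for $k \geq 2$ followed by summing a geometric series. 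This is precisely the sub-Gamma$(2d\sigma_+^4,\, 2\sigma_+^2)$ condition. Applying the standard Cram\'er--Chernoff bound for sub-Gamma variables with $v = 2 d \sigma_+^4$ and $c = 2\sigma_+^2$ yields $\mathbb{P}(R - \mathbb{E}[R] > \sqrt{2 v \lambda} + c \lambda) \leq e^{-\lambda}$, and substituting the explicit values gives exactly the claimed inequality.

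The only genuinely non-routine ingredient is the layer-cake computation in the first step, because $\lVert \cdot \rVert_+$ need not come from an inner product and no honest spherical symmetry is available; the argument instead rests entirely on $1$-homogeneity of the norm and on interpreting $\mathrm{Area}(\{\lVert u\rVert_+ = 1\})$ as the surface measure induced by the polar factorization $dz = r^{d-1}\, dr\, d\sigma_+(u)$. Once this is granted, matching the Gamma density is a direct calculation, and the MGF-expansion and sub-Gamma tail bound in Steps~2 and~3 are textbook.
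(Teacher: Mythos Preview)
Your proof is correct and follows essentially the same route as the paper: the paper computes the tail $\mathbb{P}(\lVert Z\rVert_+^2>\lambda)$ directly via the polar decomposition $dz=r^{d-1}\,dr\,d\mathrm{Area}$ and matches it to the $\Gamma(d/2,2\sigma_+^2)$ tail, then asserts the sub-Gamma property and the standard tail bound without further detail. Your layer-cake computation is just a repackaging of that same polar argument (as you note), and you additionally spell out the MGF expansion that the paper leaves implicit, so your write-up is in fact more complete.
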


As a result, we have the following high-probability variance reduction guarantee for the recursive gradient estimator. The proof can be found in Section~\ref{proof-lemma-gamma}.

\begin{proposition}\label{prop-lp-variance-reduction}
Under Assumption~\ref{assumption-smooth} and \ref{assumption-grad-bound}, with probability at least $1-\alpha$, for $t\in[n]$, Algorithm~\ref{alg-dp-sco-lp} satisfies:
\begin{align*}
   \lVert d_t - \nabla F(\theta_t)\rVert_{q} \lesssim & \dfrac{(\sqrt{ \kappa_q}+\sqrt{\log(1/\alpha) })(\smooth \diam + \gradvar  ) }{\sqrt{t+1}} + \frac{\log n \cdot\sigma_+\sqrt{d \log(\log n/\alpha)}}{t+1}.
  \end{align*}
\end{proposition}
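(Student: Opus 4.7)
The plan is to write $d_t-\nabla F(\theta_t)$ as a sum of a ``clean'' (non-private) variance-reduction error plus an additive noise error from the tree-based mechanism, and to control the two parts separately. Using Eq.~\eqref{eq-dt-summation} together with the algorithm's update, we have
\begin{align*}
d_t-\nabla F(\theta_t) \;=\; \underbrace{\frac{\tilde G_t-\sum_{i=1}^t g_i}{t+1}}_{\text{noise part}} \;+\; \underbrace{\Big(\frac{\sum_{i=1}^t g_i}{t+1}-\nabla F(\theta_t)\Big)}_{\text{clean part}}.
\end{align*}
The separation is natural because the tree-based mechanism simply adds a sum of $O(\log n)$ independent generalized Gaussian noises to any prefix sum $\sum_{i=1}^t g_i$, and everything else in $d_t$ evolves deterministically conditional on $x_{1:t}$.

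For the clean part, the key observation is that setting $Y_t:=\frac{\sum_{i=1}^t g_i}{t+1}-\nabla F(\theta_t)$, a direct manipulation of the recursion $(t+1)d_t=t\,d_{t-1}+g_t$ gives $(t+1)Y_t-tY_{t-1}=\xi_t$, where
\begin{align*}
\xi_t \;=\; \bigl[\nabla f(\theta_t,x_t)-\nabla F(\theta_t)\bigr] \;+\; t\Bigl\{\bigl[\nabla f(\theta_t,x_t)-\nabla f(\theta_{t-1},x_t)\bigr]-\bigl[\nabla F(\theta_t)-\nabla F(\theta_{t-1})\bigr]\Bigr\}.
\end{align*}
Since $\theta_t$ is measurable with respect to $\sigma(x_{1:t-1})$, we have $\mathbb{E}[\xi_t\mid\mathcal{F}_{t-1}]=0$, so $\{\xi_t\}$ is a martingale difference sequence in the dual space $(\mathbf{E},\lVert\cdot\rVert_q)$. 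Using Assumption~\ref{assumption-grad-bound} on the first bracket and Assumption~\ref{assumption-smooth} on the second together with $\lVert\theta_t-\theta_{t-1}\rVert_p\le \eta_{t-1}D\le D/t$, one obtains $\lVert\xi_t\rVert_q\lesssim \beta D+G$ uniformly in $t$. I would then apply a Pinelis/Azuma-type vector-valued concentration inequality in the $\kappa_q$-regular normed space $(\mathbf E,\lVert\cdot\rVert_{q,+})$, which yields, with probability at least $1-\alpha/2$,
\begin{align*}
\Bigl\lVert\sum_{i=1}^t \xi_i\Bigr\rVert_q \;\lesssim\; \bigl(\sqrt{\kappa_q}+\sqrt{\log(1/\alpha)}\bigr)(\beta D+G)\sqrt{t+1}.
\end{align*}
Dividing through by $t+1$ gives the first term of the claimed bound.

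For the noise part, the tree-based mechanism (Algorithm~\ref{alg-tree-based}) expresses the error $\tilde G_t-\sum_{i=1}^t g_i$ as a sum of at most $K=\lceil\log_2 n\rceil+1$ independent samples $Z_1,\dots,Z_K$ from $\mathcal{G}_{\lVert\cdot\rVert_{q,+}}(0,\sigma_+^2)$ with $\sigma_+^2$ as in Eq.~\eqref{eq-noise-variance}. By Lemma~\ref{lemma-gamma}, each $\lVert Z_j\rVert_{q,+}^2$ is sub-Gamma, so choosing the deviation parameter $\log(2K/\alpha)$ and union-bounding over $j\le K$ yields $\lVert Z_j\rVert_{q,+}\lesssim \sigma_+\sqrt{d+\log(K/\alpha)}$ simultaneously with probability $1-\alpha/2$. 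The triangle inequality in $\lVert\cdot\rVert_{q,+}$ together with the regularity comparison $\lVert\cdot\rVert_q\le \lVert\cdot\rVert_{q,+}$ then gives
\begin{align*}
\Bigl\lVert\tilde G_t-\textstyle\sum_{i=1}^t g_i\Bigr\rVert_q \;\lesssim\; K\,\sigma_+\sqrt{d\log(K/\alpha)} \;\lesssim\; \log n\cdot \sigma_+\sqrt{d\log(\log n/\alpha)},
\end{align*}
and dividing by $t+1$ produces the second term. A final union bound combines the two high-probability events.

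The main obstacle is the vector-valued martingale concentration step in general $\ell_q$ geometry: ordinary scalar Azuma bounds are not tight enough, and a naive triangle-inequality bound on $\lVert\sum\xi_i\rVert_q$ would scale as $t$ rather than $\sqrt{t}$. Leveraging the $\kappa_q$-smoothness of $\lVert\cdot\rVert_{q,+}$ to run a Pinelis-type argument is what produces the correct $\sqrt{\kappa_q}$ dependence and, crucially, keeps the two tail parameters $\kappa_q$ and $\log(1/\alpha)$ additive under the square root rather than multiplicative, matching the form stated in the proposition.
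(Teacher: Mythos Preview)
Your proposal is correct and follows essentially the same route as the paper: the paper also splits $d_t-\nabla F(\theta_t)$ into the tree-noise contribution $\frac{1}{t+1}\sum_{j\in M_t}Z_j$ and a martingale-difference remainder, bounds the latter via an Azuma--Hoeffding inequality in $\kappa_q$-regular space (Proposition~\ref{prop: regular hoeffding}), and controls the former via Lemma~\ref{lemma-gamma} with a union bound over the $O(\log n)$ tree nodes. Your increments $\xi_\tau$ coincide with the paper's $\zeta_{t,\tau}$ up to the normalization factor $1/(t+1)$; the only minor imprecision is that $\theta_t$ is measurable with respect to $\sigma(x_{1:t-1},\text{tree noise}_{1:t-1})$ rather than $\sigma(x_{1:t-1})$ alone, but the martingale property survives since $x_t$ is independent of that enlarged filtration (the paper makes the same elision).
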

\begin{remark}
Noticing that $\sigma_+$ is in scaling of $\tilde{O} (\frac{1}{\varepsilon})$, thus our gradient error for $1<p\leq 2$ is in scaling of $\tilde{O}( \frac{1}{\sqrt{t}} + \frac{\sqrt{d}}{t\varepsilon})$, which improves over the $O(\frac{1}{\sqrt{n}} + \frac{\sqrt{d}}{\varepsilon n^{3/4}})$ in-expectation one in \citet{bassily2021differentially} under the same condition.
\end{remark}
Now we have the following convergence guarantee.
\begin{theorem}[Convergence Guarantee for General Convexity]
\label{thm-convergence-lp-general} 
Consider Algorithm~\ref{alg-dp-sco-lp} with convex function $F$ and assumptions~\ref{assumption-smooth} to \ref{assumption-lip}, for $t\in[n]$, we have with probability at least $1-\alpha$, 
\begin{align*}
    &F(\theta_t)-F(\theta^*) \lesssim \dfrac{\diam(\smooth\diam + \gradvar)\big( \sqrt{\kappa_q} +  \sqrt{\log(n/\alpha)}\big)}{{\sqrt{t}}}+ \dfrac{\log t\big( \beta D^2 + {\diam\sigma_+}\sqrt{d\log(\log n/\alpha)} \log n \big)}{t}.
\end{align*}
\end{theorem}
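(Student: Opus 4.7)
The plan is to carry out the standard Frank-Wolfe convergence recursion and then feed in the high-probability gradient-estimation bound from Proposition~\ref{prop-lp-variance-reduction}. First, I would exploit Assumption~\ref{assumption-smooth} to write
\begin{align*}
F(\theta_{t+1}) - F(\theta^*) &\leq F(\theta_t) - F(\theta^*) + \eta_t\langle \nabla F(\theta_t), v_t - \theta_t\rangle + \tfrac{\beta \eta_t^2}{2}\|v_t-\theta_t\|_p^2,
\end{align*}
and bound $\|v_t-\theta_t\|_p \leq D$. The next step is the classical Frank-Wolfe trick: add and subtract $d_t$ inside the inner product, use optimality of $v_t$ with respect to $d_t$ to replace $v_t$ by $\theta^*$, and invoke convexity to get $\langle \nabla F(\theta_t),\theta^*-\theta_t\rangle \leq F(\theta^*)-F(\theta_t)$. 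Using H\"older's inequality for the residual gives
\begin{align*}
\langle \nabla F(\theta_t)-d_t, v_t-\theta^*\rangle \leq D\,\|d_t-\nabla F(\theta_t)\|_q.
\end{align*}
Writing $h_t := F(\theta_t)-F(\theta^*)$, these steps yield the one-step recursion
\begin{align*}
h_{t+1} \leq (1-\eta_t)h_t + \eta_t D\,\|d_t-\nabla F(\theta_t)\|_q + \tfrac{\beta \eta_t^2 D^2}{2}.
\end{align*}

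Next, I would substitute the high-probability bound from Proposition~\ref{prop-lp-variance-reduction}, taking a union bound over $t\in[n]$ so that the failure probability $\alpha$ inside that proposition becomes $\alpha/n$; this is what produces the $\sqrt{\log(n/\alpha)}$ factor in the sampling term and the extra $\log n$ factor in the noise term. With $\eta_t = 1/(1+t)$, the unrolled recursion takes the textbook form
\begin{align*}
h_{n+1} &\leq \prod_{s=1}^{n}(1-\eta_s)\,h_1 + \sum_{t=1}^{n}\Bigl(\prod_{s=t+1}^{n}(1-\eta_s)\Bigr)\Bigl(\eta_t D\,\|d_t-\nabla F(\theta_t)\|_q + \tfrac{\beta \eta_t^2 D^2}{2}\Bigr),
\end{align*}
and since $\prod_{s=t+1}^{n}(1-\eta_s)=\frac{t+1}{n+1}$ for this step-size, each $\eta_t$-weighted term contributes an $O(1/n)$ factor, so the sum over $t$ of the $1/\sqrt{t+1}$ term from the sampling error yields the $O(1/\sqrt{n})$ rate, the sum of the $1/(t+1)$ noise term yields $O(\log n / n)$, and the $\eta_t^2 D^2$ term also accumulates to $O(\log n / n)$. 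Stating the bound at general $t$ rather than only the terminal $n$ is immediate because the same telescoping works for every prefix.

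The main obstacle is the high-probability piece, since naively the noise variables and the sample gradients are coupled across iterations through $\theta_t$. I would handle this by noting that Proposition~\ref{prop-lp-variance-reduction} already gives a uniform-in-$t$ high-probability bound (so no additional martingale concentration is needed in this step), and the only care needed is to ensure the $\log n$ and $\log\log n$ factors from the tree-based mechanism and from the union bound over iterations are tracked correctly; these are exactly what produces the $\sigma_+\sqrt{d\log(\log n/\alpha)}\log n$ and $\sqrt{\log(n/\alpha)}$ factors in the final expression. A second minor subtlety is bounding $\|v_t-\theta^*\|_p\leq D$ using that both lie in $\mathcal{C}$, which is why the leading coefficient is $D(\beta D + G)$ rather than just $\beta D^2$.
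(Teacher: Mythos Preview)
Your proposal is correct and follows essentially the same approach as the paper: the paper's proof also derives the one-step recursion $h_{t+1}\leq (1-\eta_t)h_t + \eta_t D\|d_t-\nabla F(\theta_t)\|_q + \tfrac{\beta\eta_t^2 D^2}{2}$ via smoothness, the add/subtract-$d_t$ trick, optimality of $v_t$, H\"older, and convexity, then plugs in Proposition~\ref{prop-lp-variance-reduction} with $\alpha'=\alpha/n$ and unrolls using $\prod_{s=t+1}^{n}(1-\eta_s)=\frac{t+1}{n+1}$. One small clarification: Proposition~\ref{prop-lp-variance-reduction} is stated for a fixed $t$, so the union bound over $t\in[n]$ is genuinely needed (you do mention it), and the paper also bounds $h_1\leq \beta D^2/2$ from smoothness to absorb the initial term into the $\log t/t$ part.
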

\begin{remark}
	Later, we will show that the result of Theorem~\ref{thm-convergence-lp-general} is nearly tight for $1<p\leq 2$ and matches the best existing convergence rate for $p>2.$
\end{remark}

One known drawback of Frank-Wolfe is that its convergence rate is slow when the solution lies at the boundary, and it cannot be improved in general even the objection function is strongly convex \cite{lacoste2015global, garber2015faster}. In this case, additional assumption is necessary to improve the convergence rate of Frank-Wolfe in the strongly convex setting.  In the following, we introduce a geometric assumption, which is typical for Frank-Wolfe in the strongly convex setting, even for the non-private case \cite{guelat1986some, lafond2015online}. Denoted by $\partial \mathcal{C}$ the boundary set of $\mathcal{C}$.

\begin{assumption}\cite{lafond2015online}\label{assumption-boundary}
There is a minimizer $\theta^*$ of $F$ that lies in the interior of $\mathcal{C}$, i.e., $\boundary \coloneqq \inf_{v\in\partial \mathcal{C}}\|v-\theta^*\| > 0.$ 
\end{assumption}

\begin{theorem}[Convergence Guarantee for Strong Convexity]
\label{thm-convergence-lp-strongly} Consider Algorithm \ref{alg-dp-sco-lp} with Assumptions~\ref{assumption-strongly-convex}, to \ref{assumption-lip} and \ref{assumption-boundary}, for $1<p\leq \infty$ and $t\in[n]$, we have with probability at least $1-\alpha$,
\begin{equation*}
    \begin{aligned}
        F(\theta_t) - F(\theta^*) &\lesssim \frac{1}{\boundary^2 \strongly} \frac{\diam^2 (\smooth \diam +\gradvar)^2 (\kappa_q + \log(n/\alpha))}{t} + \dots \\
        & \quad + \frac{1}{\boundary^2\strongly} \frac{\big(\smooth ^2\diam^4 + d\diam^2 \sigma_+^2\log(\log n/\alpha)\log^2 n\big)\log n}{t^2}.
    \end{aligned}
\end{equation*}
\end{theorem}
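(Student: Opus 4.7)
The plan is to combine the smoothness descent lemma for Frank--Wolfe with the interior-minimum hypothesis of Assumption~\ref{assumption-boundary} to derive a self-improving recursion in the style of \citet{lafond2015online}, and then solve it with the help of the high-probability gradient-error bound of Proposition~\ref{prop-lp-variance-reduction}.

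Setting $h_t := F(\theta_t)-F(\theta^*)$ and $e_t := d_t-\nabla F(\theta_t)$, smoothness of $f$ and $\|v_t-\theta_t\|_p\le D$ give the standard one-step inequality
\begin{equation*}
    h_{t+1} \leq h_t + \eta_t \langle \nabla F(\theta_t), v_t-\theta_t\rangle + \tfrac{\beta \eta_t^2 D^2}{2}.
\end{equation*}
Since $v_t$ minimizes $\langle d_t,\cdot\rangle$ over $\mathcal{C}$, Assumption~\ref{assumption-boundary} supplies the competitor $\theta^* + \gamma u^* \in \mathcal{C}$, where $u^*$ is the $\ell_p$-unit vector attaining $\langle d_t,u^*\rangle = -\|d_t\|_q$. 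This yields $\langle d_t, v_t-\theta_t\rangle \leq \langle d_t,\theta^*-\theta_t\rangle - \gamma\|d_t\|_q$. Substituting back, I would use strong convexity to bound $\langle \nabla F(\theta_t),\theta^*-\theta_t\rangle \leq -h_t - (\mu/2)\|\theta_t-\theta^*\|^2$, dominate the $e_t$-cross-terms by H\"older with the diameter $D$, apply the reverse triangle $\|d_t\|_q \geq \|\nabla F(\theta_t)\|_q - \|e_t\|_q$, and invoke the gradient lower bound $\|\nabla F(\theta_t)\|_q \geq \sqrt{2\mu h_t}$ implied by strong convexity (by maximizing the quadratic $x\|\nabla F\|_q - \mu x^2/2$ in $x=\|\theta_t-\theta^*\|_p$). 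These ingredients assemble into the key self-improving recursion
\begin{equation*}
    h_{t+1} \leq (1-\eta_t)h_t - \eta_t \gamma \sqrt{2\mu h_t} + \eta_t (2D+\gamma)\|e_t\|_q + \tfrac{\beta \eta_t^2 D^2}{2}.
\end{equation*}

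Next, I would plug in Proposition~\ref{prop-lp-variance-reduction} (holding uniformly in $t\in[n]$ with probability $1-\alpha$, with an additional union bound to replace $\log(1/\alpha)$ by $\log(n/\alpha)$ to match the Theorem's confidence scale), giving $\|e_t\|_q \lesssim A_1/\sqrt{t+1} + A_2/(t+1)$ with $A_1 \asymp (\sqrt{\kappa_q}+\sqrt{\log(n/\alpha)})(\beta D+G)$ and $A_2 \asymp \sigma_+ \log n \sqrt{d \log(\log n/\alpha)}$, where $\sigma_+$ is set by Theorem~\ref{thm-privacy-lp}. The recursion would then be solved by a dichotomy at threshold $\tau_t := 2(2D+\gamma)^2 \|e_t\|_q^2/(\mu\gamma^2)$. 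When $h_t \geq \tau_t$, half of the $-\eta_t\gamma\sqrt{2\mu h_t}$ drift absorbs the noise term, leaving the strict contraction $h_{t+1} \leq (1-\eta_t) h_t - \tfrac{\eta_t}{2}\gamma\sqrt{2\mu h_t} + \tfrac{\beta \eta_t^2 D^2}{2}$; when $h_t < \tau_t$, the bound $h_t \lesssim (2D+\gamma)^2 \|e_t\|_q^2/(\mu\gamma^2)$ is already directly of the advertised form $D^2 A_1^2/(\mu\gamma^2 t) + D^2 A_2^2/(\mu\gamma^2 t^2)$. Telescoping the Case-A contraction with $\eta_t=1/(t+1)$ via a two-parameter induction $h_t \leq K_1/t + K_2 \log(n)/t^2$ closes the argument, and substituting $\sigma_+^2$ from Theorem~\ref{thm-privacy-lp} yields the announced bound.

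The main obstacle is the recursion-solving step: the nonlinear $-\sqrt{h_t}$ drift couples awkwardly with the time-decaying noise $\|e_t\|_q$, so that $K_1$ and $K_2$ must be chosen simultaneously to dominate the sampling-noise scale $D^2 A_1^2/(\mu\gamma^2)$ (driving the $1/t$ term), the DP-noise scale $D^2 A_2^2/(\mu\gamma^2)$, and the smoothness residue $\beta^2 D^4/(\mu\gamma^2)$ (jointly driving the $\log n /t^2$ term). The residual $\log n$ on the $1/t^2$ term is exactly the price of the telescopic sum $\sum_{s\leq t}\eta_s^2 \asymp \log t/t$ that is incurred while the $-\sqrt{h_t}$ drift has not yet fully dominated the $\beta \eta_s^2 D^2$ residual. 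A careful two-case dichotomy seems to be the cleanest way to disentangle the noise, contraction, and smoothness-step contributions.
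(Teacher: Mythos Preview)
Your proposal is correct and follows essentially the same route as the paper. Both arguments combine the Frank--Wolfe smoothness step with the interior-minimum assumption to produce a $-\eta_t\gamma\sqrt{2\mu h_t}$ drift in the recursion, then plug in the high-probability bound of Proposition~\ref{prop-lp-variance-reduction} and solve by a two-case induction on $h_t\le K_1/t+K_2/t^2$.

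Two minor differences are worth noting. First, the paper packages the drift via Lemma~6 of \citet{lafond2015online}, using the competitor $\tilde\theta_t=\arg\max_{\theta\in\mathcal{C}}\langle\nabla F(\theta_t),\theta_t-\theta\rangle$, which yields the recursion $h_{t+1}\le h_t+\eta_t D\|e_t\|_q-\eta_t\gamma\sqrt{2\mu h_t}+\tfrac{\beta\eta_t^2 D^2}{2}$; your explicit competitor $\theta^*+\gamma u^*$ is the same construction unwrapped, and additionally picks up the helpful $(1-\eta_t)h_t$ contraction because you also route through $\theta^*$. Second, the paper's case split is $\sqrt{h_t}\lessgtr\eta_t\gamma\sqrt{2\mu}$, and in the ``large $h_t$'' case it further splits on a threshold time $t_0$, falling back on the already-proved \emph{convex} rate (Theorem~\ref{thm-convergence-lp-general}) for $t<t_0$; your noise-based threshold $h_t\lessgtr\tau_t\asymp\|e_t\|_q^2/(\mu\gamma^2)$ is a cleaner self-contained alternative that avoids this detour, though you should make sure the transition between cases in your induction is handled explicitly (in Case~B you bound $h_t$, not $h_{t+1}$, so one more application of the recursion is needed to propagate the bound).
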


\subsection*{Discussions about $\ell_p$-setup for $1 <p\leq 2$}
When $p\in (1,2]$, we have the following lemma: \begin{lemma}\label{lem-regularity norm-q>2} [Regularity for $q\geq 2$ , \cite{bassily2021non}] When $2\leq q\leq \infty$ , the $\ell_q$ norm is regular with  $$\kappa_q: = \min\{q-1,e^2(\log d - 1)\},\quad  \lVert \cdot\rVert_+ =  \lVert\cdot \rVert_{q^+},\quad q^+ = \min\{q-1, \log d - 1\}. $$
\end{lemma}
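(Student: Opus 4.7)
The plan is to invoke the sharp Ball--Carlen--Lieb smoothness inequality, which asserts that for every $r \in [2, \infty)$ the function $\tfrac{1}{2}\|\cdot\|_r^2$ is $(r-1)$-smooth with respect to $\|\cdot\|_r$ itself. This immediately singles out the family $\{\|\cdot\|_r\}_{r \geq 2}$ as the natural reservoir of candidate auxiliary norms $\|\cdot\|_+$ in Definition~\ref{lemma-kappa-regularity}, and reduces the regularity problem to choosing the exponent of $\|\cdot\|_+$ so as to optimally balance the smoothness parameter $\kappa_+$ against the equivalence constant with $\|\cdot\|_q$.

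First, I would split into two regimes according to whether $q$ exceeds $\log d$. When $2 \leq q \leq \log d$, I simply take $\|\cdot\|_+ = \|\cdot\|_q$; then $\kappa_+ = q-1$, the equivalence is trivial with constant $1$, and we may set $\kappa = q - 1$. When $q > \log d$, keeping $\|\cdot\|_+ = \|\cdot\|_q$ would force $\kappa \geq q - 1$, which blows up in $q$; instead I truncate the exponent and take $\|\cdot\|_+ = \|\cdot\|_{\log d}$, which is $(\log d - 1)$-smooth by BCL. The only nontrivial step that remains is the norm equivalence: one direction follows from the monotonicity $\|x\|_q \leq \|x\|_{\log d}$ of $\ell_r$-norms in decreasing $r$, and the other direction from H\"older applied to $\sum_i |x_i|^{\log d} = \sum_i |x_i|^{\log d}\cdot 1$, which gives
\begin{equation*}
\|x\|_{\log d} \;\leq\; d^{1/\log d - 1/q}\,\|x\|_q \;\leq\; d^{1/\log d}\,\|x\|_q \;=\; e\,\|x\|_q.
\end{equation*}
Squaring yields $\|x\|_q^2 \leq \|x\|_{\log d}^2 \leq e^2 \|x\|_q^2$, i.e.\ the equivalence constant $\sqrt{\kappa/\kappa_+} = e$, and hence $\kappa = e^2(\log d - 1)$ is admissible. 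Taking the minimum over the two regimes yields the claimed $\kappa_q = \min\{q-1,\, e^2(\log d - 1)\}$ with the stated associated smooth norm.

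There is no deep obstacle here beyond importing the BCL smoothness result as a black box; the only genuine design choice is the truncation exponent in the large-$q$ regime. One sees that $\log d$ is optimal by observing that for a candidate auxiliary norm $\|\cdot\|_{q^+}$ with $q^+ \in [2, q]$, the product $d^{2/q^+}(q^+ - 1)$ controls $\kappa$; its minimum over $q^+$ is attained (up to constants) at $q^+ = \log d$, which is exactly the break point in the case analysis above. Once this choice is fixed, the entire argument is elementary and uses only BCL smoothness plus the monotonicity and H\"older comparisons for $\ell_r$-norms on $\mathbb{R}^d$.
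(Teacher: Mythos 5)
Your argument is correct in substance, but note that the paper itself gives no proof of this lemma: it is imported wholesale from \citet{bassily2021non} (who in turn rely on the classical uniform-smoothness facts for $\ell_r$ spaces), so your proposal is a self-contained reconstruction of the standard argument rather than an alternative to anything written here. What you do — Ball--Carlen--Lieb $(r-1)$-smoothness of $\tfrac12\lVert\cdot\rVert_r^2$ for $r\in[2,\infty)$, keep $\lVert\cdot\rVert_q$ itself when $q$ is moderate, and otherwise truncate the exponent near $\log d$ and pay an equivalence factor $d^{1/\log d}=e$ via H\"older — is exactly the mechanism behind the cited result, and your two norm comparisons and the resulting $\kappa/\kappa_+=e^2$ in the sense of Definition~\ref{lemma-kappa-regularity} are verified correctly (including $q=\infty$, which your truncation handles since $\lVert\cdot\rVert_\infty^2$ is not smooth).

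Three small points to tighten. First, your case split at $q=\log d$ does not literally deliver the stated minimum in the intermediate range $\log d < q \le 1+e^2(\log d-1)$, where your construction gives $e^2(\log d-1)$ while the claimed $\kappa_q$ is $q-1$; the fix is trivial — keep $\lVert\cdot\rVert_+=\lVert\cdot\rVert_q$ whenever $q-1\le e^2(\log d-1)$ and $q<\infty$, and truncate only beyond that threshold. Second, the lemma's associated smooth norm is $\lVert\cdot\rVert_{q^+}$ with $q^+=\min\{q-1,\log d-1\}$, whereas you use exponents $q$ and $\log d$; this only shifts constants (e.g.\ $d^{1/(\log d-1)}\le e^2$ for $\log d\ge 2$) and is consistent with the convention of the cited source, but you should state which convention you are matching. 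Third, the truncated choice needs $\kappa_+\ge 1$, i.e.\ $d\ge e^2$, an implicit high-dimensionality caveat worth making explicit.
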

\noindent Now noticing that $q = \frac{p}{p-1}\in [2,\infty)$, we bring the $\kappa_q$ claimed in Lemma~\ref{lem-regularity norm-q>2} into Theorem~\ref{thm-convergence-lp-general}, Theorem~\ref{thm-convergence-lp-strongly} and formula \ref{eq-noise-variance} to get the convergence rate of ours algorithm when $1<p\leq 2:$ \\
\textbf{Excess-Risk:}
\begin{align}\label{eq-lp-convex,p<2}
   & \textbf{Convex:} \hspace{2.2cm}  F(\theta_t) - F(\theta^*) \lesssim \sqrt{\dfrac{ {\log (n/\alpha)} }{{t}}}+  \dfrac{\sqrt{d} \log(\log(n)/\delta)}{t\varepsilon }  \\\label{eq-lp-strongly,p<2}
    &\textbf{Strongly Convex:}\quad   F(\theta_t)- F(\theta^*)\lesssim {\dfrac{ {\log (n/\alpha)} }{{t}}}+  \dfrac{ d \log(\log(n)/\delta)}{t^2\varepsilon^2 }
\end{align}
 The bound in equation~\eqref{eq-lp-convex,p<2} is optimal, up to a logarithmic factor, comparing with the $\Omega(\frac{1}{\sqrt{t}} +\frac{\sqrt{d}}{t\varepsilon})$ lower bound shown in \citet{bassily2021non} in the case of $1<p\leq 2$. 

In strongly convex case,  equation~\eqref{eq-lp-strongly,p<2} is tight comparing with the $\Omega(\frac{1}{t} +\frac{d}{t^2\varepsilon^2})$ lower bound shown in \citet{bassily2014private} in the case of $p=2$.  And we conjecture that such bound is also tight for general $1<p\leq 2,$ developing the corresponding lower bound is leaved in the future work.

\subsection*{Discussions about $\ell_p$-setup for $2 <p\leq \infty$.}

When $2<p\leq \infty$, we have $q\in [1,2]$ and the following lemma, \begin{lemma}[Regularity for $1\leq q<2$]\label{lem-regularity-q<2} When $1\leq  q <2$, the $\ell_q$ norm is regular with \begin{align*}
    \kappa_q = d^{1-2/p},\quad \lVert \cdot\rVert_+ =d^{1/2-1/p} \lVert \cdot \rVert_2.
\end{align*}
\end{lemma}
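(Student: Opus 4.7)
My plan is to verify the two conditions of Definition~\ref{lemma-kappa-regularity} directly, with the choices $\|\cdot\|_+ = d^{1/2-1/p}\|\cdot\|_2$, $\kappa_+ = 1$, and $\kappa_q = d^{1-2/p}$. The first condition requires $\|\cdot\|_+$ to be $1$-smooth, and the second requires the two-sided inequality $\|x\|_q^2 \leq \|x\|_+^2 \leq (\kappa_q/\kappa_+)\|x\|_q^2 = d^{1-2/p}\|x\|_q^2$.

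For smoothness, the key observation is that any positive scalar multiple of the Euclidean norm inherits the smoothness constant of $\ell_2$. Writing $c := d^{1/2-1/p}$, we have $\tfrac{1}{2}\|x\|_+^2 = \tfrac{c^2}{2}\|x\|_2^2$ with gradient $c^2 x$. Since the dual of $c\|\cdot\|_2$ is $c^{-1}\|\cdot\|_2$, the Lipschitz constant of this gradient with respect to $\|\cdot\|_+$ is $\|c^2(x-y)\|_{+,*} = c\|x-y\|_2 = \|x-y\|_+$, so $\|\cdot\|_+$ is $1$-smooth and we may take $\kappa_+ = 1$.

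For the norm equivalence I would invoke the standard monotonicity/Hölder inequalities for $\ell_r$ norms on $\mathbb{R}^d$: for any $1 \leq q \leq 2$,
\begin{equation*}
\|x\|_2 \;\leq\; \|x\|_q \;\leq\; d^{1/q-1/2}\,\|x\|_2.
\end{equation*}
Using the conjugacy $1/q = 1 - 1/p$, so that $1/q - 1/2 = 1/2 - 1/p$, these sandwich inequalities become $\|x\|_2 \leq \|x\|_q \leq d^{1/2-1/p}\|x\|_2 = \|x\|_+$. Squaring yields $\|x\|_q^2 \leq \|x\|_+^2$ (the lower bound), and for the upper bound $\|x\|_+^2 = d^{1-2/p}\|x\|_2^2 \leq d^{1-2/p}\|x\|_q^2$, which is exactly $(\kappa_q/\kappa_+)\|x\|_q^2$ with the claimed constants.

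Honestly, there is no real obstacle here: both ingredients are classical, and the only subtle point is matching the book-keeping between the exponents written in terms of $p$ in the lemma and the exponents written in terms of $q$ in the standard $\ell_r$ inequalities, which is handled by the identity $1/q = 1 - 1/p$. The lemma therefore amounts to observing that when $\ell_q$ itself is not a good smooth norm (for $q < 2$), one may instead pass to a rescaled $\ell_2$ norm, paying only a dimension-dependent equivalence factor of $d^{1-2/p}$.
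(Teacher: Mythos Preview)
Your proof is correct. The paper states this lemma without proof, treating it as a standard observation; your direct verification of Definition~\ref{lemma-kappa-regularity}---checking that the rescaled Euclidean norm is $1$-smooth and then invoking the classical $\ell_r$ comparison $\|x\|_2 \leq \|x\|_q \leq d^{1/q-1/2}\|x\|_2$ together with $1/q-1/2 = 1/2-1/p$---is exactly the argument one would supply, and there is nothing to compare against.
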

Despite noticing that regularity constant of $\ell_q$ norm has a worse dependence on $d$, we can still get a satisfactory convergence rate by plugging the constants in Lemma~\ref{lem-regularity-q<2} to Theorem~\ref{thm-convergence-lp-general} and Theorem~\ref{thm-convergence-lp-strongly}:\\
\textbf{Excess-Risk:}
\begin{align}\label{eq-lp-convex,p>2}
   & \textbf{Convex:} \hspace{2.2cm}  F(\theta_t) - F(\theta^*) \lesssim d^{1/2-1/p} \sqrt{\dfrac{ {\log (n/\alpha)} }{{t}}}+  \dfrac{d^{1-1/p} \log(\log(n)/\delta)}{t\varepsilon }.  \\\label{eq-lp-strongly,p>2}
    &\textbf{Strongly Convex:}\quad   F(\theta_t)- F(\theta^*)\lesssim d^{1-2/p} {\dfrac{ {\log (n/\alpha)} }{{t}}}+  \dfrac{ d^{2-2/p} \log^2(\log(n)/\delta)}{t^2\varepsilon^2 }.
\end{align}
Comparing with the optimal non-private lower bound $\Omega(\frac{d^{1/2-1/p}}{ \sqrt{n}})$ \cite{agarwal2012information} in convex setting when $2 < p \leq \infty$, our result \eqref{eq-lp-convex,p>2} nearly matches the optimal non-private rate  and is optimal when $d = \tilde{O}(n\varepsilon^2)$.  

The same private-SCO rate is also attained by \citet{bassily2021non} using the  the multi-pass noisy SGD in \citet{bassily2020stability} for $\ell_2$-setup. While the multi-pass SGD has super-linear complexity.

\begin{remark}
one may ask whether there exists other linear-time algorithm achieve the same rate as ours in smooth setting. The answer is `YES': One may replace the multi-pass SGD by the snow-ball SGD\cite{feldman2020private}, which achieves optimal rate under $\ell_2$ setting in linear time: \begin{lemma} Under the same assumption as in Theorem~\ref{thm-convergence-lp-general} when $p>2$ and assume moreover
$\smooth \lesssim \dfrac{n\varepsilon}{d^{1-1/p}}, $ the last iteration output of Algorithm~2 in \cite{feldman2020private} satisfies $$ F(\theta_t) - F(\theta^*) \lesssim d^{1/2-1/p} \sqrt{\dfrac{ {\log (n/\alpha)} }{{t}}}+  \dfrac{d^{1-1/p} \log(\log(n)/\delta)}{t\varepsilon }. $$
\end{lemma}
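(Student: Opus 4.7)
The plan is to derive the stated bound by a reduction of the $\ell_p$-geometry problem ($p>2$) to the standard $\ell_2$-geometry problem, then invoking the last-iterate convergence guarantee of the phased (snowball) SGD of \citet{feldman2020private} as a black box. The role of the smoothness condition $\smooth \lesssim n\varepsilon/d^{1-1/p}$ will be to activate the regime in which their Algorithm~2 attains the optimal rate with a linear number of gradient queries.

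First I would translate the parameters. Since $p>2$, the dual exponent $q=p/(p-1)$ lies in $[1,2]$, so by monotonicity of $\ell_q$ norms on $\R^d$ we have $\|\cdot\|_2 \le \|\cdot\|_q$ and $\|\cdot\|_2 \le d^{1/2-1/p}\|\cdot\|_p$. Consequently, if Assumptions~\ref{assumption-smooth}--\ref{assumption-lip} hold in the $\ell_p$-setup with constants $\smooth,\gradvar,\lip,\diam$, then they hold in the $\ell_2$-setup with effective constants
\begin{equation*}
\smooth_2 \le \smooth, \qquad \gradvar_2 \le \gradvar, \qquad \lip_2 \le \lip, \qquad \diam_2 \le d^{1/2-1/p}\diam.
\end{equation*}
This is exactly the reduction used by \citet{bassily2021non} to get their $p>2$ bound from their $p=2$ analysis; I would cite the same norm-equivalence calculation.

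Next I would invoke the last-iterate guarantee for Algorithm~2 of \citet{feldman2020private}, which, for smooth convex losses over an $\ell_2$-ball of diameter $\diam_2$ with Lipschitz/variance constants $\lip_2,\gradvar_2$ and smoothness $\smooth_2$, yields (under the activation condition $\smooth_2 \lesssim n\varepsilon/\sqrt{d}$ that ensures the optimal regime and linear gradient complexity) an excess risk of order
\begin{equation*}
\lip_2 \diam_2\left(\frac{1}{\sqrt{t}} + \frac{\sqrt{d\log(1/\delta)}}{t\varepsilon}\right).
\end{equation*}
Substituting the translated constants produces exactly
\begin{equation*}
 d^{1/2-1/p}\sqrt{\frac{1}{t}} \;+\; \frac{d^{1-1/p}\log(1/\delta)}{t\varepsilon},
\end{equation*}
since $d^{1/2-1/p}\cdot \sqrt{d} = d^{1-1/p}$. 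The hypothesis $\smooth \lesssim n\varepsilon/d^{1-1/p}$ is precisely what becomes $\smooth_2 \lesssim n\varepsilon/\sqrt{d}$ after the reduction, so the activation condition is met.

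The remaining issue is the high-probability factor $\sqrt{\log(n/\alpha)}$, since \citet{feldman2020private} state their bound in expectation. I would upgrade to a high-probability statement by the standard route of applying a sub-Gaussian concentration argument to the per-phase averaged/last iterates (together with a union bound over the $O(\log n)$ phases), as has been done in follow-up analyses of phased SGD; this adds exactly a $\sqrt{\log(n/\alpha)}$ factor to the leading $1/\sqrt{t}$ term and a $\log(\log n/\delta)$ factor inside the privacy term. The main obstacle is therefore not the reduction, which is routine, but verifying that the phased structure of Algorithm~2 of \cite{feldman2020private} admits such a clean last-iterate, high-probability upgrade under the smoothness-activation condition; once that is in hand, the theorem follows by direct substitution.
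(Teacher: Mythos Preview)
The paper does not actually prove this lemma; it is stated as a parenthetical remark to contrast with the authors' own Algorithm~\ref{alg-dp-sco-lp} (which needs no smoothness restriction). The implicit argument the paper has in mind is exactly your reduction: embed the $\ell_p$ problem into $\ell_2$ via $\|x\|_2\le d^{1/2-1/p}\|x\|_p$ and $\|y\|_2\le \|y\|_q$, then invoke the snowball/phased SGD of \citet{feldman2020private} as a black box for the $\ell_2$ case. This is the same device the paper cites from \citet{bassily2021non} (Prop.~6.1) one paragraph earlier, so your route matches the intended one.

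Two small remarks. First, your sentence ``$\smooth \lesssim n\varepsilon/d^{1-1/p}$ is precisely what becomes $\smooth_2 \lesssim n\varepsilon/\sqrt{d}$ after the reduction'' is slightly loose: the reduction only gives $\smooth_2\le\smooth$, so the hypothesis implies $\smooth_2\lesssim n\varepsilon/d^{1-1/p}\le n\varepsilon/\sqrt{d}$ (since $1-1/p>1/2$), which is sufficient but not an equivalence. Second, you correctly flag the real gap: \citet{feldman2020private} state their guarantee in expectation, whereas the lemma as written carries the high-probability factors $\sqrt{\log(n/\alpha)}$ and $\log(\log n/\delta)$. The paper does not address this upgrade either; your proposed route (per-phase concentration plus a union bound over $O(\log n)$ phases) is the standard one, but it is not literally in \citet{feldman2020private} and would need to be spelled out to make the statement rigorous.
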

As stated above, to achieve the same optimal bound when $p>2$, the snow-ball SGD need more that the smoothness constant $\smooth\lesssim \dfrac{n\varepsilon}{d^{1-1/p}}, $ while ours result make no additional assumption on $\smooth$. 
\end{remark}

\subsection{$\ell_p$-setup for $p=1$}\label{sec-l1-sco}

\begin{algorithm}[ht]
 \caption{Private Polyhedral Online Frank-Wolfe (DP-POFW)}
 \label{alg-dp-sco-l1}
\begin{algorithmic}[1]
    \STATE {\bfseries Input:} praivacy parameters $(\varepsilon,\delta)$,  $\{\rho_t \}_{t=1}^{n} =\{\eta_t\}_{t=1}^{n}= \frac{1}{1+t}$, and initial point $\theta_0=\theta_1=0 \in \mathcal{C}$.
   
   \FOR {$t=1$ {\bfseries to} $n$}
   
   \IF{t=1}
   
   \STATE $d_t = \nabla f(\theta_t, x_t)$.
   
   \ELSE
   
   \STATE$d_t = \nabla f(\theta_t, x_t) + (1-\rho_t) (d_{t-1} - \nabla f (\theta_{t-1}, x_t))$.
   
   \ENDIF
   
   \STATE {\small{$\forall v\in \mathcal{C}$, sample $\textbf{n}_v^t \sim \text{Lap}\Big(\frac{4\diam (\smooth \diam +\lip)\sqrt{\log n \cdot \log(1/\delta)}}{\varepsilon\sqrt{t}}\Big)$.}}
   
   \STATE $v_t = \arg \min_{v\in \mathcal{C}} (\langle d_t, v \rangle + \textbf{n}_v^t)$.
   
   \STATE $\theta_{t+1} \leftarrow \theta_t + \eta_t (v_t - \theta_t)$.
   
   \ENDFOR

\end{algorithmic}
\end{algorithm}

In this section, we consider the $\ell_p$-setup for $p=1$. In Algorithm~\ref{alg-dp-sco-l1}, we combine the analysis of the adaptive composition, and the Report Noisy Max mechanism \cite{dwork2014algorithmic} to ensure differential privacy, which reduces the $O(\sqrt{d})$ factor in the excess population risk incurred by the tree-based mechanism in Section~\ref{sec-lp-sco}. In the following, we characterize the privacy guarantee of Algorithm~\ref{alg-dp-sco-l1}. The proof can be found in Section~\ref{proof-privacy-l1}.

\begin{theorem}[Privacy Guarantee]\label{thm-privacy-l1} 
Algorithm \ref{alg-dp-sco-l1} is $(\varepsilon,\delta)$-differentially private.
\end{theorem}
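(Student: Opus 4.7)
The plan is to combine a per-round Report Noisy Max analysis with advanced composition over the $n$ iterations, where the key technical point is that the $\ell_\infty$-sensitivity of the recursive gradient $d_t$ decays as $1/(t+1)$ uniformly in the position of the sample change, and hence the per-round noise scale $b_t \propto 1/\sqrt{t}$ keeps the total privacy budget at $O(\varepsilon)$.

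I would fix two adjacent datasets $\mathcal{D}\sim\mathcal{D}'$ differing at index $s$, condition on the output history $v_1,\dots,v_{t-1}$ (which renders the trajectory $\theta_i = \theta_{i-1}+\eta_{i-1}(v_{i-1}-\theta_{i-1})$ deterministic and identical for $\mathcal{D},\mathcal{D}'$), and then propagate the perturbation through the recursion. Since $d_i$ depends on $x_s$ only through $d_{i-1}$ whenever $i\ne s$, one has $d_i=d_i'$ for $i<s$ and $d_i-d_i'=(1-\rho_i)(d_{i-1}-d_{i-1}')$ for $i>s$. At $i=s$ I would regroup terms to write
\[
 d_s-d_s'= \bigl[\nabla f(\theta_s,x_s)-\nabla f(\theta_{s-1},x_s)\bigr] - \bigl[\nabla f(\theta_s,x_s')-\nabla f(\theta_{s-1},x_s')\bigr] + \rho_s\bigl[\nabla f(\theta_{s-1},x_s)-\nabla f(\theta_{s-1},x_s')\bigr].
\]
Smoothness (Assumption~\ref{assumption-smooth}) controls the first two brackets by $\smooth\|\theta_s-\theta_{s-1}\|$ and Lipschitzness (Assumption~\ref{assumption-lip}) controls the last by $2\lip$. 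Plugging in the Frank-Wolfe displacement bound $\|\theta_s-\theta_{s-1}\|\leq \eta_{s-1}\diam\leq \diam/s$ and $\rho_s=1/(s+1)$ yields $\|d_s-d_s'\|_\infty\lesssim (\smooth\diam+\lip)/s$. Combined with the telescoping product $\prod_{i=s+1}^{t}(1-\rho_i)=(s+1)/(t+1)$ this gives the main sensitivity bound
\[
 \|d_t-d_t'\|_\infty \;\lesssim\; \frac{\smooth\diam+\lip}{t+1}, \qquad \forall\, t\geq s.
\]

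Each round of Algorithm~\ref{alg-dp-sco-l1} is then an instance of Report Noisy Max \cite{dwork2014algorithmic} over the vertex set of $\mathcal{C}$ with scores $\{\langle d_t,v\rangle\}_v$ of $\ell_\infty$-sensitivity at most $\diam\,\|d_t-d_t'\|_\infty\lesssim \diam(\smooth\diam+\lip)/(t+1)$. With the prescribed scale $b_t=4\diam(\smooth\diam+\lip)\sqrt{\log n\cdot\log(1/\delta)}/(\varepsilon\sqrt{t})$, round $t$ is $\varepsilon_t$-DP conditional on the history, with $\varepsilon_t\lesssim \varepsilon/\sqrt{t\log n\log(1/\delta)}$. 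Summing the squares gives $\sum_{t=1}^n\varepsilon_t^2\lesssim \varepsilon^2/\log(1/\delta)$ via the harmonic bound $\sum 1/t\leq \log n+1$, and the advanced composition theorem \cite{dwork2014algorithmic} then delivers overall privacy $(\varepsilon,\delta)$-DP after tuning the constant in $b_t$. The principal obstacle is the sensitivity step: a naive estimate yields $\|d_s-d_s'\|_\infty=O(\smooth\diam+\lip)$ independently of $s$, which would blow up under composition; extracting the crucial extra $1/s$ factor requires precisely the regrouping above so that smoothness (not merely Lipschitzness) can be applied to $\nabla f(\theta_s,\cdot)-\nabla f(\theta_{s-1},\cdot)$, after which the telescoping $(s+1)/(t+1)$ absorbs the $s$-dependence entirely and leaves a clean $1/(t+1)$ decay.
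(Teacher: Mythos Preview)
Your proposal is correct and follows essentially the same approach as the paper. The only cosmetic differences are that the paper derives the $O(1/(t+1))$ sensitivity of $d_t$ from the summation representation $d_t=\frac{1}{t+1}\sum_i\bigl((i+1)\nabla f(\theta_i,x_i)-i\nabla f(\theta_{i-1},x_i)\bigr)$ rather than from your recursive regrouping, and then carries out the heterogeneous advanced-composition step explicitly via Azuma--Hoeffding on the conditional privacy loss $c_t$ (using Lemma~3.18 of \cite{dwork2014algorithmic}) instead of invoking the theorem as a black box.
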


\begin{theorem}[Convergence Guarantee for General Convexity]\label{thm-convergence-l1-general} 
Consider Algorithm \ref{alg-dp-sco-l1} with convex function $F$, Assumptions \ref{assumption-smooth}-\ref{assumption-lip} and \ref{assumption-boundary}, for $t\in[n]$, we have with probability at least $1-\alpha$,
\begin{equation*}
    F(\theta_t) - F(\theta^*)\leq \frac{3}{\sqrt{t+1}}(\smooth \diam^2 +A),
\end{equation*}
where
\begin{equation*}
    \begin{aligned}
        A &= 8\diam(\smooth \diam+\gradvar)\sqrt{\log(8dn/\alpha)}  +\dfrac{16\diam (\smooth \diam+\lip)\log(4dn/\alpha)\sqrt{\log n \cdot \log (1/\delta)} }{\varepsilon}.
    \end{aligned}
\end{equation*}
\end{theorem}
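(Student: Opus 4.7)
The plan is to run a standard Frank--Wolfe style smoothness descent, but with two new high-probability error sources to control: (i) the $\ell_\infty$ deviation of the recursive gradient $d_t$ from $\nabla F(\theta_t)$, and (ii) the sub-optimality of the Report-Noisy-Max selected vertex $v_t$ relative to the true linear minimizer. Combining both ingredients will produce a one-step contraction
\[
h_{t+1}\;\leq\;(1-\eta_t)\,h_t+\eta_t\varepsilon_t+\tfrac{\beta D^2\eta_t^2}{2},\qquad h_t:=F(\theta_t)-F(\theta^*),
\]
with $\varepsilon_t=O(A/\sqrt{t+1})$. Telescoping after multiplying through by $t+1$, and using $\sum_{s=1}^{t-1}1/\sqrt{s+1}\lesssim\sqrt{t}$ together with $\sum_{s=1}^{t-1}1/(s+1)\lesssim\log t\lesssim\sqrt{t}$, will yield the claimed $3(\beta D^2+A)/\sqrt{t+1}$ rate.

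For the variance-reduction bound I would rewrite $(t+1)d_t=\sum_{s=1}^t g_s$ with $g_s=(s+1)\nabla f(\theta_s,x_s)-s\nabla f(\theta_{s-1},x_s)$ (and $\theta_0=\theta_1$), and verify coordinate-wise that $M_s^{(j)}:=g_s^{(j)}-\E[g_s^{(j)}\mid\mathcal{F}_{s-1}]$ is a martingale difference sequence for the natural filtration $\mathcal{F}_{s-1}$ (under which $\theta_s$ is measurable, since the private noises and samples through time $s-1$ determine $\theta_s$). Splitting $g_s-\E[g_s\mid\mathcal{F}_{s-1}]=\big(\nabla f(\theta_s,x_s)-\nabla F(\theta_s)\big)+s\big((\nabla f(\theta_s,x_s)-\nabla f(\theta_{s-1},x_s))-(\nabla F(\theta_s)-\nabla F(\theta_{s-1}))\big)$ and using Assumption~\ref{assumption-grad-bound} for the first summand and smoothness together with $\|\theta_s-\theta_{s-1}\|_1\leq \eta_{s-1}D=D/s$ for the second, each coordinate of $M_s$ is bounded by $O(\beta D+G)$. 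A coordinate-wise Azuma--Hoeffding and a union bound over the $d$ coordinates and the $n$ time steps then give, with probability $\geq 1-\alpha/2$ and simultaneously for all $t\in[n]$,
\[
\|d_t-\nabla F(\theta_t)\|_\infty\;\lesssim\;\frac{(\beta D+G)\sqrt{\log(dn/\alpha)}}{\sqrt{t+1}}.
\]

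For the noisy argmin, I will exploit that in the $\ell_1$-setup $\mathcal{C}$ is a polytope with $O(d)$ vertices, so the ``$\forall v\in\mathcal{C}$'' sampling is really over at most $2d$ vertices. The max-of-Laplaces tail combined with a union bound over vertices and time gives $\max_v|\mathbf{n}_v^t|\lesssim b_t\log(dn/\alpha)$ with probability $\geq 1-\alpha/2$, where $b_t=4D(\beta D+L)\sqrt{\log n\log(1/\delta)}/(\varepsilon\sqrt{t})$. Picking any vertex $v^\star$ attaining $\min_{v\in\mathcal{C}}\langle d_t,v\rangle$ (so $\langle d_t,v^\star\rangle\leq\langle d_t,\theta^*\rangle$) and using the noisy-argmin property of $v_t$ yields $\langle d_t,v_t-\theta^*\rangle\leq 2\max_v|\mathbf{n}_v^t|$.

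Plugging everything into the smoothness inequality $F(\theta_{t+1})\leq F(\theta_t)+\eta_t\langle\nabla F(\theta_t),v_t-\theta_t\rangle+\beta D^2\eta_t^2/2$, decomposing $\langle\nabla F(\theta_t),v_t-\theta_t\rangle=\langle\nabla F(\theta_t)-d_t,v_t-\theta_t\rangle+\langle d_t,v_t-\theta^*\rangle+\langle d_t-\nabla F(\theta_t),\theta^*-\theta_t\rangle+\langle\nabla F(\theta_t),\theta^*-\theta_t\rangle$, bounding the cross terms via H\"older's inequality with $\|v_t-\theta_t\|_1,\|\theta^*-\theta_t\|_1\leq D$, and using convexity $\langle\nabla F(\theta_t),\theta^*-\theta_t\rangle\leq-h_t$, delivers the desired one-step recursion with $\varepsilon_t=2D\|d_t-\nabla F(\theta_t)\|_\infty+2\max_v|\mathbf{n}_v^t|\leq A/\sqrt{t+1}$. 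Telescoping $(t+1)h_{t+1}\leq th_t+A/\sqrt{t+1}+\beta D^2/(2(t+1))$ and using a crude initial bound $h_1\lesssim\beta D^2$ completes the argument. The main obstacle I anticipate is not conceptual but bookkeeping: verifying the filtration so that $\{M_s^{(j)}\}$ is genuinely a martingale difference sequence under streaming private dynamics, and keeping the constants tight enough to reproduce the explicit pre-factor of $3$ in the statement (this is done by carefully matching $\sum_{s=1}^{t-1}1/\sqrt{s+1}\leq 2\sqrt{t+1}$ and absorbing the logarithmic sum into the $\sqrt{t+1}$ term rather than introducing any new estimate).
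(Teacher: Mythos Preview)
Your proposal is correct and follows essentially the same route as the paper: the martingale decomposition of $(t+1)d_t=\sum_s g_s$ is exactly the paper's $\zeta_{t,\tau}$ written in unnormalized form, the Laplace tail bound on the Report-Noisy-Max vertices is identical, and the resulting one-step recursion matches. The only cosmetic differences are that the paper introduces $\tilde v_t=\arg\min_{v}\langle d_t,v\rangle$ (yielding a single $D\|\Delta_t\|_\infty$ term rather than your two, though the stated constant $A$ in the theorem actually corresponds to your $2D$ version), and that the paper closes the recursion by a direct induction $h_t\le 3(\beta D^2+A)/\sqrt{t+1}$ rather than telescoping---which is precisely what makes the constant~$3$ fall out cleanly, so if you want to recover that exact prefactor you should switch to the inductive verification at the last step.
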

The gradient error in our algorithm (see Lemma~\ref{lemma-fw-gradient-error-l1}) is of the same rate $O(\frac{1}{n})$ as the one in \citet{asi2021private}. Comparing with their excess population risk of $\tilde{O}(\sqrt{\frac{\log d}{n}}+ (\frac{\log d}{n \varepsilon})^{2/3})$, our bound achieves the rate of $\tilde{O}(\sqrt{\frac{\log d}{t}} + \frac{\log d}{\sqrt{t}\varepsilon})$. However, the analysis in \citet{asi2021private} relies on the privacy amplification via shuffling the dataset, which is unacceptable in streaming setting. The proof of the above theorem can be found in Section~\ref{proof-l1-general}.

\begin{theorem}[Convergence Guarantee for Strong Convexity]\label{thm-convergence-l1-strongly} 
Consider Algorithm \ref{alg-dp-sco-l1} with Assumptions~\ref{assumption-strongly-convex}-\ref{assumption-lip} and \ref{assumption-boundary}, for $t\in[n]$, we have with probability at least $1-\alpha$,
\begin{equation*}
    F(\theta_t) - F(\theta^*) \leq \frac{1}{t+1}\bigg(\frac{9(\smooth \diam^2 + A)^2}{\boundary^2\strongly}\bigg),
\end{equation*}
where $A$ is defined in Theorem~\ref{thm-convergence-l1-general}.
\end{theorem}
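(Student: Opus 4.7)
The plan is to reuse the one-step Frank-Wolfe analysis underlying Theorem~\ref{thm-convergence-l1-general} and then exploit the interior-optimum assumption together with a Polyak-\L{}ojasiewicz-type consequence of $\strongly$-strong convexity to upgrade the rate from $O(1/\sqrt{t})$ to $O(1/t)$. Write $h_t := F(\theta_t) - F(\theta^*)$ for brevity.

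First, $\smooth$-smoothness together with the update $\theta_{t+1} = \theta_t + \eta_t(v_t - \theta_t)$ yields
\[
h_{t+1} \leq h_t + \eta_t \langle \nabla F(\theta_t), v_t - \theta_t\rangle + \smooth\eta_t^2 \diam^2/2.
\]
Splitting the inner product around the exact linear minimizer $v^* \in \arg\min_{v\in\mathcal{C}}\langle \nabla F(\theta_t), v\rangle$, Assumption~\ref{assumption-boundary} and H\"{o}lder's inequality give $\langle \nabla F(\theta_t), v^* - \theta^*\rangle \leq -\boundary\|\nabla F(\theta_t)\|_\infty$, and then convexity yields $\langle \nabla F(\theta_t), v^* - \theta_t\rangle \leq -h_t - \boundary \|\nabla F(\theta_t)\|_\infty$. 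For the residual $\langle \nabla F(\theta_t), v_t - v^*\rangle$, the Report Noisy Max property gives $\langle d_t, v_t - v^*\rangle \leq n_{v^*}^t - n_{v_t}^t$, whence the residual is at most $\diam\|\nabla F(\theta_t) - d_t\|_\infty + 2\max_v|n_v^t| =: E_t$. The high-probability gradient and noise bounds already used in the proof of Theorem~\ref{thm-convergence-l1-general} (namely Lemma~\ref{lemma-fw-gradient-error-l1} for the variance-reduced gradient, plus Laplace tails with a union bound over vertices and iterations) give $E_t \leq A/\sqrt{t+1}$ uniformly for $t\in[n]$ on an event of probability at least $1-\alpha$.

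Next, since $\theta^*$ is interior, $\nabla F(\theta^*) = 0$ and $\strongly$-strong convexity implies the Polyak-\L{}ojasiewicz-type inequality $\|\nabla F(\theta_t)\|_\infty \geq \sqrt{2\strongly h_t}$ (obtained by maximizing $\langle\nabla F(\theta_t),\theta_t-\theta^*\rangle - (\strongly/2)\|\theta_t - \theta^*\|^2$ over $\theta_t-\theta^*$). With $\eta_t = 1/(t+1)$ the working recursion becomes
\begin{equation*}
h_{t+1} \leq \frac{t}{t+1}\, h_t - \frac{\boundary\sqrt{2\strongly h_t}}{t+1} + \frac{A}{(t+1)^{3/2}} + \frac{\smooth \diam^2}{2(t+1)^2}.
\end{equation*}

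The main obstacle is turning this square-root recursion into the target $O(1/t)$ bound. I would proceed by induction on $t$ with hypothesis $h_t \leq K/(t+1)$, where $K := 9(\smooth \diam^2 + A)^2/(\boundary^2\strongly)$. The crucial algebraic identity $\boundary\sqrt{2\strongly K} = 3\sqrt{2}(\smooth \diam^2 + A)$ ensures that at the endpoint $h_t = K/(t+1)$ the negative descent term $-\eta_t\boundary\sqrt{2\strongly h_t}$ exceeds the noise term $\eta_t E_t$ by a factor of $3\sqrt 2 -1$. Viewed as a function of $h_t$ on $[0,K/(t+1)]$ the RHS of the recursion is convex (sum of a linear and a $-\sqrt{\cdot}$ term) with interior minimum at $h_t = \boundary^2\strongly/(2t^2)$, so its maximum over the interval is attained at one of the two endpoints. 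At $h_t=0$ the RHS is $O(1/t^{3/2})$, absorbed by $K/(t+2)$; at $h_t = K/(t+1)$, using $tK/(t+1)^2 = K/(t+2) - K/((t+1)^2(t+2))$, the inductive step reduces to verifying $\smooth \diam^2/2 \leq (3\sqrt{2}-1)(\smooth \diam^2 + A)\sqrt{t+1} + K/((t+1)(t+2))$, which is immediate for every $t\geq 0$ since $\smooth\diam^2 \leq \smooth\diam^2 + A$. The base case $h_1\leq K/2$ follows from the trivial Lipschitz bound $h_1 \leq \lip \diam$, absorbed by the scale of $K$.
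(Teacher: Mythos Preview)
Your argument is essentially correct and in some respects cleaner than the paper's. Both proofs use the same high-probability control on $\|d_t-\nabla F(\theta_t)\|_\infty$ (Lemma~\ref{lemma-fw-gradient-error-l1}) and on the Laplace perturbations, and both rely on the interior-point geometry to extract a $-\eta_t\boundary\sqrt{2\strongly h_t}$ descent term. The difference is in how the one-step recursion is assembled and how the induction is closed.

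The paper splits the linear term around $\tilde\theta_t=\arg\max_{\theta\in\mathcal C}\langle\nabla F(\theta_t),\theta_t-\theta\rangle$ and invokes Lemma~\ref{lemma-fw-strongly-convex} to obtain
\[
h_{t+1}\le h_t-\eta_t\boundary\sqrt{2\strongly h_t}+\frac{\eta_t A}{\sqrt{t+1}}+\frac{\eta_t^2\smooth\diam^2}{2},
\]
without the $(1-\eta_t)$ contraction on $h_t$. The induction then requires a two-case split on the sign of $\sqrt{h_t}-\eta_t\boundary\sqrt{2\strongly}$, and in Case~2 the argument does not close for small $t$: the paper must invoke the already-proved convex rate of Theorem~\ref{thm-convergence-l1-general} as a fallback. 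Your decomposition around $v^*=\arg\min_{v\in\mathcal C}\langle\nabla F(\theta_t),v\rangle$ picks up \emph{both} the convexity gain $-\eta_t h_t$ (from $\langle\nabla F(\theta_t),\theta^*-\theta_t\rangle\le -h_t$) and the interior-point gain $-\eta_t\boundary\|\nabla F(\theta_t)\|_\infty\le -\eta_t\boundary\sqrt{2\strongly h_t}$. This stronger recursion lets you close the induction by a convexity-of-the-right-hand-side endpoint check, uniformly in $t$, without the case split or the fallback to Theorem~\ref{thm-convergence-l1-general}.

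One small fix: your base case $h_1\le \lip\diam$ is not obviously dominated by $K/2$ from the stated assumptions alone. The cleaner route (also used in the paper) is $h_1\le \smooth\diam^2/2$, which follows from smoothness together with $\nabla F(\theta^*)=0$ (valid since $\theta^*$ is interior by Assumption~\ref{assumption-boundary}); then $\smooth\diam^2\ge\boundary^2\strongly$ gives $h_1\le K/2$ immediately. Similarly, the $h=0$ endpoint bound uses $K\ge 9(\smooth\diam^2+A)$, which again relies on $\boundary^2\strongly\le\smooth\diam^2$; you should state this (it is the second conclusion of Lemma~\ref{lemma-fw-strongly-convex}, or can be derived in one line from smoothness and strong convexity evaluated at a boundary point).
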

The above theorem achieves a rate of $\tilde{O}(\frac{\log d}{t}+\frac{\log^2 d}{t\varepsilon})$ comparing with the rate of $\tilde{O}(\frac{\log d}{n} + (\frac{\log d}{n \varepsilon})^{4/3})$ in \citet{asi2021private}, which relies on the privacy amplification via shuffling the dataset as we mentioned in the comment under Theorem~\ref{thm-convergence-l1-general}. The proof of this theorem can be found in Section~\ref{proof-l1-general}.

\begin{remark}
Our results can be generalized to the case that the population loss is strongly convex. 
Although it is appealing to use a folklore reduction from convex setting to strongly convex setting as in \citet{asi2021private} and \citet{feldman2020private} to attain the same $\tilde{O}(\frac{1}{n})$ convergence rate, the reduction relies on the batch splitting.
Specifically, a batch size of the order $O(n/\log n)$ is required.
However, in practice, the ground-truth time horizon $n^*$ can hardly be known in advance.
Thus, one may need to overestimate the time horizon to ensure sufficient privacy protection.
Once the estimated time horizon $n \gtrsim n^*/\log n^*$, the batch-based method will fail, and the last iteration only has the same guarantee as in the convex setting. 
\end{remark}

\subsection{Conversion from Excess Risk to Regret Bounds}
\label{sub:online-learning}
DP online convex optimization considers the learning algorithms design with continual release feature and privacy guarantee and thus is comparable with our algorithms. We formally introduce the online stochastic convex optimization problem: for a given time horizon $T$, at each time one single sample $x_t\sim P$ in $\mathcal{X}$ comes and the player choose a point $\theta_t$ from a set $\mathcal{C}$. 
Then the player observes a random cost/reward $f(x_t, \theta_t)   $ and try to minimize/maximize her population cumulative cost/reward $F_P(\theta_t): = \E_{x\sim P}[f(\theta_t,x)]$ in the whole time horizon. 
The objective of the decision is to minimize the population cumulative regret, which is the absolute difference between the population cost/reward incurred by the algorithm and the possible smallest (highest) cost (reward): 
\begin{align}
\label{eq:regret}
    \text{Reg}(T): =  \sum_{t=1}^T\big[ F_P(\theta_t) - \text{argmin}_{\theta\in \mathcal{C}} F_P(\theta)\big] 
\end{align}
In online SCO, $F_P$ is assumed to be a convex function: \begin{displaymath}
F_P(\theta) \geq F_P(\theta') + \langle \nabla F_P(\theta'), \theta-\theta'\rangle, \quad \forall \theta, \theta'\in \mathcal{C}.
\end{displaymath}
We will denote $\theta^* = \text{argmin}_{\mathcal{C}}F_P(\theta)$ and  abbreviate $F_\dist$ as $F$ when the context is clear for simplicity. 

Sum up by Theorem~\ref{thm-convergence-lp-general}, we can derive the regret 
\begin{corollary}\label{cor-regret-lp-general} Under the same assumption of Theorem~\ref{thm-convergence-lp-general}, we have with probability at least $1-\alpha$, \begin{align*}
    \text{Reg}(T) \lesssim {\sqrt{T}}\cdot\big[{\diam(\smooth\diam + \gradvar)\big( \sqrt{\kappa_q} +  \sqrt{\log(T/\alpha)}\big)}\big]+ (\log T )^2\cdot\big[{\big( \beta D^2 + {\diam\sigma_+}\sqrt{d\log(\log T/\alpha)} \log T \big)}\big].
    \end{align*}
\end{corollary}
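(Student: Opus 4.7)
The proof will be essentially a direct telescoping of the per-step excess risk bound from Theorem~\ref{thm-convergence-lp-general}. The plan is to instantiate that theorem with $n=T$, which provides a high-probability bound that is uniform in $t\in[T]$, and then sum over $t=1,\dots,T$.

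First, I would apply Theorem~\ref{thm-convergence-lp-general} (with $n$ set to the regret horizon $T$ and with the same confidence parameter $\alpha$) to obtain, with probability at least $1-\alpha$ simultaneously for every $t\in[T]$,
\begin{equation*}
F(\theta_t)-F(\theta^*)\;\lesssim\; \frac{D(\beta D+G)\bigl(\sqrt{\kappa_q}+\sqrt{\log(T/\alpha)}\bigr)}{\sqrt{t}}\;+\;\frac{\log t\cdot\bigl(\beta D^2+D\sigma_+\sqrt{d\log(\log T/\alpha)}\log T\bigr)}{t}.
\end{equation*}
Under the event on which this uniform bound holds, summing both sides over $t=1,\dots,T$ gives the regret.

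The summation is then a routine matter of two standard series estimates:
\begin{equation*}
\sum_{t=1}^{T}\frac{1}{\sqrt{t}}\;\lesssim\;\sqrt{T},\qquad \sum_{t=1}^{T}\frac{\log t}{t}\;\lesssim\;(\log T)^{2}.
\end{equation*}
Plugging these into the summed bound yields precisely the two terms claimed in Corollary~\ref{cor-regret-lp-general}: the $\sqrt{T}$-type term inherits the factor $D(\beta D+G)(\sqrt{\kappa_q}+\sqrt{\log(T/\alpha)})$ from the first summand, and the $(\log T)^2$-type term collects the bracketed expression $\beta D^2+D\sigma_+\sqrt{d\log(\log T/\alpha)}\log T$ from the second summand.

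There is essentially no technical obstacle beyond observing that the per-step bound in Theorem~\ref{thm-convergence-lp-general} is a single high-probability statement valid \emph{simultaneously} for all $t\in[n]$ (as reflected by the $\log(n/\alpha)$ factor appearing in that theorem), so that no additional union bound over $T$ events is required when summing. Consequently the same failure probability $\alpha$ is inherited by the regret bound, and no extra logarithmic factor in $T$ is introduced beyond those already absorbed into the $\sqrt{\log(T/\alpha)}$ and $\log T$ factors.
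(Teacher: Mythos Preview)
Your proposal is correct and matches the paper's approach exactly: the paper merely states ``Sum up by Theorem~\ref{thm-convergence-lp-general}, we can derive the regret'' without further detail, so your observation that the per-step bound is already uniform in $t\in[n]$ (via the $\log(n/\alpha)$ factor from the union bound inside the proof of Theorem~\ref{thm-convergence-lp-general}) together with the elementary estimates $\sum_{t=1}^T t^{-1/2}\lesssim\sqrt{T}$ and $\sum_{t=1}^T (\log t)/t\lesssim(\log T)^2$ is precisely the intended argument.
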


\noindent Similarly, by Theorem~\ref{thm-convergence-lp-strongly} we have \begin{corollary}\label{cor-regret-lp-strongly} Under the same assumption of Theorem~\ref{thm-convergence-lp-strongly}, we have with probability at least $1-\alpha$, \begin{align*}
    \text{Reg}(T) \lesssim  \dfrac{\log T}{{\gamma^2 \mu }} \cdot\big[{\diam^2(\smooth\diam + \gradvar)^2\big( {\kappa_q} +  {\log(T/\alpha)}\big)}\big]+ \dfrac{\log T}{\gamma^2 \mu } \cdot\big[{\big( \beta^2 D^4 + {\diam^2\sigma_+^2}d\log(\log T/\alpha) \log^2 T \big)}\big].
\end{align*}
\end{corollary}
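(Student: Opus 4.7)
The plan is to derive the regret directly from Theorem~\ref{thm-convergence-lp-strongly} by summing the per-step excess risk bound across $t = 1,\dots,T$. Setting $n = T$ in that theorem, for every $t \in [T]$ the excess risk is at most the sum of a term of order $A_1/t$ and a term of order $A_2/t^2$, where
\[
 A_1 = \frac{D^2(\beta D + G)^2(\kappa_q + \log(T/\alpha))}{\gamma^2 \mu}, \qquad
 A_2 = \frac{\bigl(\beta^2 D^4 + d D^2 \sigma_+^2 \log(\log T/\alpha)\log^2 T\bigr)\log T}{\gamma^2 \mu}.
\]
Because Theorem~\ref{thm-convergence-lp-strongly} already provides a bound that holds simultaneously for all $t \in [n]$ with probability $1-\alpha$ (inherited from the uniform-in-$t$ high-probability event in Proposition~\ref{prop-lp-variance-reduction}), a single invocation of that theorem on the event of probability $1-\alpha$ is enough; no additional union bound over $t$ is needed, and in the worst case it would only inflate $\log(T/\alpha)$ by a constant factor that is absorbed into the existing $\tilde O$.

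The second step is to evaluate the two partial sums. For the $A_1/t$ contribution I use the standard harmonic estimate $\sum_{t=1}^{T} 1/t \le 1 + \ln T$, producing a term of order $A_1 \log T$, which matches the first square bracket in the statement once $A_1$ is expanded. For the $A_2/t^2$ contribution I use $\sum_{t=1}^{T} 1/t^2 \le \pi^2/6 = O(1)$, producing a term of order $A_2$, whose expansion matches the second square bracket (note the $\log T$ is already inside $A_2$, inherited from the $\log n$ factor appearing in the second summand of Theorem~\ref{thm-convergence-lp-strongly}).

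Combining the two bounds under the common high-probability event $1-\alpha$ yields the claimed regret. There is no serious obstacle: the argument is essentially a routine time-integration of an existing per-step guarantee, and the only point requiring mild care is the logarithmic bookkeeping, specifically verifying that (i) $\log n$ and $\log(n/\alpha)$ become $\log T$ and $\log(T/\alpha)$ upon substituting $n = T$, and (ii) the $1/t^2$ sum contributes an $O(1)$ factor rather than any extra logarithmic factor, so the additional $\log T$ in the second bracket originates solely from the intrinsic $\log n$ already present in Theorem~\ref{thm-convergence-lp-strongly}.
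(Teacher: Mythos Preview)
The proposal is correct and takes essentially the same approach as the paper, which simply states ``Similarly, by Theorem~\ref{thm-convergence-lp-strongly} we have'' and records the resulting bound. Your additional care in (i) noting that the uniform-in-$t$ high-probability event is already built into Theorem~\ref{thm-convergence-lp-strongly} via the choice $\alpha' = \alpha/n$, and (ii) tracking that the extra $\log T$ in the second bracket comes from the $\log n$ inside the theorem rather than from $\sum_{t} 1/t^2$, is exactly the right bookkeeping and slightly more detailed than what the paper writes.
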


For $1<p\le 2$, we bring the $\kappa_q$ claimed in Lemma~\ref{lem-regularity norm-q>2} into Corollary~\ref{cor-regret-lp-general}, Corollary~\ref{cor-regret-lp-strongly} and formula \ref{eq-noise-variance} to get the regert of ours algorithm when $1<p\leq 2:$ \\
\textbf{Regret:}
\begin{align}\label{eq-reg-lp-convex,p<2}
   & \textbf{Convex:} \hspace{2.2cm}  \text{Reg}(T) \lesssim \sqrt{ T{ {\log (n/\alpha)} }}+  \dfrac{\sqrt{d} \log(\log(n)/\delta)}{\varepsilon },  \\\label{eq-reg-lp-strongly,p<2}
    &\textbf{Strongly Convex:}\quad   \text{Reg}(T)\lesssim  { {\log T \log (n/\alpha)} }+  \dfrac{d\log^3  T  \log(\log(T)/\delta)}{\varepsilon^2 }.
\end{align}

For $2<p\le \infty$, we plug the constants in Lemma~\ref{lem-regularity-q<2} to Corollary~\ref{cor-regret-lp-general}, Corollary~\ref{cor-regret-lp-strongly} and formula \ref{eq-noise-variance}\\
\textbf{Regret:}
\begin{align}\label{eq-reg-lp-convex,p>2}
   & \textbf{Convex:} \hspace{2.2cm}  \text{Reg}(T) \lesssim d^{1/2-1/p} \sqrt{ T{ {\log (n/\alpha)} }}+  \dfrac{{d}^{1-1/p} \log(\log(n)/\delta)}{\varepsilon },  \\\label{eq-reg-lp-strongly,p>2}
    &\textbf{Strongly Convex:}\quad   \text{Reg}(T)\lesssim d^{1-2/p}  { {\log T \log (n/\alpha)} }+  \dfrac{d^{2-2/p}\log^3  T  \log(\log(T)/\delta)}{\varepsilon^2 }.
\end{align}

\noindent Finally, we can also derive the regret guarantee when $p = 1$ as in prior sections: \begin{align}
   & \textbf{Convex:} \hspace{2.2cm}  \text{Reg}(T) \lesssim \log d \sqrt{T}/\varepsilon,  \\\label{eq-reg-lp-strongly,p=1}
    &\textbf{Strongly Convex:}\quad   \text{Reg}(T)\lesssim \big(\log d\cdot  \log^2{T}/\varepsilon \big)^2.
\end{align}

In conclusions, our algorithm improves the DP online general convex optimization, i.e., \cite{guha2013nearly}, to a privacy-free rate under the stochastic and smooth setting.

\section{DP High Dimensional Generalized Linear Bandits}\label{sec:bandits}
In this section, we consider the generalized contextual bandits with stochastic contexts, where a decision is made upon each new data \cite{li2017provably}.
Our proposed private Frank-Wolfe algorithm is promising to derive a satisfying estimator for smart decisions under a wide range of reward structures while providing sufficient privacy protection in this setting due to the streaming and continual release feasibility.

However, we face some non-stationarity incurred by the decision process, which leads to a highly non-trivial difficulty when applying the recursive gradient for variance reduction. 
For the fluency of the presentation, we first formulate the contextual bandits model and further explain the difficulty and our novel contributions in-depth.

\subsection{Introduction to Generalized Linear Bandit Problem}

Consider the following generalized linear bandit problem. 
At each time $t$, with individual-specific context $X_t$ sampled from some distribution $\mathcal{P}$ on $\mathcal{X}$, the decision maker can take an action $a_t$ from a finite set (arms) of size $K$ to receive a reward depending on the context $X_{t}$ and the chosen arm $a_t$ through its parameter $\theta_{a_t}^{*}$ via a generalized linear model (GLM): $r_t = \zeta (X_t^{\top} \theta_{a_t}^{*}) + \epsilon_t$, where $\zeta(\cdot)$ is an inverse link function.

We further assume that the context $X_t$, the underlying parameters $\{\theta^{*}_i\}_{i\in [K]}$ and the reward $r_t$ are all bounded.
We assume the noise $\epsilon_t$ is sub-Gaussian \cite{wainwright2019high} and conditional mean zero, i.e., $\mathcal{F}_{t}=\sigma(X_{1:t},r_{1:t-1})$ and $\E[\epsilon_t\lvert \mathcal{F}_{t}]=0$.
We use the standard notion of pseudo regret, i.e., the difference between expected rewards obtained by the algorithm and the best achievable expected rewards, across the time:
\begin{align*}
    \text{Regret}(T) = \sum_{t=1}^T \zeta(X_t^{\top}\theta_{a_t^{*}}^{*} ) - \zeta(X_t^{\top}\theta_{a_t}^{*} ),
\end{align*}
where $a_t^{*}=\arg\max_{i\in [K]}X_t^{\top}\theta_{i}^{*}$.

It is non-trivial to introduce the privacy guarantee in the design of the bandit algorithms. 
The standard notion of DP under continual observation would enforce to select almost the same action for different contexts and incur $\Omega(T)$ regret \cite{shariff2018differentially}.
Here we utilize the more relaxed notion of \emph{Joint Differential Privacy} under continuous observation \cite{shariff2018differentially}.

\begin{definition}[$(\varepsilon, \delta)$-Jointly Differential Privacy (JDP)]
A randomized action policy $\mathcal{A} = (\mathcal{A}_t )_{t=1}^T$ is said to be $(\varepsilon,\delta)$-jointly differentially private under continual observations if for any $t$, any pair of sequences $\dataset$ and $\dataset'$ differing in the $t$ entry and any sequences of action ranges from time $t+1$ to the end $\mathcal{E}_{>t}$, it holds for $\mathcal{A}_{>t}(\dataset): = (A_s(\dataset))_{s>t})$ that $\mathbb{P}[\mathcal{A}_{>t}(\dataset) \in \mathcal{E}_{>t}] \leq e^{\varepsilon} \mathbb{P}[\mathcal{A}_{>t}(\dataset')\in \mathcal{E}_{>t}] +\delta$.
\end{definition}    

We present some standard assumptions in contextual bandits, and similar assumptions can be found in \citet{Goldenshluger2013bandit,bastani2020greedy, bastani2020online}.

\begin{assumption}[Optimal Arm Set] We have a partition $[K] = K_{\text{sup}}\cup K_{\text{opt}}$, so that for every arm $i\in K_{\text{sup}}$, 
	\begin{align*}
	    P(i = \arg\max_{j\in [K]}X^{\top} \theta_j^* ) =0.
	\end{align*}
Moreover, we suppose there exists a $\subopt>0$ such that\begin{enumerate}
	    \item $\max_{i\in [K]} X^{\top}\theta_i^{*}-\subopt > X^{\top}\theta_{j}^{*} \quad  \forall j \in K_{sup}, X\in \mathcal{X}$. 
	    \item For $U_{i}: = \{X\lvert X^{\top}\theta_i^{*}-\subopt> \max_{j\neq i} X^{\top}\theta_j^{*}\}$ we have $P(X\in U_i)>u$ for some $u>0$.
	\end{enumerate} 
\end{assumption}

\begin{assumption}[Eigenvalue]
We assume that $\E[XX^{\top} | X\in U_i]\succeq \lambda I_d $, for some $\lambda>0$, $\forall i\in [K]$. 
\end{assumption}

\begin{assumption}[Margin Condition] 
\label{as:margin-condition}
There exists a constant $\ell$ so that for the sets
\begin{align*}
	\Gamma_i: = \{ \theta: \lVert \theta-\theta_i^*\rVert_1 \leq \ell \}, \forall i \in K_{opt},
\end{align*}  
and given $\theta_i\in \Gamma_i$, $\forall i\in K_{opt}$, we have,
\begin{align*}
	P(X^{\top} \theta_{i_t} - \max_{j\in K_{opt},j\neq i_t} X^{\top} \theta_{j}\leq h )\leq \margin h,
\end{align*}
where $i_t\coloneqq \arg\max_{i\in K_{opt}} X_t^{\top} \theta_i$ for some $\margin>0$.
\end{assumption}

Next we impose the standard regularity assumption on the reverse link function \cite{li2017provably, ren2020batched, chen2020privacy} which includes widely-used linear model and logistic regression.    
\begin{assumption}
There exist $\strongly$ and $\smooth$ such that  
    $0< \strongly \leq \zeta'(z)\leq \smooth$ for any $|z|\leq C$, where $C$ is some given constant.
\end{assumption}

\subsection{Private High Dimensional Bandit Algorithm}

Based on the previous assumptions, we design differentially private high-dimensional GLM bandits (Algorithm~\ref{algo:High-dim-bandit}).
Our algorithm follows the similar procedure of \citet{bastani2020online} to use two sets of estimators: the forced-sampling estimators $\{\theta_{t_0,j}\}_{j\in [K]}$ constructed using i.i.d.\ samples to select a  pre-selected set of arms; and the all-sample estimators $\{\theta_{t,j}\}_{t>t_0, j\in [K]}$ to greedily choose the "best" arm in the pre-selected set.
Another ingredient of our algorithm is the so-called synthetic update, i.e., adding the noisy all-zero contexts and zero rewards to the collected samples for the unselected arm.
This ingredient is similar to \citet{han2021generalized} while they focus on local differential privacy.
\begingroup
\allowdisplaybreaks
\begin{algorithm}[ht]
\caption{Private High Dimensional Bandit (DP-HDB)}
\label{algo:High-dim-bandit}
\begin{algorithmic}[1]
		\STATE {\bfseries Input: }{time horizon $T$, warm up period length $t_0$, privacy parameter $(\varepsilon,\delta)$, initial parameters $\theta_{0,i},i\in [K]$}\\
		\STATE{\bfseries Initialize}{ $\mathcal{I}_i  = \emptyset$ for $i\in [K]$ }
		\FOR{ $i= 0$ {\bfseries to}  $K-1$}
        \FOR{ $t = 1$ {\bfseries to}  $t_0$  }
        \STATE Observe the context $X_{it_0+t}$.
        \STATE Pull arm $i$ and receive $r_{it_0+t}$.
        \STATE Add $(X_{it_0+t},r_{it_0+t})$ to $\mathcal{I}_{i+1}$
        \STATE Update $\theta_{t,i+1}$ via running the $t$-th step of Algorithm~\ref{alg-dp-sco-l1} over $\mathcal{I}_{i+1}$ .
        \ENDFOR		
		\ENDFOR
		\FOR{ $t \geq  Kt_{0} + 1$}
		\STATE Observe the context $X_t$.
		
        \STATE Compute the set of pre-selected arms: {\vspace{-0.8em}\small{
        \begin{align*}
            \hat{K}_t = \{i\in [K]: \zeta( X_t^{\top}\theta_{t_0,i})>\max_{j\in [K]}\zeta( X_t^{\top}\theta_{t_0,j}) - \frac{h_{sub}}{2}\}
            \end{align*}
        }}
        \STATE  \vspace{-1.7em} Compute the greedy action
        \vspace{-0.8em}
        $$ a_t = \arg\max_{a\in \hat{K}_t} \zeta( X_t^{\top}\theta_{t,i})$$	
        \STATE \vspace{-1.8em} Select $a_t$-th arm and receive  $r_t$.
        \STATE Add $(X_t,r_t)$ to $\mathcal{I}_{a_t}$. Add $(\bm 0, \zeta(0))$ to $\mathcal{I}_{i}$ for $i\neq a_t.$ 
        \STATE  Update $\theta_{t,i}$ via running the $t$-th step of Algorithm~\ref{alg-dp-sco-l1} over $\mathcal{I}_i$ for all $i\in [K]$.
        \ENDFOR
		\end{algorithmic}
\end{algorithm}
\endgroup
For our synthetic update, we have the following privacy guarantee and the proof is deferred to Appendix~\ref{pf:private-bandit}.
\begin{theorem}[Privacy Guarantee]
\label{thm:privacy-bandits}
Algorithm~\ref{algo:High-dim-bandit} is $(\varepsilon, \delta)$-JDP.
\end{theorem}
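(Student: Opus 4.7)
My plan is to establish the $(\varepsilon,\delta)$-JDP of Algorithm~\ref{algo:High-dim-bandit} by combining three ingredients: the per-arm DP guarantee of Algorithm~\ref{alg-dp-sco-l1} given by Theorem~\ref{thm-privacy-l1}, the data-independent structure induced by the synthetic update $(\bm 0, \zeta(0))$, and the billboard / post-processing characterization of JDP. Concretely, I will view the collection of estimators $\{(\theta_{s,i})_{s\in[T]}\}_{i\in[K]}$ as a single public ``billboard'' that is updated online, and I will view each action $a_s$ as a deterministic function of user $s$'s own context $X_s$ together with the billboard snapshot $\{\theta_{t_0,i},\theta_{s-1,i}\}_{i\in[K]}$ (the warm-up phase is handled analogously, since the forced-sampling rule does not consult any other user's data). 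Under this decomposition, once the billboard is shown to be $(\varepsilon,\delta)$-DP with respect to any single user's data, the billboard lemma immediately delivers $(\varepsilon,\delta)$-JDP for the action policy, because future actions only touch the billboard through post-processing and otherwise use user-private information ($X_s$) that is disjoint from any earlier user's data.

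The key step is then to argue billboard DP per arm. For each $i\in[K]$, the input stream fed into arm $i$'s instance of Algorithm~\ref{alg-dp-sco-l1} contains exactly one item per time step: the real pair $(X_s,r_s)$ if $a_s = i$, and the synthetic $(\bm 0,\zeta(0))$ otherwise. The synthetic update is precisely what makes the length and the gradient-update schedule of arm $i$'s stream data-independent, so that the only effect of swapping user $t$'s datum from $(X_t,r_t)$ to $(X_t',r_t')$ on $\mathcal{I}_i$ is a change in the content of (at most) the $t$-th entry. Applying Theorem~\ref{thm-privacy-l1} to each such single-entry change yields that $(\theta_{s,i})_{s}$ is $(\varepsilon,\delta)$-DP with respect to $\mathcal{I}_i$. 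A direct accounting shows that among the $K$ parallel per-arm runs, at most two arms, $a_t$ and $a_t'$, can be simultaneously affected by user $t$'s change, and parallel composition over these disjoint per-arm datasets preserves the $(\varepsilon,\delta)$ budget (up to a constant that can be folded into the noise variance).

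The principal obstacle I expect is the cascading effect: once $a_t \neq a_t'$, the estimators diverge at time $t$, which in turn can flip subsequent actions $a_{t+1},a_{t+2},\ldots$ and thus alter how subsequent data are partitioned across the $\mathcal{I}_i$'s, so at first glance many entries of many arm datasets could differ between the executions under $\mathcal{D}$ and $\mathcal{D}'$. To close this loop, I will couple the noise realizations across the two executions and exploit the fact that Theorem~\ref{thm-privacy-l1} is an online DP guarantee for the whole trajectory, tailored to streaming inputs whose values can depend adaptively on the algorithm's own past outputs. Under this coupling, the downstream divergence is itself a deterministic function of the DP-protected billboard, so the divergent future trajectories do not leak extra information about user $t$ beyond the one-entry sensitivity at time $t$, and the sensitivity analysis therefore reduces to that single-step change. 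Combining this per-arm JDP (over the at most two affected arms) with post-processing over the action rule yields the claimed $(\varepsilon,\delta)$-JDP for Algorithm~\ref{algo:High-dim-bandit}.
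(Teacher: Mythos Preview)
Your proposal is correct and follows essentially the same route as the paper: reduce JDP to DP of the estimator ``billboard'' via post-processing, use the synthetic update $(\bm 0,\zeta(0))$ to make each per-arm stream's structure data-independent so that changing user $t$'s record alters at most the $t$-th entry in at most the two streams $\mathcal{I}_{a_t}$ and $\mathcal{I}_{a_t'}$, and then invoke Theorem~\ref{thm-privacy-l1} for each affected arm. The paper's proof handles the cascading effect by conditioning on $\theta_{i+1}=\nu_{i+1},\dots,\theta_T=\nu_T$ and observing that the ratio telescopes to the single step $i+1$ (since later updates use identical contexts and, once conditioned, identical estimators), which is exactly the adaptive/coupling argument you outline; your exposition is in fact more explicit about why adaptivity is harmless here. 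Your parenthetical about the factor-of-two from the two affected arms being absorbable into the noise scale is accurate and is something the paper's proof simply elides.
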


Although it is natural to run Algorithm~\ref{alg-dp-sco-l1} for estimators 
for for arm $i\in [K]$, we are in fact facing various loss functions, say $F_t(\theta_t)\coloneqq \E[\nabla f_t(\theta_{t,i};x_{t,a_t}, y_t)\lvert \mF_{t-1}]$, at each time $t$.
While all of the loss functions share the same minimizers $\theta^{*}_{i}$, $\Delta_t=d_t-\nabla F_t(\theta_{t,i})$ in Algorithm~\ref{alg-dp-sco-l1} is not mean zero and thus the recursive gradient is not an unbiased estimator for the population gradient.
As in the SCO setting, to show that the norm of the gradient estimation error $\Delta_t$ converges to zero sufficiently fast, we reformulate $\Delta_t$ as the sum of a sequence $\{\zeta_{t,\tau}\}_{\tau=1}^{t}$.
Our SCO results enjoy the i.i.d.\ nature of the data and thus $\{\zeta_{t,\tau}\}_{\tau=1}^{t}$ is a martingale difference sequence which can be controlled by an Azuma-Hoeffding-type concentration inequality.
In the bandits setting, after the forced-sampling period, the sample distribution for each arm evolves by time, and thus the sequence is no longer conditional mean zero.
To overcome the difficulty, we develop a novel lemma on bridging the gradient error to the total variance difference of distributions between each time step, which is the key to our success in deriving the nontrivial regret bound in this setting.

\begin{lemma} 
\label{lm:context-gradient-error-total-variation}
For each arm $i\in K_{\text{opt}}$, suppose that the greedy action begins to be picked at $t_0$, then  for any $t>t_0$ we have with probability at least $1-\alpha$,
{
\begingroup
\allowdisplaybreaks
\begin{align*}
&\lVert \Delta_t\rVert_\infty  \lesssim \sqrt{\log((d+ T)/\alpha)}\bigg(  \dfrac{(MD+\beta)}{\sqrt{t}} +\dfrac{\beta\diam \Lx }{t} \Big( \SCerror(\frac{\alpha}{d+t_0}) \sqrt{t_0}  + \margin\sum_{\tau = t_0+1}^t \lVert \theta_{\tau - 1,i}-\theta_{i}^*\rVert_1    \Big)\bigg),
\end{align*}
\endgroup
}
where $\SCerror(\alpha)=O(\log(dT/\alpha))$ is specified in the complete version (Lemma~\ref{lm:estimaton-error-bandit}).
\end{lemma}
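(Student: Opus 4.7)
The plan is to adapt the recursive gradient variance reduction argument from the i.i.d.\ SCO setting to the non-stationary bandit setting by carefully tracking how the time-varying sampling distribution $P_\tau$ (the conditional law of the sample collected for arm $i$ at time $\tau$) enters the analysis, and then to absorb the non-stationarity into a single total-variation-type term that is in turn controlled by the margin condition.

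\emph{Step 1: Unrolling the recursion.} Starting from $d_\tau = \nabla f(\theta_{\tau,i},x_\tau) + (1-\rho_\tau)(d_{\tau-1}-\nabla f(\theta_{\tau-1,i},x_\tau))$ with $\rho_\tau = 1/(1+\tau)$, iterate to obtain a representation $\Delta_t = d_t-\nabla F_t(\theta_{t,i}) = \sum_{\tau=1}^{t} w_{t,\tau}\,\xi_\tau + N_t$, where $w_{t,\tau}=\Theta(\tau/t)$ are deterministic weights coming from the telescoping product of $(1-\rho_s)$ factors, $N_t$ is the privacy noise injected by Algorithm~\ref{alg-dp-sco-l1}, and for $\tau\ge 2$
\begin{equation*}
\xi_\tau = \bigl[\nabla f(\theta_{\tau,i},x_\tau) - \nabla f(\theta_{\tau-1,i},x_\tau)\bigr] - \bigl[\nabla F_\tau(\theta_{\tau,i}) - \nabla F_\tau(\theta_{\tau-1,i})\bigr],
\end{equation*}
with an analogous boundary term for $\tau=1$. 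In the i.i.d.\ SCO case, $\xi_\tau$ would be an $\mF_{\tau-1}$-martingale difference and a vector Azuma bound would immediately give the $\sqrt{t}$ rate; the obstruction in the bandit setting is precisely that $P_\tau$ drifts with $\tau$.

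\emph{Step 2: Martingale/bias splitting.} Let $P_*$ denote the stationary sampling distribution that would be induced if the true parameters were played greedily from the start. Decompose $\xi_\tau = \xi_\tau^{\mathrm{mart}} + \xi_\tau^{\mathrm{bias}}$, where $\xi_\tau^{\mathrm{mart}}$ re-centers the empirical gradient difference by its expectation under $P_*$ (and is thus genuinely a martingale difference once we also re-center $\nabla F_\tau$ by $\nabla F_*$), while
\begin{equation*}
\xi_\tau^{\mathrm{bias}} = \bigl[\nabla F_\tau(\theta_{\tau,i}) - \nabla F_*(\theta_{\tau,i})\bigr] - \bigl[\nabla F_\tau(\theta_{\tau-1,i}) - \nabla F_*(\theta_{\tau-1,i})\bigr]
\end{equation*}
encapsulates the distribution drift. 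By smoothness ($\smooth$), $\lVert\xi_\tau^{\mathrm{mart}}\rVert_\infty \lesssim \smooth \lVert \theta_{\tau,i}-\theta_{\tau-1,i}\rVert_1 \lesssim \smooth \diam/\tau$ because the Frank-Wolfe step size is $\eta_\tau=1/(1+\tau)$; a coordinate-wise Azuma-Hoeffding union-bounded over the $d$ coordinates (together with a sub-exponential tail control for $N_t$ via the Laplace noise in Algorithm~\ref{alg-dp-sco-l1}) yields, with probability at least $1-\alpha/2$,
\begin{equation*}
\Bigl\lVert \sum_{\tau=1}^t w_{t,\tau}\xi_\tau^{\mathrm{mart}} + N_t\Bigr\rVert_\infty \lesssim \sqrt{\log((d+T)/\alpha)}\,\frac{\Lx\diam+\smooth}{\sqrt{t}},
\end{equation*}
which matches the first bracket in the lemma.

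\emph{Step 3: Bounding the bias via total variation and the margin condition.} For the bias part, a second application of smoothness gives $\lVert \nabla F_\tau(\theta)-\nabla F_*(\theta)\rVert_\infty \lesssim \smooth \Lx \cdot \mathrm{TV}(P_\tau,P_*)$ uniformly in $\theta\in\mathcal{C}$, so by telescoping in $\theta$,
\begin{equation*}
\lVert \xi_\tau^{\mathrm{bias}}\rVert_\infty \;\lesssim\; \smooth \Lx\cdot \mathrm{TV}(P_\tau,P_*) \cdot \lVert \theta_{\tau,i}-\theta_{\tau-1,i}\rVert_1 / \lVert\theta_{\tau,i}-\theta_{\tau-1,i}\rVert_1
\end{equation*}
after a Lipschitz-in-$\theta$ rewriting; summing with $w_{t,\tau}\lesssim \tau/t$ the prefactor becomes $\smooth\Lx\diam/t$. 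The crucial bandit-specific input is that $P_\tau$ and $P_*$ differ only on the event where the greedy rule at time $\tau$ disagrees with the oracle greedy rule at $\theta^*$; when $\theta_{\tau-1,i}\in\Gamma_i$ the margin condition (Assumption~\ref{as:margin-condition}) bounds the probability of this event by $\margin\,\lVert \theta_{\tau-1,i}-\theta_i^*\rVert_1$. Therefore
\begin{equation*}
\Bigl\lVert \sum_{\tau=t_0+1}^{t} w_{t,\tau}\,\xi_\tau^{\mathrm{bias}}\Bigr\rVert_\infty \;\lesssim\; \frac{\smooth\diam\Lx}{t}\cdot \margin\sum_{\tau=t_0+1}^{t} \lVert \theta_{\tau-1,i}-\theta_i^*\rVert_1,
\end{equation*}
which is the second summand in the bracket.

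\emph{Step 4: Forced-sampling initialization.} For $\tau\le t_0$ the arm is pulled deterministically, so the samples are i.i.d.\ from $P_*$ and $\xi_\tau^{\mathrm{bias}}=0$; however, the recursion at $\tau=t_0$ inherits a one-off error $\lVert \theta_{t_0,i}-\theta_i^*\rVert_1$ from the warm-up SCO estimator. By the $\ell_\infty$ estimation guarantee of Lemma~\ref{lm:estimaton-error-bandit} this error is at most $\SCerror(\alpha/(d+t_0))$, and propagating it through the $w_{t,\tau}$-weighted sum contributes $\smooth\diam\Lx\,\SCerror(\alpha/(d+t_0))\sqrt{t_0}/t$, the remaining term in the statement. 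Union-bounding the two high-probability events and absorbing constants completes the proof.

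\emph{Main obstacle.} The delicate step is \textbf{Step 3}: one must (a) produce a bias decomposition that is genuinely Lipschitz in $\theta$ so that the gradient difference structure of $\xi_\tau^{\mathrm{bias}}$ contributes an extra $1/\tau$ factor, and (b) justify replacing the TV distance between the induced context-action distributions by $\margin\lVert \theta_{\tau-1,i}-\theta_i^*\rVert_1$, which in turn requires the inductive assertion that $\theta_{\tau-1,i}\in\Gamma_i$ throughout. This inductive coupling of the high-probability bound with the margin condition is the non-trivial part of the argument and is what distinguishes this lemma from its i.i.d.\ analogue in Proposition~\ref{prop-lp-variance-reduction}.
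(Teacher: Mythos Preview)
Your overall architecture---unroll the recursion, split off a martingale part, handle it with coordinate-wise Azuma, and control the remaining bias through total variation plus the margin condition---is the same as the paper's. But the recursion and the bias control are both off in ways that matter. In Step~1, unrolling $\Delta_t=(1-\rho_t)\Delta_{t-1}+\cdots$ with $\Delta_{t-1}=d_{t-1}-\nabla F_{t-1}(\theta_{t-1,i})$ produces
\[
D_\tau=\bigl[\nabla f_\tau(\theta_{\tau,i})-\nabla f_\tau(\theta_{\tau-1,i})\bigr]-\bigl[\nabla F_\tau(\theta_{\tau,i})-\nabla F_{\tau-1}(\theta_{\tau-1,i})\bigr],
\]
not your $\xi_\tau$ (which has $F_\tau$ in both places and is therefore already a martingale difference). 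The bias is exactly $\E[D_\tau\mid\mathcal F_{\tau-1}]=\nabla F_{\tau-1}(\theta_{\tau-1,i})-\E[\nabla F_\tau(\theta_{\tau-1,i})\mid\mathcal F_{\tau-1}]$, i.e.\ the \emph{same} parameter under two \emph{consecutive} sampling laws; anchoring at a fixed $P_*$ is a different decomposition and your $\xi_\tau^{\mathrm{bias}}$ is not the object that actually appears.

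Step~3 is the real gap: your displayed bound multiplies and divides by $\lVert\theta_{\tau,i}-\theta_{\tau-1,i}\rVert_1$ and so says nothing, and you do not explain how to obtain the product $\lVert\theta_{\tau-1,i}-\theta_i^*\rVert_1\cdot(1/\tau)$. The paper sources these two factors from different places than you suggest. Writing the GLM population gradient as $\nabla F_\tau(\theta)=\E\bigl[(\zeta(X^\top\theta)-\zeta(X^\top\theta_i^*))X\bigr]$, one has
\[
\E[D_\tau\mid\mathcal F_{\tau-1}]=\int\bigl(\zeta(x^\top\theta_{\tau-1,i})-\zeta(x^\top\theta_i^*)\bigr)x\,\bigl(dP_i(\cdot\mid\theta_{\tau-1})-dP_i(\cdot\mid\theta_\tau)\bigr),
\]
so the integrand is already $O(\smooth\Lx\lVert\theta_{\tau-1,i}-\theta_i^*\rVert_1)$; this is where the distance-to-optimum factor comes from, not from $\mathrm{TV}$. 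The TV distance used is between \emph{consecutive} greedy policies, and the margin condition applied to the Frank--Wolfe step $\lVert\theta_\tau-\theta_{\tau-1}\rVert_1\le\diam\eta_{\tau-1}$ gives $\mathrm{TV}\big(P_i(\cdot\mid\theta_{\tau-1}),P_i(\cdot\mid\theta_\tau)\big)\lesssim\margin\diam/\tau$, which supplies the $1/\tau$. The one exceptional transition at $\tau=t_0$ (forced sampling $\to$ greedy) has $\mathrm{TV}\le 1$; combined with the warm-up bound $\lVert\theta_{t_0-1,i}-\theta_i^*\rVert_1\lesssim\SCerror/\sqrt{t_0}$ and weight $t_0/t$ this yields the $\SCerror\sqrt{t_0}/t$ term. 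Your fixed-$P_*$ route might be salvageable via a Hessian argument (so that $1/\tau$ comes from the step size and $\lVert\theta_{\tau-1,i}-\theta_i^*\rVert_1$ from $\mathrm{TV}(P_\tau,P_*)$, swapping the roles), but that is not what you wrote, and it would still require a separate justification that $\mathrm{TV}(P_\tau,P_*)\lesssim\margin\lVert\theta_{\tau-1,i}-\theta_i^*\rVert_1$ from Assumption~\ref{as:margin-condition}.
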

Such lemma provides a guideline on tuning the warm-up stage length of the algorithm. In particular, it implies that polylog($T$) length of warm-up is
sufficient to get a $\tilde{O}(\frac{1}{\sqrt{t}})$-decayed gradient estimation error for each arm $i\in K_{\text{opt}}$ if the previous estimators converge to the underlying one at sufficiently fast rates.
Such a low gradient estimation error is sufficient for the fast parameter convergence in the consequent time steps.

As far as we know, this is the first attempt to directly apply variance reduction in a non-stationary environment, which is sharply contrast to the previous solutions.
In reinforcement learning (RL), as pointed out by \cite{papini2018stochastic}, variance reduction can potentially improve much the sample efficiency since the collection of the samples requires the agent to interact with the environment, which could be costly.
However, the sampling trajectories is generated by an RL algorithm. 
Thus the direct usage of the variance reduction also suffer from the changing distribution of the collected sample once their RL algorithm improves based on previous experience.
This also applies to the bandits setting which shares the similar spirit in the data collection process.
In overcome this, previous work \citep{sutton2016emphatic, papini2018stochastic, xu2019sample}, mainly employ importance sampling to correct the distribution shift and construct an unbiased estimator for the policy gradient with respect to the snapshot policy.
However, importance sampling is prone to high variance, e.g., \cite{thomas2015high}. 

We prove the desired convergence rate of the estimation error by induction in Section~\ref{sec:Proof_of_bandit_estimation}, and here we present the corresponding theorem.
\begin{theorem}[Estimation Error]
\label{lm:induction-text}
For the full-sample estimator $\theta_{t,i}$, when $t>t_0=O(\frac{\log(dT/\alpha)\log(T)}{\varepsilon^2})$, for every arm $i\in K_{\text{opt}} $, we have with probability at least $1-\alpha$,
\begin{align*}
    \lambda\strongly u \lVert \theta_{t,i} - \theta^{*}_i \rVert_1^2 \le F_t(\theta_{t,i}) - F_t(\theta^{*}_i)\le \frac{\Cpara(\alpha)}{t},
\end{align*}
for some constant $t_0$ and $\Cpara(\alpha)=O( \frac{\log^2(dT/\alpha)\log(T)}{\varepsilon^2})$ specified in Section~\ref{sec:Proof_of_bandit_estimation}. 
\end{theorem}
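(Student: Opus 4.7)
My plan is to prove both inequalities by strong induction on $t$, with the inductive hypothesis carrying the full statement for all $\tau\le t-1$. The argument rests on three ingredients: (i) an $\ell_1$-strong-convexity lower bound on $F_t$ with modulus $\lambda\strongly u$, which comes from the eigenvalue condition $\E[XX^\top\mid X\in U_i]\succeq \lambda I_d$ combined with $\zeta'\ge\strongly$ and $P(X\in U_i)\ge u$, and which immediately delivers the left inequality $\lambda\strongly u\lVert\theta_{t,i}-\theta_i^*\rVert_1^2\le F_t(\theta_{t,i})-F_t(\theta_i^*)$ once the right inequality is in place; (ii) the one-step strongly convex DP-POFW guarantee of Theorem~\ref{thm-convergence-l1-strongly}, which converts a gradient estimation error of order $\tilde O(1/\sqrt{t})$ into a $\tilde O(1/t)$ optimization gap via the standard Frank-Wolfe potential argument, using Assumption~\ref{assumption-boundary} to avoid the $O(1/t)$ slowdown on the boundary; and (iii) Lemma~\ref{lm:context-gradient-error-total-variation}, which quantifies the price of non-stationarity through the sum $\margin\sum_{\tau=t_0+1}^{t}\lVert\theta_{\tau-1,i}-\theta_i^*\rVert_1$.

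For the base case, during the forced-sampling window $[1,Kt_0]$ each arm is pulled on i.i.d.\ contexts, so $F_t$ coincides with the genuine population risk and Theorem~\ref{thm-convergence-l1-strongly} applies directly, giving a bound that fits $\Cpara(\alpha)/t$ once $t_0$ is chosen at scale $\log(dT/\alpha)\log T/\varepsilon^2$. For the inductive step at time $t>Kt_0$, I use the left inequality at earlier times to substitute $\lVert\theta_{\tau-1,i}-\theta_i^*\rVert_1\le\sqrt{\Cpara(\alpha)/((\tau-1)\lambda\strongly u)}$ into Lemma~\ref{lm:context-gradient-error-total-variation}. Using $\sum_{\tau=t_0+1}^{t}(\tau-1)^{-1/2}=O(\sqrt{t})$, the non-stationary term contributes $\tilde O(1/\sqrt{t})$ after division by $t$; the warm-up term $\SCerror(\alpha/(d+t_0))\sqrt{t_0}\cdot\smooth\diam\Lx/t$ is also of order $\tilde O(1/\sqrt{t})$ precisely when $t_0\gtrsim\log^2(dT/\alpha)/\varepsilon^2$, and the remaining $(\Lx\diam+\smooth)/\sqrt{t}$ summand is $\tilde O(1/\sqrt{t})$ for free. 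Substituting this $\lVert\Delta_t\rVert_\infty=\tilde O(1/\sqrt{t})$ into ingredient (ii) yields $F_t(\theta_{t,i})-F_t(\theta_i^*)\le\Cpara(\alpha)/t$ and closes the induction.

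The main obstacle, and the reason the target constant must take the precise shape $\Cpara(\alpha)=O(\log^2(dT/\alpha)\log T/\varepsilon^2)$, is making the induction self-consistent: the constant produced at step $t$ by (ii) plus the constant arising from resubstituting the inductive estimate into Lemma~\ref{lm:context-gradient-error-total-variation} must be bounded by $\Cpara(\alpha)$ itself. Tracking these constants carefully forces the stated $1/\varepsilon^2$ scaling and the extra logarithmic factors. The second bookkeeping point is the probability budget: to carry the induction uniformly over $t\in[T]$, I invoke Lemma~\ref{lm:context-gradient-error-total-variation} and Theorem~\ref{thm-convergence-l1-strongly} at confidence $\alpha/T$ and take a union bound across time steps, which generates the $\log(T/\alpha)$ factors already absorbed into $\SCerror$ and $\Cpara$. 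A minor but important verification is that, because each $F_t$ shares the same minimizer $\theta_i^*$ and satisfies $\ell_1$-strong convexity with a common modulus $\lambda\strongly u$, the Frank-Wolfe descent lemma underlying Theorem~\ref{thm-convergence-l1-strongly} remains valid even though the loss being implicitly optimized changes across time steps through the evolving sample mix produced by the synthetic zero-updates.
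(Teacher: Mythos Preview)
Your proposal is correct and follows essentially the same approach as the paper: strong induction on $t$, using the inductive hypothesis $\lVert\theta_{\tau-1,i}-\theta_i^*\rVert_1\le\sqrt{\Cpara/(\lambda\strongly u\,\tau)}$ inside Lemma~\ref{lm:context-gradient-error-total-variation} to control the non-stationary sum, then feeding the resulting gradient-error bound into the one-step strongly convex Frank--Wolfe recursion (the same inequality that underlies Theorem~\ref{thm-convergence-l1-strongly}), with the base case handled by the i.i.d.\ forced-sampling phase. The only minor discrepancy is that the warm-up contribution $\SCerror\sqrt{t_0}/t$ is, after multiplying by $\eta_t$, a $G_4/t^2$ term in the paper's bookkeeping rather than the $\tilde O(1/\sqrt{t})$ you describe; this is a harmless over-estimate on your part and does not affect the induction closing.
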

Now we are ready to present our regret bound by converting the estimation error to regret, whose formal proof is given in Section~\ref{sec:proof-regret}.

\begin{theorem}[Regret bound]
\label{thm:regret}
    With probability at least $1-\alpha$, Algorithm~\ref{algo:High-dim-bandit} achieves the following regret bound
    \begin{align*}
        \text{Regret}(T)&\le t_0 + \Lx^2 \smooth^2 \Cpara(\alpha/(4\lvert K_{opt}\rvert)) \log(T) + 2\Lx \smooth \sqrt{\Cpara(\alpha/(4\lvert K_{opt}\rvert))\log(T)\log(4/\alpha)} \\
        & = O\left(\dfrac{\log^2(dT/\alpha)\log^2 T}{\varepsilon^2}\right).
    \end{align*}
\end{theorem}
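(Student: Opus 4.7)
The plan is to decompose the total regret into the warm-up phase $t\le K t_0$ and the greedy phase $t>Kt_0$. During the warm-up, each per-round regret is bounded by a universal constant of order $\smooth \Lx \diam$ (since rewards are bounded), so the cumulative contribution is absorbed into the first additive term $t_0$ of the theorem. All the non-trivial work lies in controlling the greedy phase, and the key estimate—that $\lVert \theta_{t,i}-\theta_i^*\rVert_1^2\lesssim \Cpara/t$ holds uniformly for $i\in K_{\text{opt}}$ and $t>Kt_0$—is already provided by Theorem~\ref{lm:induction-text}.

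For the greedy phase I would first show, via a union bound combined with the $\ell_1$ convergence of the forced-sampling estimators $\{\theta_{t_0,j}\}_{j\in[K]}$ (apply Theorem~\ref{thm-convergence-l1-general} with failure probability $\alpha/4$), that for every $t>K t_0$ the screening step produces $K_{\text{sup}}\cap \hat{K}_t=\emptyset$ and $a_t^*\in \hat{K}_t$; this relies on taking the warm-up length $t_0$ large enough that the forced-sampling error is smaller than $h_{\text{sub}}/(4\Lx)$. Then both $a_t$ and $a_t^*$ lie in $K_{\text{opt}}$, and the Lipschitz property of $\zeta$ yields $\text{Regret}_t\le \smooth\lvert X_t^\top(\theta^*_{a_t^*}-\theta^*_{a_t})\rvert\cdot \mathbf{1}\{a_t\neq a_t^*\}$. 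The greedy rule gives $X_t^\top \theta_{t,a_t}\ge X_t^\top \theta_{t,a_t^*}$, which together with Hölder's inequality produces
\begin{align*}
X_t^\top \theta^*_{a_t^*}-X_t^\top \theta^*_{a_t}\le \Lx\big(\lVert \theta_{t,a_t^*}-\theta^*_{a_t^*}\rVert_1+\lVert\theta_{t,a_t}-\theta^*_{a_t}\rVert_1\big)\le 2\Lx\max_{i\in K_{\text{opt}}}\lVert\theta_{t,i}-\theta^*_i\rVert_1.
\end{align*}

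Naively summing this linear-in-$\max_i\lVert\theta_{t,i}-\theta_i^*\rVert_1$ bound only gives $O(\sqrt T)$, so the margin condition (Assumption~\ref{as:margin-condition}) is essential. Since $\{a_t\neq a_t^*\}$ forces the linear gap to lie in $[0,\,2\Lx\max_i\lVert\theta_{t,i}-\theta_i^*\rVert_1]$, integrating $\mathbb{P}(\text{gap}\le h)\le \margin h$ yields
\begin{align*}
\E[\text{Regret}_t\mid \mathcal{F}_{t-1}]\lesssim \smooth\margin\Lx^2\max_{i\in K_{\text{opt}}}\lVert\theta_{t,i}-\theta^*_i\rVert_1^2.
\end{align*}
Union-bounding Theorem~\ref{lm:induction-text} over $i\in K_{\text{opt}}$ with parameter $\alpha/(4\lvert K_{\text{opt}}\rvert)$ gives $\max_{i\in K_{\text{opt}}}\lVert \theta_{t,i}-\theta_i^*\rVert_1^2\lesssim \Cpara(\alpha/(4\lvert K_{\text{opt}}\rvert))/(\lambda\strongly u\, t)$, and summing $1/t$ from $K t_0+1$ to $T$ produces the $\Lx^2\smooth^2\Cpara\log T$ term. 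Finally, to convert the bound on conditional expectations into a high-probability bound on the actual cumulative regret, I would apply Freedman's inequality to the martingale $\{\text{Regret}_t-\E[\text{Regret}_t\mid \mathcal{F}_{t-1}]\}$: each increment is dominated by $2\smooth\Lx\max_i\lVert \theta_{t,i}-\theta^*_i\rVert_1$, and the predictable quadratic variation is $\smooth^2\Lx^2\Cpara\log T/(\lambda\strongly u)$, which produces the square-root deviation term $2\Lx\smooth\sqrt{\Cpara\log T\log(4/\alpha)}$.

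The principal obstacle I expect is careful probability bookkeeping. One must allocate $\alpha/4$ across four failure events (correctness of the screening set $\hat{K}_t$ uniformly in $t$, the uniform-in-$i$ estimation-error bound from Theorem~\ref{lm:induction-text}, the Freedman deviation, and the initialisation of the inductive estimator guarantee) and verify that $\theta_{t,i}\in \Gamma_i$ for every greedy step, so that the margin condition is legitimately invoked; tying these together requires selecting the warm-up length $t_0=\Omega(\log^2(dT/\alpha)\log(T)/\varepsilon^2)$ so that both the sub-optimality screen and the induction base case succeed simultaneously.
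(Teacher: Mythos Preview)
Your proposal is correct and follows essentially the same route as the paper: decompose into warm-up and greedy phases, use the screening guarantee together with the $\ell_1$ estimation error from Theorem~\ref{lm:induction-text} to bound the per-round regret, invoke the margin condition to square the error in conditional expectation, sum to produce the $\Lx^2\smooth^2\Cpara\log T$ term, and then concentrate. The paper uses Azuma--Hoeffding rather than Freedman (the increments are bounded by the deterministic $c_t=\Lx\smooth\sqrt{\Cpara}/\sqrt{t}$ with $\sum_t c_t^2\le \Lx^2\smooth^2\Cpara\log T$, so Hoeffding already suffices), and for the forced-sampling estimator you should invoke the strongly convex Theorem~\ref{thm-convergence-l1-strongly} rather than Theorem~\ref{thm-convergence-l1-general} since you need an $\ell_1$ \emph{parameter} bound; otherwise the arguments coincide.
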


\begin{remark}
This regret has a sublinear growth rate, and it is the first regret bound for DP high-dimensional generalized linear bandits. In particular, 
the upper bound above has only a poly-logarithmic growth concerning dimension $d$, as desired in high dimensional scenarios. Compared with the regret bound $O(\log^2 (dT))$ without DP in \citet{bastani2020online}, our upper bound contains an extra $O(\log^2T)$ factor, which is due to our simplified proof to shed light on the main idea. 
We leave the refinement as future directions. 
\end{remark}

\section{Experiments}\label{sec:experiments}
In this section, we present experiment results to demonstrate the efficacy and efficiency of our algorithm.

\subsection{Generation of Generalized Gaussian Noise}
Firstly we provide an algorithm to generate the generalized Gaussian noise in Lemma~\ref{lemma-general-gaussian}, which will be used by DP-TOFW, DP-POFW and NoisySFW (Algorithm 3 in \cite{bassily2021non}) in the following experiment. When $2<p\leq \infty$(i.e. $1\leq q<2$), the corresponding Generalized Gaussian Noise is a re-scaled standard Gaussian noise under $\ell_2$ norm. We focus on the case $1< p<2$, in which the Generalized Gaussian Noise follows the p.d.f. defined in Lemma~\ref{lemma-general-gaussian} with $\lVert \cdot \rVert_+ = \lVert \cdot \rVert_{q^+},q^+ =   \min\{q-1,\log d - 1\}.$ 

\begin{algorithm}[ht]
\caption{Generation of the $\ell_{q^+}$ Gaussian Noise}
\label{algo:generalized-gaussian}
\begin{algorithmic}[1]
		\STATE {\bfseries Input:} {dimension $d$, $q_{+}$, noise level $\sigma_+$} 
		\STATE {Generate $r^2\sim Gamma(d/2, 2\sigma_+)$}
		\STATE {Generate $d$ independent random real scalars $\epsilon_i\sim \overline{G}(1/q_{+},q_{+})$ where $ \overline{G}(1/q_{+},q_{+})$ is the generalized normal distribution}
		\STATE {Construct the vector $x$ of component $x_i = s_i\cdot \epsilon_i$ and $\{s_i\}_{i\in[d]}$ are independent random signs}
		\STATE {\bfseries Output:} {$y=r\frac{x}{\lVert x\rVert_{\ell_{q^+}}}$}
		\end{algorithmic}
\end{algorithm}

\begin{lemma}\label{lemma-generalied-gaussian-distribution}
The output in Algorithm~\ref{algo:generalized-gaussian} follows the generalized Gaussian distribution $\mathcal{G}_{\lVert \cdot \rVert_{+}}(0, \sigma_+^2)$ with $\lVert \cdot \rVert_+ = \lVert \cdot \rVert_{q^+}$.
\end{lemma}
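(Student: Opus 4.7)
The plan is to decompose both the target distribution $\mathcal{G}_{\lVert\cdot\rVert_{q^+}}(0,\sigma_+^2)$ and the algorithm output into independent radial and angular components, and then verify that the two decompositions match. For any random vector $Z$ on $\mathbb{R}^d$ whose density depends only on $\lVert z\rVert_{q^+}$, I would write $Z=R\cdot U$ with $R=\lVert Z\rVert_{q^+}$ and $U=Z/\lVert Z\rVert_{q^+}$ on the $q^+$-sphere $S_{q^+}:=\{z:\lVert z\rVert_{q^+}=1\}$. A $q^+$-polar change of variables of the form $\int_{\mathbb{R}^d}h(x)\,dx=\int_0^\infty\int_{S_{q^+}}h(\rho u)\rho^{d-1}\,d\mu_{q^+}(u)\,d\rho$, with $\mu_{q^+}$ the induced surface measure on $S_{q^+}$, then forces $R$ and $U$ to be independent and $U$ to be uniform with respect to $\mu_{q^+}$.

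Applying this decomposition to the target, Lemma~\ref{lemma-gamma} gives $R^2\sim\Gamma(d/2,2\sigma_+)$ while $U$ is $\mu_{q^+}$-uniform on $S_{q^+}$. Step~2 of Algorithm~\ref{algo:generalized-gaussian} samples $r^2$ from exactly this Gamma law, so the radius matches by construction, and $r$ is independent of the vector $x$ built in Steps~3--4 by the independence of the random primitives. It therefore remains only to show that $x/\lVert x\rVert_{q^+}$ is $\mu_{q^+}$-uniform on $S_{q^+}$.

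For that step, the plan is to use the fact that under the parametrization $\overline{G}(1/q^+,q^+)$ each $\epsilon_i$ has density proportional to $\exp(-c\lvert \epsilon_i\rvert^{q^+})$ for some constant $c$ depending only on $q^+$. After symmetrization by the independent signs, the joint density of $x=(s_1\epsilon_1,\dots,s_d\epsilon_d)$ is therefore proportional to $\exp(-c\sum_i|x_i|^{q^+})=\exp(-c\lVert x\rVert_{q^+}^{q^+})$, a function of $\lVert x\rVert_{q^+}$ alone. Substituting this into the polar identity above separates the integrand into a purely radial factor $\rho^{d-1}\exp(-c\rho^{q^+})$ times the constant angular density $d\mu_{q^+}$, which yields both the independence of $\lVert x\rVert_{q^+}$ from $x/\lVert x\rVert_{q^+}$ and the $\mu_{q^+}$-uniformity of the direction. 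Pairing this uniform direction with the independent Gamma-distributed radius $r$ then recovers exactly the polar decomposition of $\mathcal{G}_{\lVert\cdot\rVert_{q^+}}(0,\sigma_+^2)$ established in the first paragraph, and uniqueness of the joint law under this factorization closes the proof.

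The main bookkeeping obstacle I anticipate is making the $q^+$-polar change of variables precise, namely defining the surface measure $\mu_{q^+}$ on $S_{q^+}$ and justifying the $\rho^{d-1}$ Jacobian factor, which can be done for instance via a radial bi-Lipschitz chart from $S_{q^+}$ onto the Euclidean sphere combined with the $1$-homogeneity of $\lVert\cdot\rVert_{q^+}$. A secondary but still delicate point will be verifying that the joint density of $x$ really depends on $x$ only through $\lVert x\rVert_{q^+}$ under the specific parametrization $\overline{G}(1/q^+,q^+)$ used in the algorithm; the normalization constants themselves are irrelevant because they are absorbed when we pass to the direction $x/\lVert x\rVert_{q^+}$.
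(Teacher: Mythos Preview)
Your proposal is correct and follows essentially the same polar decomposition as the paper: match the Gamma-distributed radius via Lemma~\ref{lemma-gamma} and verify that the direction $x/\lVert x\rVert_{q^+}$ is uniform on the $\ell_{q^+}$-sphere. The only difference is that the paper delegates both the ``radially symmetric $\Rightarrow$ uniform conditional direction'' step and the ``generalized-normal coordinates yield a uniform $\ell_{q^+}$-direction'' step to Lemma~3.2 and Algorithm~4.1 of \cite{calafiore1998uniform}, whereas you plan to derive these directly via the $q^+$-polar change of variables; your version is more self-contained but otherwise identical in structure.
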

\begin{proof}
By Lemma~\ref{lemma-gamma}, we know that if $Z\sim \mathcal{G}_{\lVert \cdot \rVert_{+}}(0, \sigma_{+}^2)$ in $d$-dimensional space, then $\lVert Z\rVert_{+}^2$ follows Gamma distribution $\Gamma(d/2, 2\sigma_+)$. 
Since the generalized Gaussian distribution is $\ell_{\kappa_{+}}$ radially symmetric, Lemma 3.2 in \cite{calafiore1998uniform} proves that conditional on $\lVert Z\rVert_+^2=r^2$ for any $r>0$, $Z$ is uniformly distributed on $\ell_{\kappa_{+}}$ spherical with radius $r$.
Thus the remaining part is to sample uniformly from the sphere with radius $\lVert Z\rVert_+$ in $\ell_{\kappa_{+}}$ space, which is achieved by modifying Algorithm 4.1 in \cite{calafiore1998uniform} (step 3-5 in Algorithm~\ref{algo:generalized-gaussian}).
\end{proof}

\subsection{Experimental Setting}
In this section, we consider the linear regression setting,
\begin{equation*}
    y = X^\top \theta + \epsilon,
\end{equation*}
where the design matrix $X\in\mathbb{R}^{d\times n}$, true parameter $\theta\in \mathbb{R}^d$, output $y\in \mathbb{R}^n$, and $\epsilon \sim ~ N(0, \nu^2I_n)$ is a noise vector. We define the loss function as $\mathcal{L}(\hat{\theta}, X) = \frac{1}{n} \sum_{i=1}^n (y_i - \langle x_i, \hat{\theta} \rangle)^2$ for any given estimation $\hat{\theta}$, where $y_i$ is the $i$-th entry of $y$ and $x_i$ is the $i$-th column of $X$. Therefore, the excess risk will be $F(\hat{\theta}) = \mathbb{E} [\mathcal{L}(\hat{\theta})]$ where the expectation is taken with respect to the randomness in $X$ and $\epsilon$. Here we will use the loss function over a separate testing set as an empirical estimation of the excess population risk, which we denote as $\mathcal{L}(\hat{\theta}, X_{\text{test}})$ . And we further introduce suboptimality as $\text{SubOpt}=\frac{\mathcal{L}(\hat\theta, X_{\text{test}}) - \mathcal{L}(\theta, X_{\text{test}})}{\mathcal{L}(\theta_0, X_{\text{test}}) - \mathcal{L}(\theta, X_{\text{test}})}$. Here $\theta_0$ is zero vector, serving as the initialization of all algorithms.
All experiments are finished on a server with 256 AMD EPYC 7H12 64-Core Processor CPUs.
The code to reproduce our experimental results is shared in our \href{https://github.com/liangzp/DP-Streaming-SCO}{Github Repo}.

\subsection{Comparison with DP-SCO Algorithms}
To demonstrate the efficacy and efficiency of our algorithm in $1\le p\le \infty$ regimes, we choose $p=1.5$ and $p=\infty$ as our geometries. We compare our DP-TOFW with NoisySFW (Algorithm 3 in \citet{bassily2021non}), LocalMD (Algorithm 6 in \citet{asi2021private}) when $p=1.5$ and with NoisySGD (Algorithm 2 in \citet{bassily2020stability}) when $p=\infty$. We generate $T$ samples i.i.d. from a normal distribution with mean zero and standard deviation 0.05, and then normalize them by their $q$-norm to ensure each sample maintain unit $q$-norm.
We also generate the true underlying parameter $\theta$
by setting all its entries to be sampled from a normal distribution with mean zero and standard deviation 0.05 and then normalized it by its $p$-norm. The size of the testing set is 10000.

For all the experiment, we set the radius of constrain set $\mathcal{C}$ as 2 and guarantee $(1, 1/T)$-DP. To comprehensively demonstrate the performance of our algorithm, we conduct our experiment with $T=1000, 2000, 5000$ and $10000$ with dimension $d=5, 10$ and $20$. To achieve the best performance for each algorithm, we will scale their default learning rate by a grid of scaling factors. In Figure~\ref{fig-lr-scale}, we show the SubOpt of several algorithms under different learning rate scalings. As we can see, comparing with NoisySFW in $p=1.5$ and NoisySGD in $p=\infty$, DP-TOFW is robust against learning rate scaling. In Table~\ref{tab:exp}, we show the risk, SubOpt and wall-clock time for all algorithms with their best learning rate scaling under different $T$ and $d$ combinations. All the results are based on 10 independent runs with different random seeds. 
As we can see, our proposed significantly outperforms NoisySFW in terms of risk while our DP-TOFW achieves comparable risk with NoisySGD but with much less computational cost.

One thing we need to mention here is that LocalMD does not converge regardless of the learning rate scaling. We suspect that this is due to the large constants before their Bregman divergence, and the standard deviation of their Gaussian noise. In Figure~\ref{fig-time-subopt}, we visualize the SubOpt against wall-clock time of NoisySFW and DP-TOFW with their best learning scaling under $p=1.5$. We notice that NoisySFW converges faster than DP-TOFW because it has a  smaller number of total iteration ($O(\sqrt{n})$) in centralized setting, while DP-TOFW needs to receive the data one by one and triggers the tree mechanism upon each data arrival ($O(n)$ times).

\subsection{Comparison with DP-Bandit Algorithms}
Moreover, we conduct the experiment on our bandits applications.
We compare our DP-HDB with the Linear UCB via Additive Gaussian Noise algorithm (DPUCB) in \cite{shariff2018differentially}. We choose the time horizon $T=10000$, the dimension $d=50$, number of arms $K=2$ and privacy epsilon $\varepsilon=1$.
The other parameters are set as recommended.
The comparison between two bandit algorithms' cumulative regret is demonstrated in Figure~\ref{fig:bandit-exp}.
Our proposed algorithm significantly outperforms DPUCB in the cumulative regret.

\begin{figure}[t]
\centering
\includegraphics[width=8cm]{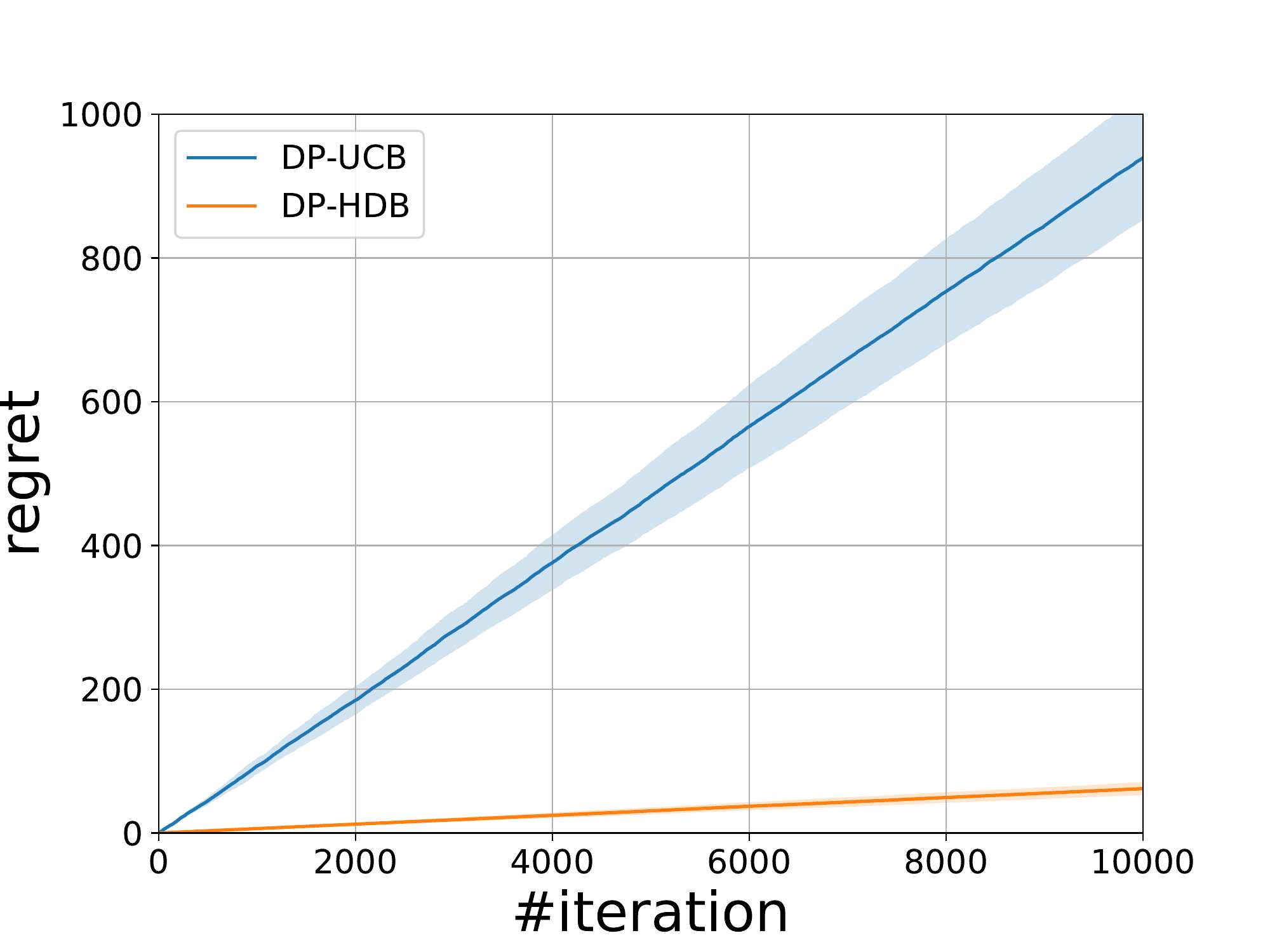}
\caption{Comparison between cumulative regret of DP-UCB and our DP-HDB algorithm}
\label{fig:bandit-exp}
\end{figure}

\begin{table}
\centering
\caption{Experiment Results. We do not distinguish our DP-TOFW (Algorithm~\ref{alg-dp-sco-lp}) and DP-POFW (Algorithm~\ref{alg-dp-sco-l1}) and denote them as OFW in this table as they belong to our unified online Frank-Wolfe framework}
\label{tab:exp}
\resizebox{.8\textwidth}{!}{%
\begin{tabular}{lllllll}
\toprule
      &    &     &     &                  Risk &                 SubOpt &                 Time \\
T & d & p & algo &                       &                        &                      \\
\midrule
\multirow{12}{*}{1000} & \multirow{4}{*}{5} & \multirow{2}{*}{1.5} & NoisySFW &    0.0885$\pm$0.00907 &       0.522$\pm$0.0565 &   0.0189$\pm$0.00216 \\
      &    &     & OFW &   0.00536$\pm$0.00155 &     0.0172$\pm$0.00987 &   0.0929$\pm$0.00157 \\
\cline{3-7}
      &    & \multirow{2}{*}{inf} & NoisySGD &     0.0259$\pm$0.0176 &      0.0802$\pm$0.0613 &       30.2$\pm$0.805 \\
      &    &     & OFW &     0.0357$\pm$0.0216 &       0.112$\pm$0.0727 &   0.0765$\pm$0.00345 \\
\cline{2-7}
\cline{3-7}
      & \multirow{4}{*}{10} & \multirow{2}{*}{1.5} & NoisySFW &     0.0771$\pm$0.0107 &       0.953$\pm$0.0947 &  0.0216$\pm$0.000514 \\
      &    &     & OFW &    0.0183$\pm$0.00332 &       0.201$\pm$0.0483 &     0.105$\pm$0.0124 \\
\cline{3-7}
      &    & \multirow{2}{*}{inf} & NoisySGD &     0.0525$\pm$0.0149 &        0.636$\pm$0.186 &       31.2$\pm$0.516 \\
      &    &     & OFW &     0.0915$\pm$0.0209 &        0.582$\pm$0.157 &    0.0852$\pm$0.0109 \\
\cline{2-7}
\cline{3-7}
      & \multirow{4}{*}{20} & \multirow{2}{*}{1.5} & NoisySFW &   0.0414$\pm$0.000302 &        1.05$\pm$0.0108 &    0.0217$\pm$0.0027 \\
      &    &     & OFW &    0.0307$\pm$0.00344 &        0.775$\pm$0.128 &    0.108$\pm$0.00901 \\
\cline{3-7}
      &    & \multirow{2}{*}{inf} & NoisySGD &    0.0202$\pm$0.00262 &        0.955$\pm$0.111 &       31.4$\pm$0.945 \\
      &    &     & OFW &    0.0766$\pm$0.00464 &       0.982$\pm$0.0768 &    0.0825$\pm$0.0114 \\
\cline{1-7}
\cline{2-7}
\cline{3-7}
\multirow{12}{*}{2000} & \multirow{4}{*}{5} & \multirow{2}{*}{1.5} & NoisySFW &    0.0746$\pm$0.00644 &        0.44$\pm$0.0287 &  0.0376$\pm$0.000492 \\
      &    &     & OFW &  0.00285$\pm$0.000197 &    0.00235$\pm$0.00106 &     0.222$\pm$0.0184 \\
\cline{3-7}
      &    & \multirow{2}{*}{inf} & NoisySGD &    0.0127$\pm$0.00371 &      0.0344$\pm$0.0133 &        119$\pm$0.962 \\
      &    &     & OFW &    0.0152$\pm$0.00585 &        0.0432$\pm$0.02 &    0.156$\pm$0.00851 \\
\cline{2-7}
\cline{3-7}
      & \multirow{4}{*}{10} & \multirow{2}{*}{1.5} & NoisySFW &    0.0701$\pm$0.00493 &       0.887$\pm$0.0691 &   0.0366$\pm$0.00502 \\
      &    &     & OFW &   0.00704$\pm$0.00247 &      0.0595$\pm$0.0321 &    0.189$\pm$0.00384 \\
\cline{3-7}
      &    & \multirow{2}{*}{inf} & NoisySGD &     0.0382$\pm$0.0133 &        0.484$\pm$0.178 &         124$\pm$2.36 \\
      &    &     & OFW &     0.0582$\pm$0.0113 &       0.364$\pm$0.0795 &    0.159$\pm$0.00928 \\
\cline{2-7}
\cline{3-7}
      & \multirow{4}{*}{20} & \multirow{2}{*}{1.5} & NoisySFW &    0.0376$\pm$0.00262 &       0.957$\pm$0.0673 &   0.0397$\pm$0.00427 \\
      &    &     & OFW &     0.018$\pm$0.00374 &        0.406$\pm$0.106 &      0.2$\pm$0.00776 \\
\cline{3-7}
      &    & \multirow{2}{*}{inf} & NoisySGD &    0.0209$\pm$0.00148 &       0.947$\pm$0.0672 &        125$\pm$0.345 \\
      &    &     & OFW &     0.067$\pm$0.00799 &         0.82$\pm$0.114 &     0.164$\pm$0.0098 \\
\cline{1-7}
\cline{2-7}
\cline{3-7}
\multirow{12}{*}{5000} & \multirow{4}{*}{5} & \multirow{2}{*}{1.5} & NoisySFW &     0.0587$\pm$0.0212 &         0.35$\pm$0.135 &    0.0979$\pm$0.0143 \\
      &    &     & OFW &  0.00258$\pm$1.58e-05 &   0.000702$\pm$0.00051 &    0.469$\pm$0.00995 \\
\cline{3-7}
      &    & \multirow{2}{*}{inf} & NoisySGD &  0.00538$\pm$0.000982 &    0.00989$\pm$0.00378 &         742$\pm$3.35 \\
      &    &     & OFW &  0.00667$\pm$0.000392 &     0.0145$\pm$0.00153 &     0.397$\pm$0.0291 \\
\cline{2-7}
\cline{3-7}
      & \multirow{4}{*}{10} & \multirow{2}{*}{1.5} & NoisySFW &    0.0659$\pm$0.00926 &        0.808$\pm$0.115 &   0.0867$\pm$0.00326 \\
      &    &     & OFW &  0.00376$\pm$0.000479 &      0.0163$\pm$0.0053 &     0.477$\pm$0.0189 \\
\cline{3-7}
      &    & \multirow{2}{*}{inf} & NoisySGD &    0.0201$\pm$0.00175 &       0.115$\pm$0.0118 &          747$\pm$2.4 \\
      &    &     & OFW &     0.022$\pm$0.00347 &       0.125$\pm$0.0204 &      0.38$\pm$0.0117 \\
\cline{2-7}
\cline{3-7}
      & \multirow{4}{*}{20} & \multirow{2}{*}{1.5} & NoisySFW &   0.0401$\pm$0.000848 &           1$\pm$0.0264 &    0.0834$\pm$0.0121 \\
      &    &     & OFW &   0.00962$\pm$0.00209 &       0.185$\pm$0.0558 &     0.527$\pm$0.0121 \\
\cline{3-7}
      &    & \multirow{2}{*}{inf} & NoisySGD &     0.049$\pm$0.00585 &       0.602$\pm$0.0571 &         755$\pm$2.89 \\
      &    &     & OFW &    0.0535$\pm$0.00736 &        0.637$\pm$0.105 &      0.413$\pm$0.004 \\
\cline{1-7}
\cline{2-7}
\cline{3-7}
\multirow{11}{*}{10000} & \multirow{3}{*}{5} & \multirow{2}{*}{1.5} & NoisySFW &      0.0607$\pm$0.027 &        0.351$\pm$0.161 &    0.163$\pm$0.00869 \\
      &    &     & OFW &  0.00255$\pm$4.72e-05 &  0.000318$\pm$0.000179 &      1.04$\pm$0.0295 \\
\cline{3-7}
      &    & inf & OFW &  0.00337$\pm$0.000336 &    0.00293$\pm$0.00106 &      0.76$\pm$0.0157 \\
\cline{2-7}
      & \multirow{4}{*}{10} & \multirow{2}{*}{1.5} & NoisySFW &    0.0646$\pm$0.00522 &       0.789$\pm$0.0744 &     0.175$\pm$0.0253 \\
      &    &     & OFW &   0.00282$\pm$0.00025 &    0.00465$\pm$0.00184 &      1.07$\pm$0.0987 \\
\cline{3-7}
      &    & \multirow{2}{*}{inf} & NoisySGD &   0.0103$\pm$0.000748 &     0.0505$\pm$0.00514 &     3.03e+03$\pm$8.8 \\
      &    &     & OFW &   0.00976$\pm$0.00259 &      0.0467$\pm$0.0159 &      0.803$\pm$0.013 \\
\cline{2-7}
\cline{3-7}
      & \multirow{4}{*}{20} & \multirow{2}{*}{1.5} & NoisySFW &    0.0391$\pm$0.00178 &       0.937$\pm$0.0303 &     0.153$\pm$0.0194 \\
      &    &     & OFW &  0.00487$\pm$0.000584 &      0.0592$\pm$0.0155 &       1.04$\pm$0.102 \\
\cline{3-7}
      &    & \multirow{2}{*}{inf} & NoisySGD &    0.0409$\pm$0.00362 &       0.482$\pm$0.0453 &    3.12e+03$\pm$27.6 \\
      &    &     & OFW &    0.0316$\pm$0.00192 &       0.363$\pm$0.0283 &     0.844$\pm$0.0464 \\
\bottomrule
\end{tabular}}
\end{table}

\begin{figure}[htbp]
     \centering
     \subfigure[NoisySFW under $p=1.5$]{\includegraphics[width=0.45\textwidth]{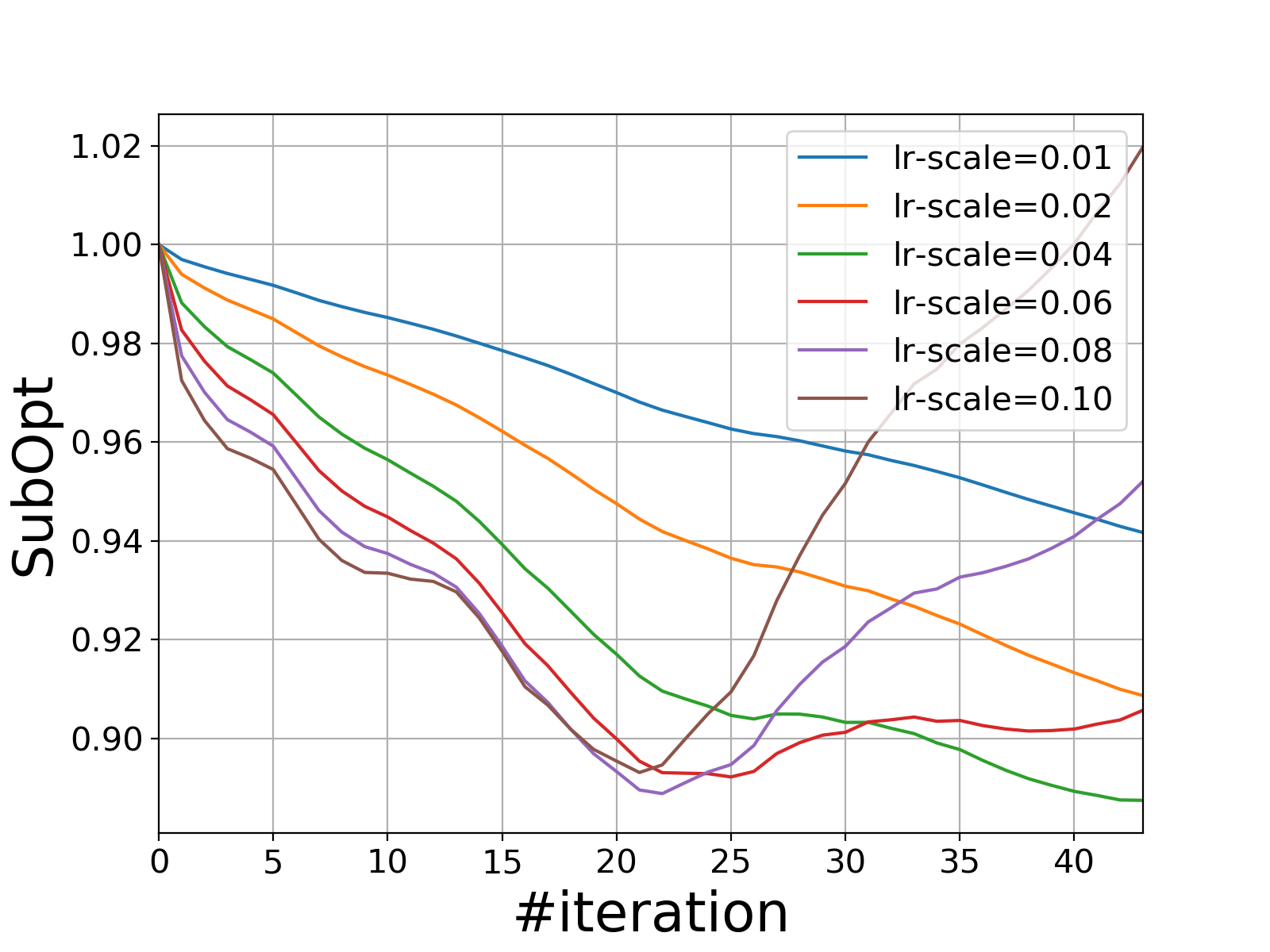}}
     \subfigure[DP-TOFW under $p=1.5$]{\includegraphics[width=0.45\textwidth]{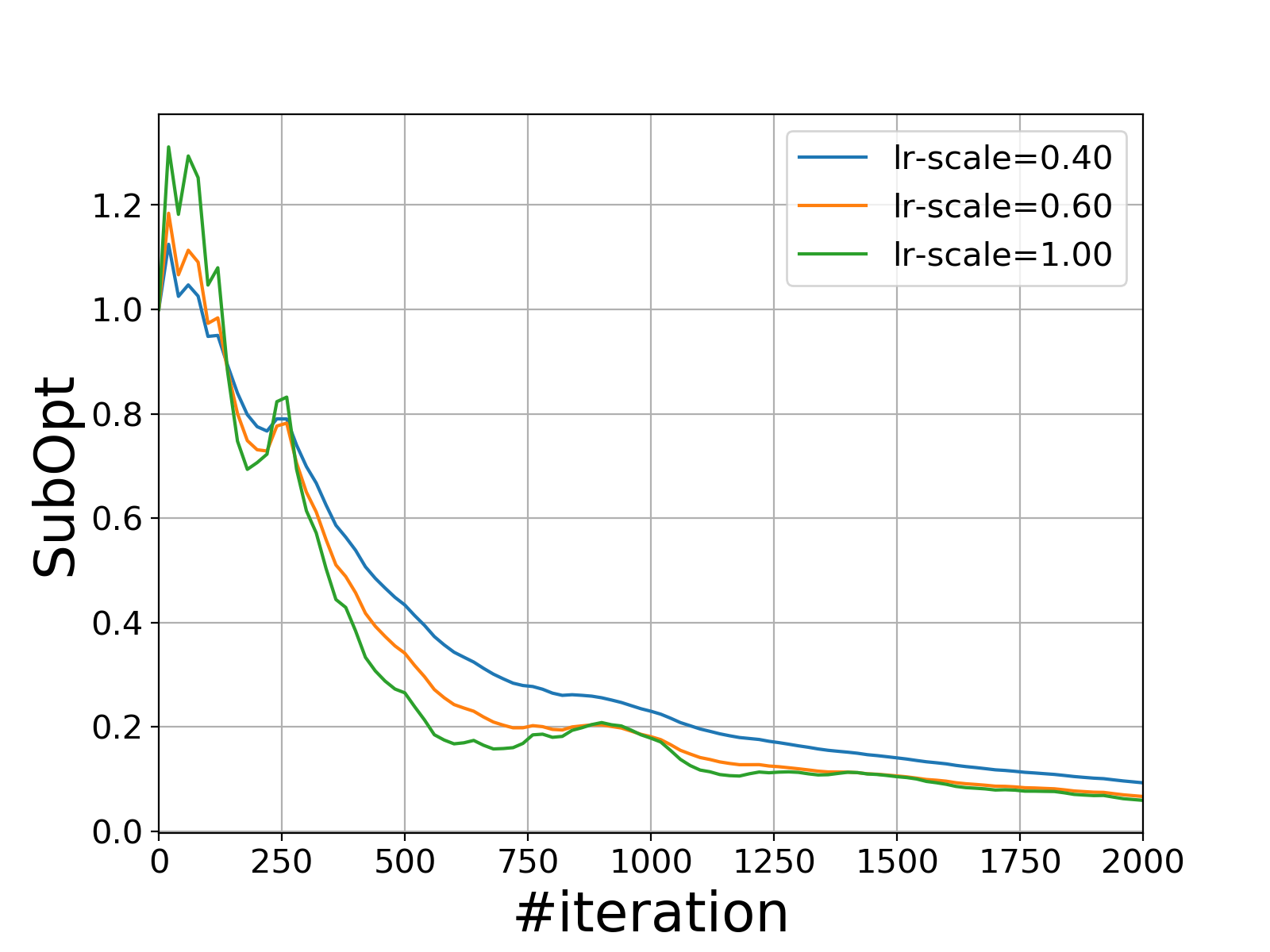}} 
     \subfigure[NoisySGD under $p=\infty$]{\includegraphics[width=0.45\textwidth]{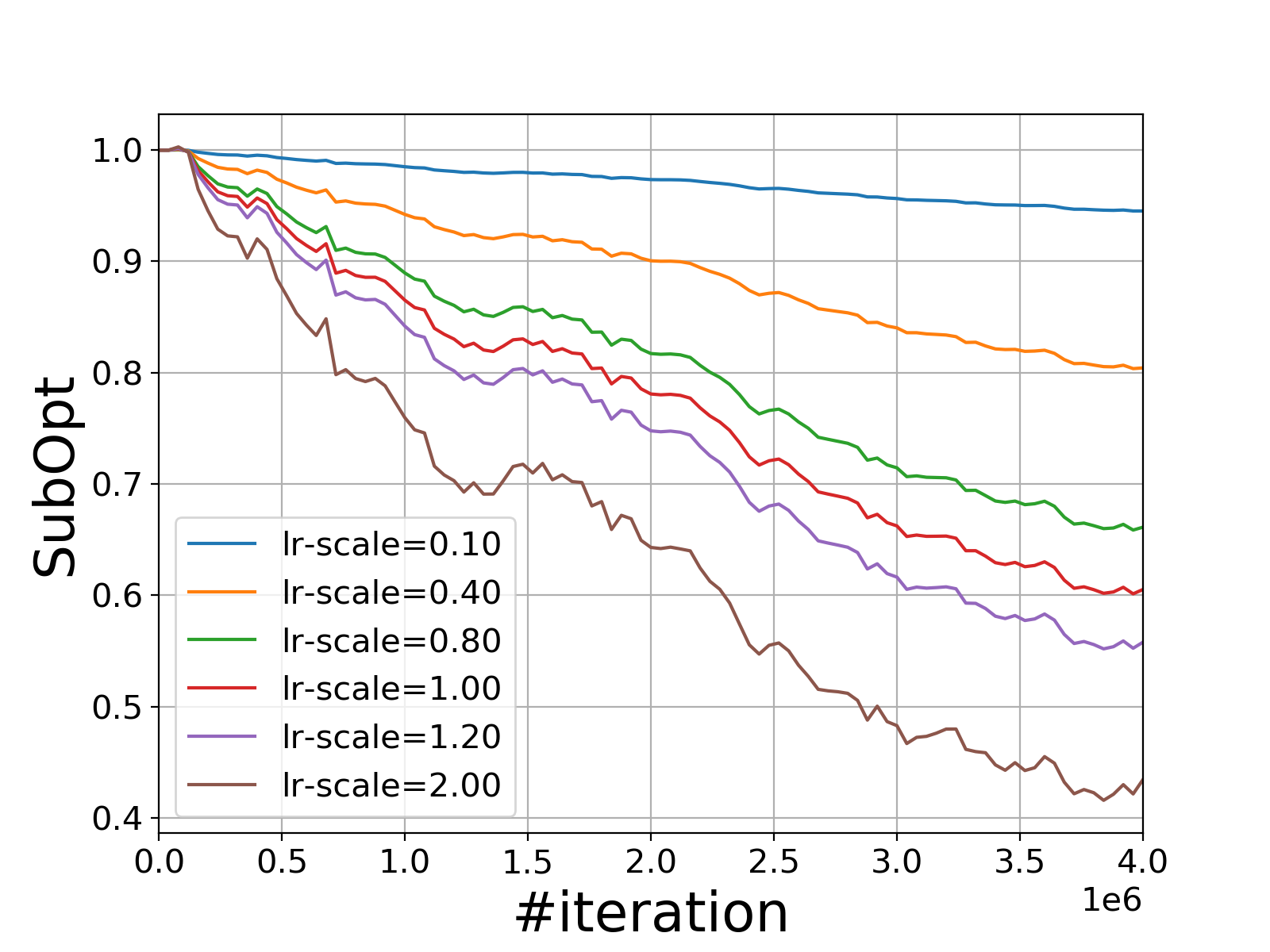}}
     \subfigure[DP-TOFW under $p=\infty$]{\includegraphics[width=0.45\textwidth]{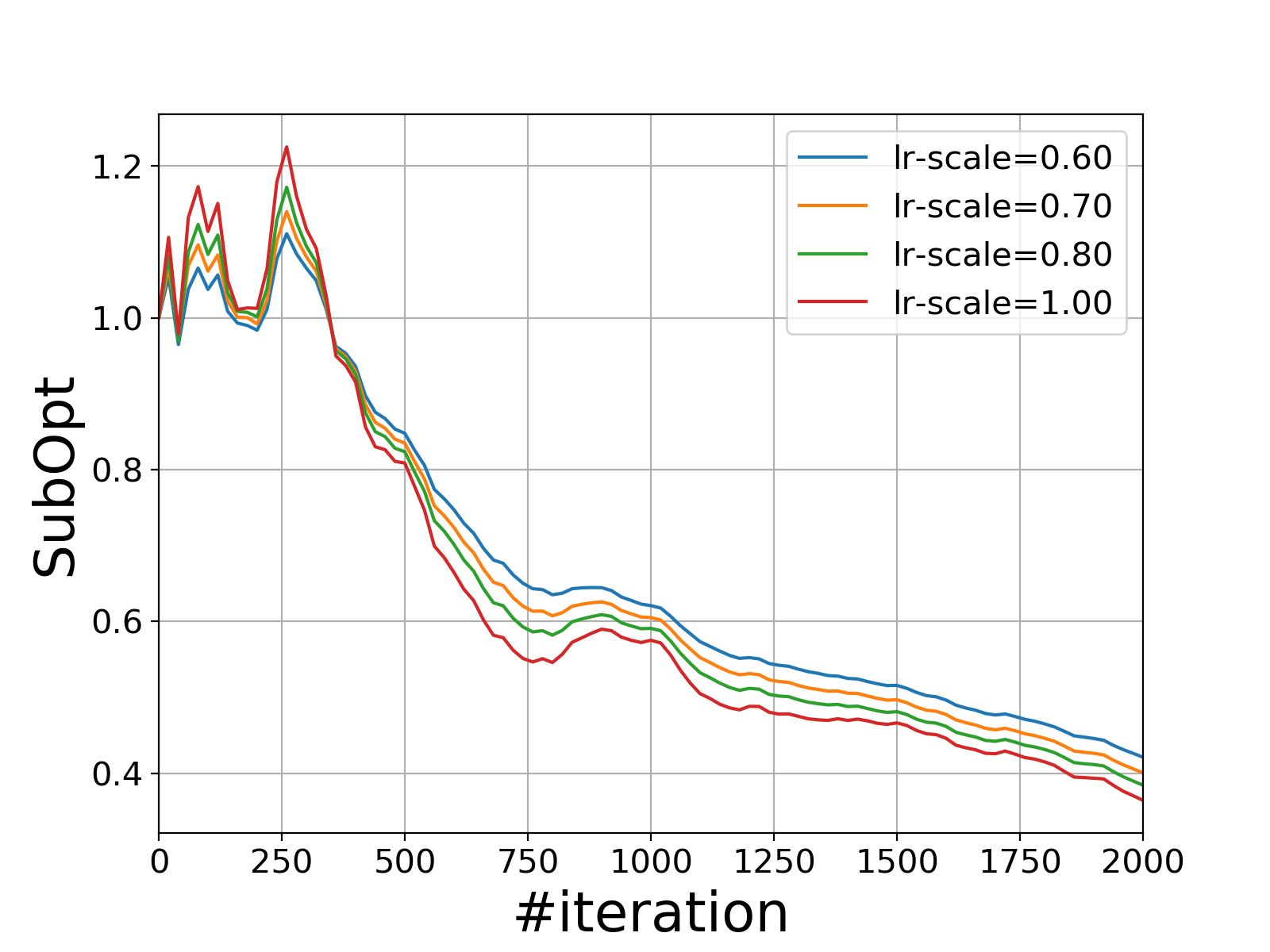}}

     \caption{Training curves of four algorithms with different learning rate scalings. Here we set $T=2000$, $d=10$. And they are all under $(1, 1/T)$-DP.}\label{fig-lr-scale}
\end{figure}

\begin{figure}[htbp]
     \centering
     \subfigure[$T=1000$ and  $d=5$]{\includegraphics[width=0.45\textwidth]{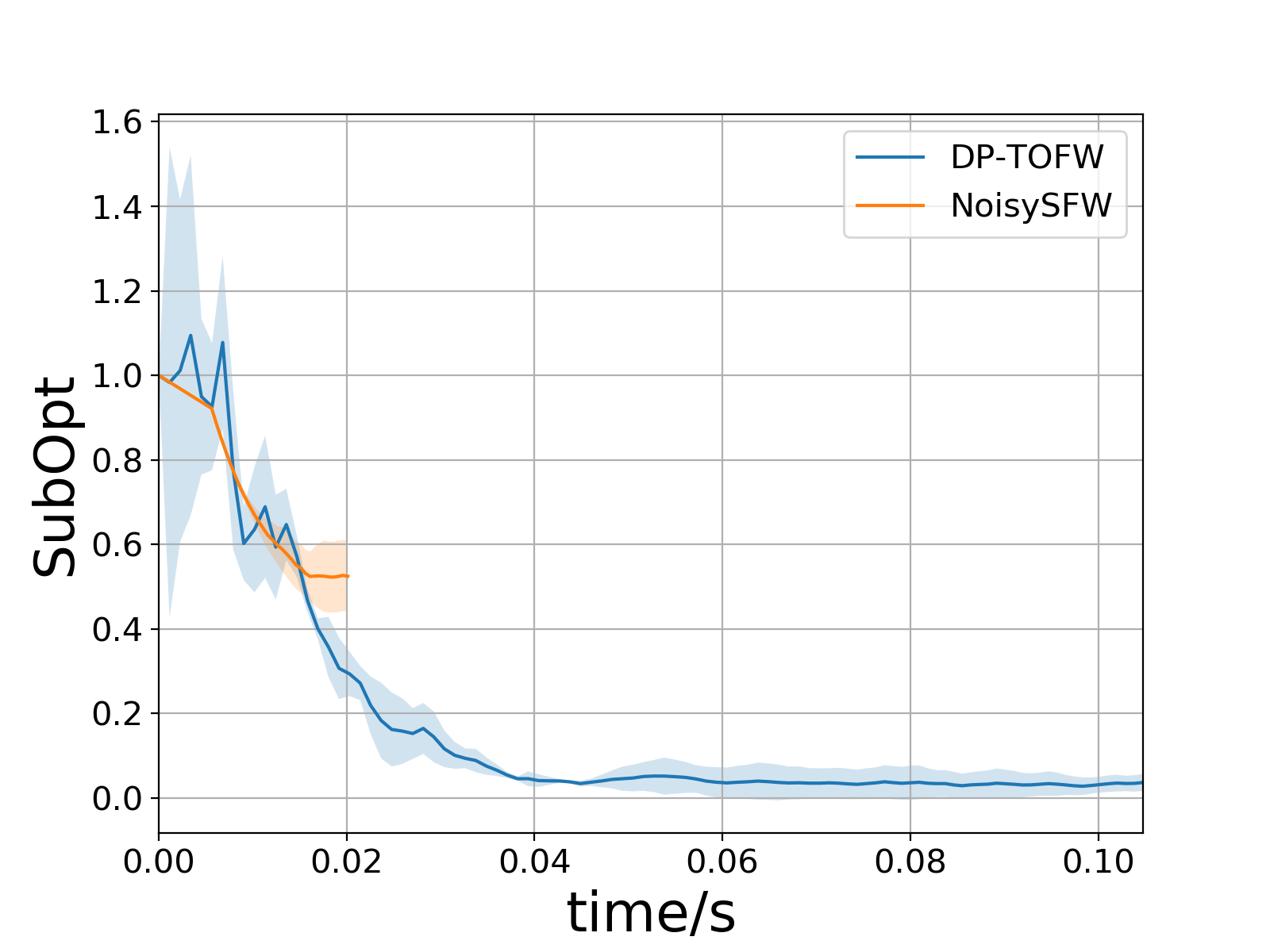}}
     \subfigure[$T=2000$ and $d=10$]{\includegraphics[width=0.45\textwidth]{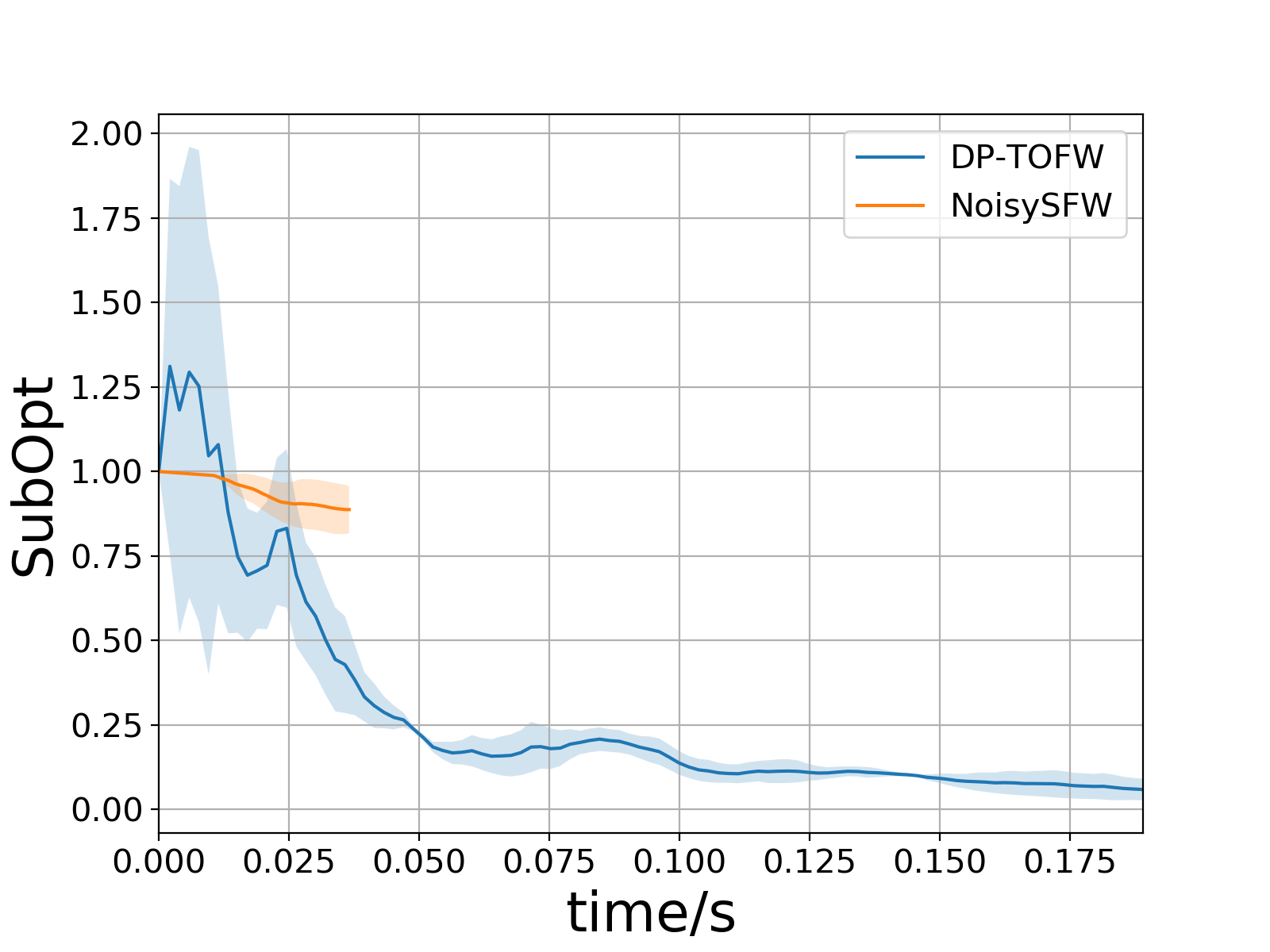}}

     \caption{Comparison of SubOpt and wall-clock time between DP-TOFW and NoisySFW. The shadows indicate $\pm 1\times$std across different random seeds. Here we select the best learning rate scaling for both algorithms, according to the results in Figure \ref{fig-lr-scale}.} \label{fig-time-subopt}
\end{figure}

\section{Conclusions}
In this paper, we present a new framework for the online convex optimization in $\ell_p$ geometry and high dimensional decision making with differential privacy guarantee. 
Our framework can continually release the solutions in a fully-online update manner while still maintain 
privacy protection for the whole time horizon.
Besides the privacy guarantee, our algorithm achieves in linear time the optimal rates when $1<p\leq 2$ and the state-of-the-art rates that matches the non-private lower bound when $2<p\leq \infty$.
The flexibility to extend to $p=1$ case and the novel exploitation of the recursive gradient estimator in our algorithm also allow us to design the first high dimensional bandits algorithm satisfying DP requirements with sub-linear regret. The efficacy of the proposed algorithms are demonstrated by comparative experiments with various popular DP-SCO and DP-Bandit algorithms.

\appendix
\section{ Proofs of Section~\ref{sec:dp-sco}}
 
\begin{algorithm}[ht]
\caption{Private Tree based aggregation protocol \cite{guha2013nearly}}
\label{alg-tree-based}
\begin{algorithmic}[1]
    \STATE {\bfseries Input:} $\langle z_1,z_2, .., z_n \in \mathbb{R}^d \rangle$ (in an online sequence), noise level $\sigma_+(q,\varepsilon,\delta)$
    
    \STATE {\bfseries Initialization}: Define a binary tree $A$ of size $2^{\lceil \log_2  n \rceil  + 1} - 1$ with leaves $z_1, z_2,..., z_n$.
    
    \STATE {\bfseries Online Phase:} At each iteration $t\in[n]$, execute Steps \ref{step-start} till \ref{step-partial-sum-end}
    
    \STATE Accept $z_t$ from the data stream. \label{step-start} 
    
    \STATE Let $\textit{path}=\{z_t \rightarrow \dots \rightarrow \text{root} \}$ be the path from $z_t$ to the root.
    
    \STATE {\bfseries Tree update:} Step \ref{tree-update-start} till \ref{tree-update-end}
    
    \STATE $\Lambda \leftarrow $ First node in \textit{path} that is left child in the tree. Let $\textit{path}_\Lambda = \{z_t \rightarrow \dots \rightarrow \Lambda \}$. \label{tree-update-start}

    \FOR {$\alpha$ {\bfseries in} \textit{path}}
    \STATE $\alpha \leftarrow \alpha + z_t$
    \STATE {\bfseries If} $\alpha \in \text{path}_\Lambda$, \textbf{then} $\alpha \leftarrow \alpha + \textbf{n}$ where $\textbf{n} \sim \mathcal{G}_{\|\cdot\|_{q,+}}(0,\sigma_+^2)$.
    
    \ENDFOR \label{tree-update-end}
    
    \STATE {\bfseries Output Private Partial Sum:} Step \ref{step-partial-sum-start} till \ref{step-partial-sum-end}
    
    \STATE Initial Vector $v\in\mathbb{R}^d$ to zero. Let $b\leftarrow \lceil \log_2 n \rceil +1 $-bit binary representation of $t$. \label{step-partial-sum-start}
    
    \FOR {{\bfseries all} $i$ {\bfseries in} $[\lceil \log_2 n \rceil + 1]$}
    
    \IF {bit $b_i=1$}
    
    \IF {$i$-th node in \textit{path} (denoted by $\textit{\text{path}}(i)$) is a left child in $A$} 
    \STATE $v\leftarrow v+ \textit{\text{path}}(i)$,
    \ELSE 
    \STATE $v\leftarrow v+ \text{left sibling}\big(\textit{\text{path}}(i)\big)$.
    \ENDIF
    \ENDIF
    \ENDFOR
    
    \RETURN The noisy partial sum $v$. \label{step-partial-sum-end}
    
\end{algorithmic}
\end{algorithm}

\subsection{Proof of Theorem~\ref{thm-privacy-lp} }\label{proof-privacy-lp}

\begin{proof}[Proof of Theorem~\ref{thm-privacy-lp}]

We expend $d_t$ as follow
\begin{equation}\label{eq-expand-dt}
    \begin{aligned}
        d_t 
        &= \nabla f(\theta_t, x_t) + (1-\rho_t) (d_{t-1} - \nabla f (\theta_{t-1}, x_t)) \\
        &= \sum_{i=1}^t \bigg( \prod_{k=i+1}^t (1-\rho_k) \nabla f(\theta_i, x_i) - \prod_{k=i}^t (1-\rho_k) \nabla f(\theta_{i-1}, x_i)\bigg) \\
        &= \frac{1}{t+1}\sum_{i=1}^t \Big( (i+1) \nabla f(\theta_i, x_i) - i \nabla f(\theta_{i-1}, x_i)\Big),
    \end{aligned}
\end{equation}
where the last inequality is due to the fact that $\rho_t = \frac{1}{t+1}$.
If we consider the tree based mechanism in Algorithm  \ref{alg-tree-based}, each sample $x_i$ is involved in at most $\lceil \log_2 n\rceil + 1$ nodes in the tree. And all partial summations can also be determined by at most $\lceil \log_2 n \rceil$ nodes. The privacy analysis of the partial sum now reduces to the privacy analysis of the tree. 

Suppose adjacent datasets $\dataset$ and $\dataset'$ differ by sample $x_i$ and $x_i'$, then for any sets $B = (B_1, B_2, ..., B_{2^{\lceil\log_2 n \rceil +1}-1})$ corresponding to the post-order traversal of the binary tree, it suffices to prove that 
\begin{equation*}
\begin{aligned}
    & \mathbb{P}(A_1(\dataset )\in B_1, ..., A_{2^{\lceil \log_2 n \rceil+1}-1}(\dataset) \in B_{2^{\lceil \log_2 n\rceil +1}-1}) \\
    & \quad \leq e^\varepsilon \mathbb{P}(A_1(\dataset')\in B_1, ..., A_{2^{\lceil \log_2 n \rceil +1}-1}(\dataset') \in B_{2^{\lceil \log_2 n \rceil+1} -1}) + \delta.
\end{aligned}
\end{equation*}
Here $2^{\lceil{\log_2 n\rceil}+1}-1$ is the maximum number of nodes (including root and leaves) in a tree with $\lceil{\log_2 n\rceil}+1$ levels. For node $A_m$ including $x_i$, suppose that it stores the summation $\sum_{j=k}^l \big((j+1)\nabla f(\theta_j, x_j) - j \nabla f(\theta_{j-1}, x_{j})\big)$, we have then conditioned on $A_1(\dataset)=A_1(\dataset'),...,A_{m-1}(\dataset)=A_{m-1}(\dataset')$, $\theta_j(\dataset)=\theta_j(\dataset')=\theta_j, \forall j\leq l$. Thus the difference between $x_i$ and $x_i'$ will cause the difference between
\begin{equation*}
    (i+1)\nabla f(\theta_i, x_i) - i \nabla f(\theta_{i-1}, x_i) \quad \text{and} \quad (i+1)\nabla f(\theta_i, x_i') - i \nabla f(\theta_{i-1}, x_{i}').
\end{equation*}
which has $\ell_q$ sensitivity $2(\smooth \diam + \lip)$ because
\begin{equation*}
    \begin{aligned}
         &\quad \| \big((i+1)\nabla f(\theta_i, x_i) - i \nabla f(\theta_{i-1}, x_{i})\big) - \big((i+1)\nabla f(\theta_i, x_i') - i \nabla f(\theta_{i-1}, x_{i}')\big) \|_q \\
        &\leq 2i\smooth\|\theta_i - \theta_{i-1}\|_p + \|\nabla f(\theta_i, x_i) - \nabla f(\theta_i,x_i') \|_q  \\
        &\leq 2(\smooth \diam+ \lip).
    \end{aligned}
\end{equation*}
According to the above sensitivity, and using the fact that $\|\cdot\|_{q,+}$ is $\kappa_{q,+}$-smooth, we can now apply the generalized Gaussian in Lemma~\ref{lemma-general-gaussian}. We add noise $\mathcal{G}_{\|\cdot\|_+}(0,8(\lceil\log_2 n\rceil +1)^2 \kappa_q \log((\lceil \log_2 n\rceil + 1)/\delta)(\smooth \diam +\lip)^2/\varepsilon^2)$ independently to each node to ensure that each node is $(\varepsilon/(\lceil \log_2 n \rceil +1), \delta/(\lceil \log_2 n \rceil +1))$-differentially private. 

We recall that each sample $x_i$ is involved in at most $\lceil \log_2 n\rceil+1$ nodes in the tree. We denote the path from $x_i$ to the root of the tree as $\textit{path}_i$, where $|\textit{path}_i|\leq \lceil \log_2 n \rceil+1$. And here we use $p$ to denote the density of $(A_1(\dataset), ..., A_{2^{\lceil \log_2 n \rceil+1}-1}(\dataset))$ and $p'$ for its counterpart regarding dataset $\dataset'$. Then for any $B = (B_1, B_2, ..., B_{2^{\lceil\log_2 n \rceil +1}-1})$, we have
\begin{equation*}
    \begin{aligned}
        &\quad \mathbb{P}(A_1(\dataset)\in B_1, ..., A_{2^{\lceil \log_2 n\rceil+1}-1}(\dataset) = B_{2^{\lceil \log_2 n\rceil+1}-1}(\dataset)) \\
        &= \int_{B_1 \times, ..., \times B_{2^{\lceil \log_2 n\rceil+1}-1}} p(a_1, ..., a_{2^{\lceil \log_2 n\rceil+1}-1}) d a_1 ...da_{2^{\lceil \log_2 n\rceil+1}-1} \\
        &= \int \prod_{m\in\textit{path}_i}p(a_m| a_1, ...,a_{m-1}) \cdot \prod_{m\notin\textit{path}_i}p(a_m| a_1, ...,a_{m-1})  d a_1 ...da_{2^{\lceil \log_2 n\rceil+1}-1}.
    \end{aligned}
\end{equation*}
Notice that for any $m\notin \textit{path}_i$, $p(a_m|a_1,...,a_{m-1}) = p'(a_m|a_1,...,a_{m-1})$. For $m\in \textit{path}_i$,
\begin{equation*}
    \begin{aligned}
        \int_{B_m} p(a_m|a_1,...,a_{m-1}) da_m 
        &= \int_{B_m} p(a_m|a_1,...,a_{m-1}) - 1\land e^{\varepsilon/(\lceil \log_2 n \rceil+1)} p'(a_m|a_1,...,a_{m-1}) d a_m +\dots  \\
        &\quad + \int_{B_m} 1\land e^{\varepsilon/(\lceil \log_2 n \rceil+1)} p'(a_m|a_1,...,a_{m-1})  d a_m \\
        &\leq \delta/(\lceil \log_2 n \rceil+1) + \int_{B_m} 1\land e^{\varepsilon/(\lceil \log_2 n \rceil+1)} p'(a_m|a_1,...,a_{m-1})  d a_m.
    \end{aligned}
\end{equation*}
Applying the above inequality to any node in $\textit{path}_i$, we have
\begin{equation*}
\begin{aligned}
    &\quad \int \prod_{m\in\textit{path}_i}p(a_m| a_1, ...,a_{m-1}) \cdot \prod_{m\notin\textit{path}_i}p(a_m| a_1, ...,a_{m-1})  d a_1 ...da_{2^{\lceil \log_2  n\rceil+1}-1} \\
    &\leq e^\varepsilon \int \prod_{m\in\textit{path}_i}p'(a_m| a_1, ...,a_{m-1}) \cdot \prod_{m\notin\textit{path}_i}p'(a_m| a_1, ...,a_{m-1})  d a_1 ...da_{2^{\lceil \log_2 n\rceil+1}-1} +\delta \\
    &= e^\varepsilon \mathbb{P}(A_1(\dataset')\in B_1, ..., A_{2^{\lceil \log_2 n\rceil+1}-1}(\dataset') = B_{2^{\lceil \log_2 n\rceil+1}-1}(\dataset')) +\delta,
\end{aligned}
\end{equation*}
which concludes the proof.
\end{proof}

\subsection{Proof of Lemma~\ref{lemma-gamma}}\label{proof-lemma-gamma}

\begin{proof}[Proof of Lemma~\ref{lemma-gamma}]
Since each $Z_{j}$ are i.i.d. $  \mathcal{G}_{\lVert \cdot \rVert_+}(0,\sigma_+^2 )$, we have 
\begin{align*}
     \mathbb{P}(\lVert Z_j\rVert_{+}^2 >\lambda)  & = C(\sigma_+,d)\text{Area}\{ \lVert x\rVert_{+} = 1\}\int_{ r^2 > \lambda }r^{d-1} \exp( - \dfrac{r^2}{2\sigma_+^2} ) dr\\
    & =\dfrac{1}{2} C(\sigma_+,d)\text{Area}\{ \lVert x\rVert_{+} = 1\} \int_{ r>{\lambda} } r^{d/2-1} \exp(- \frac{r}{2\sigma_+^2})   dr.
\end{align*}
By 
\begin{align*}
    C(\sigma_+,d)\text{Area}\{ \lVert x\rVert_{+} = 1\} &= \dfrac{1}{(2\sigma_+^2)^{d/2}\cdot \Gamma(d/2)/2},
\end{align*}
we know that the tail of $\|Z_j\|_{+}^2$ is exactly the tail of $\Gamma(d/2,2\sigma_+^2)$ at $\lambda$, which means $\|Z_j\|_{+}^2$ follows $\Gamma(d/2,2\sigma_+^2)$. Thus $\lVert Z_{j}\rVert_{+}^2 - \E[\lVert Z_{j}\rVert_{+}^2]$ is subGamma$(2\sigma_+^4 d, 2\sigma_+^2 )$ , then the standard tail bound of sub-Gamma distribution implies 
\begin{align}\label{eq-gamma-tail}
    P(\lVert Z_j\rVert_{+}^2 > \E[\lVert Z_j\rVert_{+}^2] + 2\sqrt{\sigma_+^4 d\lambda  }+    2\sigma_+^2\lambda   ) \leq  \exp(-\lambda) 
\end{align}
\end{proof}

\subsection{Proof of Proposition~\ref{prop-lp-variance-reduction}}\label{proof-prop-lp-grad-error}

\begin{proposition}[Azuma-Hoeffding inequality in regular space]\label{prop: regular hoeffding}
Given the $\kappa$-smooth norm $\lVert\cdot\rVert$ and a vector-valued martingale difference sequence $ \mathbf{d}_t $ with respect to $\{\mathcal{F}_t\}_t$, we have if \begin{align}
    \mathbb{E}[\exp(\lVert \mathbf{d}_t\rVert^2/\sigma_t^2 ) |\mathcal{F}_{t-1}]\leq \exp(1) ,\quad \forall t,
\end{align} 
then \begin{align*}
    \mathbb{P}\bigg( \Big\lVert \sum_{i=1}^t \mathbf{d}_i\Big\rVert \geq (\sqrt{2e\kappa}+\sqrt{2}\lambda ) \Big(\sum_{i = 1}^t \sigma_i^2\Big)^{1/2}\bigg)\leq 2\exp(- \lambda^2/64 ).
\end{align*}
\end{proposition}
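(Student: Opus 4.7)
The plan is to split $\|S_t\|$ into its expectation and a mean-zero fluctuation, $\|S_t\|=\mathbb{E}\|S_t\|+(\|S_t\|-\mathbb{E}\|S_t\|)$, and to control each piece with a different tool: an $L^2$ moment bound based on smoothness of the squared norm for the mean part, and a scalar sub-Gaussian Azuma-Hoeffding inequality applied to the Doob martingale of $\|S_t\|$ for the fluctuation part. The resulting two bounds, of order $\sqrt{\kappa V}$ and $\lambda\sqrt{V}$ respectively with $V:=\sum_i \sigma_i^2$, add up to the form claimed.

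For the mean, set $\phi(x):=\tfrac12\|x\|^2$; by the $\kappa$-smoothness hypothesis one has the quadratic upper bound
\begin{equation*}
\phi(x+y)\;\le\;\phi(x)+\langle\nabla\phi(x),y\rangle+\tfrac{\kappa}{2}\|y\|^2.
\end{equation*}
Substituting $x=S_{s}$, $y=\mathbf{d}_{s+1}$ and taking $\mathbb{E}[\cdot\mid\mathcal{F}_{s}]$, the linear term vanishes by the martingale property, leaving $\mathbb{E}[\|S_{s+1}\|^2\mid\mathcal{F}_s]\le\|S_s\|^2+\kappa\,\mathbb{E}[\|\mathbf{d}_{s+1}\|^2\mid\mathcal{F}_s]$. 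The exponential moment bound $\mathbb{E}[\exp(\|\mathbf{d}_s\|^2/\sigma_s^2)\mid\mathcal{F}_{s-1}]\le e$ implies, via the elementary inequality $u\le e^u-1$, that $\mathbb{E}[\|\mathbf{d}_s\|^2\mid\mathcal{F}_{s-1}]\le e\,\sigma_s^2$. Telescoping yields $\mathbb{E}\|S_t\|^2\le e\kappa V$, and Jensen's inequality then gives $\mathbb{E}\|S_t\|\le \sqrt{e\kappa V}\le \sqrt{2e\kappa V}$.

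For the fluctuation, consider the Doob martingale $M_s:=\mathbb{E}[\|S_t\|\mid\mathcal{F}_s]$, so that $M_0=\mathbb{E}\|S_t\|$ and $M_t=\|S_t\|$. A standard ghost-sample argument (replacing $\mathbf{d}_s$ by an independent copy $\mathbf{d}_s'$ and using the $1$-Lipschitz property of $\|\cdot\|$ to get $|\|u+\mathbf{d}_s\|-\|u+\mathbf{d}_s'\||\le\|\mathbf{d}_s-\mathbf{d}_s'\|$) shows that each martingale difference $M_s-M_{s-1}$ is sub-Gaussian conditionally on $\mathcal{F}_{s-1}$ with a variance proxy proportional to $\sigma_s^2$. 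The scalar sub-Gaussian Azuma-Hoeffding inequality then produces
\begin{equation*}
\mathbb{P}\bigl(|M_t-M_0|\ge r\bigr)\;\le\;2\exp\!\bigl(-c\,r^2/V\bigr)
\end{equation*}
for an absolute constant $c>0$. Taking $r=\sqrt{2}\,\lambda\sqrt{V}$ and combining with the mean bound via the triangle inequality delivers the stated estimate, where the constant $1/64$ absorbs the looseness introduced by the symmetrization and the Azuma constant.

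The principal obstacle is bookkeeping of absolute constants: one must carefully convert the exponential-moment hypothesis into a sharp enough sub-Gaussian proxy for each scalar increment $M_s-M_{s-1}$, and then track the constant through the scalar Azuma step. These constants are crude but dimension-free and independent of $t$, which is all the downstream variance-reduction argument (Proposition~\ref{prop-lp-variance-reduction}) requires; an additional subtlety is that the ghost-sample construction must respect the possible predictability of $\sigma_s$ with respect to $\mathcal{F}_{s-1}$, but this is routine because the conditional exponential moment bound is itself stated $\mathcal{F}_{s-1}$-conditionally.
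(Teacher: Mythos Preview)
The paper does not actually prove this proposition; it is stated without proof in the appendix and then invoked as a black box in the proof of the gradient-error bound. It is essentially the vector-valued Azuma inequality in $2$-smooth spaces due to Pinelis and, in the form with sub-Gaussian increments, Juditsky--Nemirovski. So there is no paper proof to compare against; the question is whether your sketch stands on its own.

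Your mean bound is fine: the $\kappa$-smoothness of $\tfrac12\|\cdot\|^2$ plus telescoping gives $\mathbb{E}\|S_t\|^2\le e\kappa V$, hence $\mathbb{E}\|S_t\|\le\sqrt{e\kappa V}$.

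The fluctuation argument has a genuine gap. The ghost-sample step you describe is the McDiarmid mechanism, and it relies on the coordinates being \emph{independent}: you need that, after replacing $\mathbf d_s$ by an independent copy $\mathbf d_s'$, the remaining summands are unchanged, so that the Lipschitz bound $|\|u+\mathbf d_s\|-\|u+\mathbf d_s'\||\le\|\mathbf d_s-\mathbf d_s'\|$ applies with a \emph{fixed} $u$. For a general martingale difference sequence this fails: the conditional law of $\mathbf d_{s+1},\dots,\mathbf d_t$ given $\mathcal F_s$ depends on $\mathbf d_s$, so ``$u=S_{s-1}+\sum_{i>s}\mathbf d_i$'' moves when you swap $\mathbf d_s$ for $\mathbf d_s'$. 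Consequently the claim that the Doob increment $M_s-M_{s-1}$ is conditionally sub-Gaussian with proxy $O(\sigma_s^2)$ is not justified by what you wrote. (In the paper's application the $\zeta_{t,\tau}$ are genuine martingale differences, not independent, so you cannot retreat to the i.i.d.\ case.)

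Two standard repairs are available. The first, closest to your outline, is to abandon the Doob martingale of $\|S_t\|$ and work instead with the forward increments $\|S_s\|-\|S_{s-1}\|$, which satisfy $|\|S_s\|-\|S_{s-1}\||\le\|\mathbf d_s\|$ pointwise; centering these and applying a scalar sub-Gaussian Azuma bound handles the fluctuation, while the accumulated drift $\sum_s\mathbb{E}[\|S_s\|-\|S_{s-1}\|\mid\mathcal F_{s-1}]$ must be controlled separately (this is where smoothness reenters and the bookkeeping becomes nontrivial). The second, and cleaner, approach is the one in the source literature: show directly that $\exp\!\big(\|S_s\|^2/(2\kappa V)\big)$ (or $\cosh(\lambda\|S_s\|)$ times a compensator) is a supermartingale using the smoothness inequality, then apply Markov. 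Either route yields the stated bound with absolute constants; your decomposition into mean plus fluctuation is morally right, but the Doob-martingale shortcut does not survive the dependence structure.
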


We provide the a detailed version of Proposition~\ref{prop-lp-variance-reduction} in the following proposition.
\begin{proposition}\label{prop-grad-error-lp-full}
We denote $\Delta_t = d_t -\nabla F(\theta_t)$. Assume Assumption~\ref{assumption-smooth} and \ref{assumption-grad-bound}, for $t\in[n]$, we have that with probability at least $1-\alpha$, Algorithm~\ref{alg-dp-sco-lp} will satisfies
\begin{align*}
    \lVert \Delta_t\rVert_{q} &\leq (\sqrt{2e \kappa_q}+8\sqrt{4\log(2/\alpha) })\frac{2(\smooth\diam + \gradvar  ) }{\sqrt{t+1}} + \dots\\
    &\quad +  \lceil\log_2 n\rceil \frac{\sigma_+}{t+1} \big( d+2\sqrt{d\log(2\lceil\log_2 n\rceil/\alpha)}+2d\log(2\lceil\log_2 n\rceil/\alpha)   \big)^{1/2}.
\end{align*}
\end{proposition}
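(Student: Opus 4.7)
\textbf{Proof plan for Proposition~\ref{prop-grad-error-lp-full}.}

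The plan is to split $\Delta_t = d_t - \nabla F(\theta_t)$ into a \emph{sampling error} part (arising from replacing $\nabla F$ with a one-sample estimate) and a \emph{privacy noise} part (arising from the tree-based mechanism), and to bound each with high probability before combining them by the triangle inequality.

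\textbf{Step 1: Reformulation of $\Delta_t$.} Using the identity \eqref{eq-dt-summation}, together with the telescoping identity $\nabla F(\theta_t)=\frac{1}{t+1}\sum_{i=1}^t\bigl((i+1)\nabla F(\theta_i)-i\nabla F(\theta_{i-1})\bigr)$, I would write
\begin{equation*}
  d_t-\nabla F(\theta_t)=\frac{1}{t+1}\sum_{i=1}^t \xi_i \;+\; \frac{1}{t+1}\sum_{k=1}^{N_t} \mathbf{n}_k,
\end{equation*}
where $\xi_i := (i+1)\bigl(\nabla f(\theta_i;x_i)-\nabla F(\theta_i)\bigr)-i\bigl(\nabla f(\theta_{i-1};x_i)-\nabla F(\theta_{i-1})\bigr)$ and $\{\mathbf{n}_k\}$ are the at most $N_t\le\lceil\log_2 n\rceil$ i.i.d.\ $\mathcal{G}_{\|\cdot\|_+}(0,\sigma_+^2)$ noises injected along the path from leaf $t$ to the root in Algorithm~\ref{alg-tree-based}.

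\textbf{Step 2: Bound the sampling part via a martingale argument.} With $\mathcal{F}_i=\sigma(x_1,\dots,x_i)$, the sequence $\{\xi_i\}$ is a martingale difference. The main calculation is to control $\|\xi_i\|_q$. Rewriting
\begin{equation*}
\xi_i = i\bigl[(\nabla f(\theta_i;x_i)-\nabla f(\theta_{i-1};x_i))-(\nabla F(\theta_i)-\nabla F(\theta_{i-1}))\bigr]+\bigl(\nabla f(\theta_i;x_i)-\nabla F(\theta_i)\bigr),
\end{equation*}
the smoothness assumption and the Frank-Wolfe update $\theta_i-\theta_{i-1}=\eta_{i-1}(v_{i-1}-\theta_{i-1})$ give $\|\nabla f(\theta_i;x_i)-\nabla f(\theta_{i-1};x_i)\|_q\le \beta\eta_{i-1}D=\beta D/i$, and the same bound for $\nabla F$. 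Combining this with Assumption~\ref{assumption-grad-bound} yields $\|\xi_i\|_q \le 2(\beta D+G)$ deterministically, so the Orlicz-type condition of Proposition~\ref{prop: regular hoeffding} holds with $\sigma_i^2 = 4(\beta D+G)^2$. Applying that proposition with $\lambda=8\sqrt{\log(2/\alpha)}$ and dividing by $t+1$ gives, with probability at least $1-\alpha/2$,
\begin{equation*}
  \Bigl\|\tfrac{1}{t+1}\sum_{i=1}^t \xi_i\Bigr\|_q \;\le\; \bigl(\sqrt{2e\kappa_q}+8\sqrt{4\log(2/\alpha)}\bigr)\,\frac{2(\beta D+G)}{\sqrt{t+1}}.
\end{equation*}

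\textbf{Step 3: Bound the privacy noise via Lemma~\ref{lemma-gamma} and a union bound.} For each injected noise $\mathbf{n}_k$, Lemma~\ref{lemma-gamma} gives $\|\mathbf{n}_k\|_+^2-d\sigma_+^2$ is sub-Gamma$(2\sigma_+^4 d,2\sigma_+^2)$, hence with probability at least $1-\exp(-\lambda)$,
\begin{equation*}
  \|\mathbf{n}_k\|_+ \le \sigma_+\,\bigl(d+2\sqrt{d\lambda}+2\lambda\bigr)^{1/2}.
\end{equation*}
Taking $\lambda=\log(2\lceil\log_2 n\rceil/\alpha)$ and applying a union bound over the at most $\lceil\log_2 n\rceil$ noises, then invoking the triangle inequality together with the regular-norm equivalence $\|\cdot\|_q\le\|\cdot\|_+$, yields (with probability at least $1-\alpha/2$)
\begin{equation*}
  \Bigl\|\tfrac{1}{t+1}\sum_{k=1}^{N_t}\mathbf{n}_k\Bigr\|_q \le \frac{\lceil\log_2 n\rceil\,\sigma_+}{t+1}\bigl(d+2\sqrt{d\log(2\lceil\log_2 n\rceil/\alpha)}+2d\log(2\lceil\log_2 n\rceil/\alpha)\bigr)^{1/2}.
\end{equation*}

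\textbf{Step 4: Combine.} Adding the two bounds via the triangle inequality and a final union bound over the two high-probability events yields the claim with probability at least $1-\alpha$.

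\textbf{Main obstacle.} The delicate step is the uniform bound $\|\xi_i\|_q\le 2(\beta D+G)$: it critically relies on the cancellation $i\cdot\eta_{i-1}=1$ enabled by the choice $\eta_t=1/(t+1)$, which is what tames the apparently growing weights $(i+1)$ and $i$ in the definition of $g_i$. Without this cancellation, a crude triangle inequality on $\xi_i$ would give a bound scaling like $i(\beta D+G)$, destroying the desired $1/\sqrt{t+1}$ rate. The remaining arguments are essentially applications of the stated Azuma-Hoeffding and the generalized Gaussian tail bounds.
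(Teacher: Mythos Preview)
Your proposal is correct and follows essentially the same approach as the paper. The paper obtains the identical martingale-difference decomposition by recursively unrolling the update $d_t=\nabla f(\theta_t,x_t)+(1-\rho_t)(d_{t-1}-\nabla f(\theta_{t-1},x_t))$ rather than via the summation identity \eqref{eq-dt-summation} plus telescoping, but the resulting terms satisfy $\frac{1}{t+1}\xi_\tau=\zeta_{t,\tau}$ and are bounded in exactly the same way (using the same cancellation $i\cdot\eta_{i-1}=1$), after which both proofs apply Proposition~\ref{prop: regular hoeffding} to the martingale part and Lemma~\ref{lemma-gamma} with a union bound over the $\lceil\log_2 n\rceil$ tree nodes to the noise part.
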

\begin{proof}
We first reformulate $\Delta_t = d_t - \nabla F(\theta_t)$ as the sum of a martingale difference sequence. We denote $M_t$ the set of node indices used when reporting $d_t$ and $Z$ the noise in the tree based mechanism in Algorithm~\ref{alg-tree-based} . For $t\geq 1$, we have
\begin{equation}
    \begin{aligned}
        \Delta_t  & =\dfrac{1}{1+t} \sum_{j\in M_{t} }  Z_j   +  \nabla f(\theta_t, x_t) + (1-\rho_t) (d_{t-1} - \nabla f(\theta_{t-1}, x_{t})) - \nabla F(\theta_t) \\
        &=\dfrac{1}{1+t}\sum_{j\in M_{t} }  Z_j+ (1-\rho_t) \Delta_{t-1} + \rho_t (\nabla f(\theta_t, x_t) - \nabla F(\theta_t)) + \dots \\
        &\quad + (1-\rho_t) \big(\nabla f(\theta_t, x_t) - \nabla f(\theta_{t-1}, x_t) - (\nabla F(\theta_t) - \nabla F(\theta_{t-1}))\big) \\
        &= \dfrac{1}{1+t}\sum_{j\in M_{t} }  Z_j+ \prod_{k=2}^t (1-\rho_k) \Delta_1
        + \sum_{\tau=2}^t \bigg( \rho_\tau \prod_{k=\tau+1}^t (1-\rho_k) \big( \nabla f(\theta_\tau, x_\tau) - \nabla F(\theta_\tau) \big)  +\dots \\
        &\quad + \prod_{k=\tau}^t (1-\rho_k)\big(\nabla f(\theta_\tau, x_\tau) - \nabla f(\theta_{\tau-1}, x_\tau) - (\nabla F(\theta_\tau) - \nabla F(\theta_{\tau-1})\big)\bigg) \\
        & \triangleq \frac{1}{t+1}\sum_{j\in M_{t} }  Z_j+  \zeta_{t, 1} + \sum_{\tau=2}^t \zeta_{t, \tau}
    \end{aligned}
\end{equation}
Recall that $\Delta_1 = \nabla f(\theta_1, x_1) - \nabla F(\theta_1)$. And we observe that $\mathbb{E}[\zeta_{t, \tau}| \mathcal{F}_{\tau-1}]=0$ where $\mathcal{F}_\tau$ is the $\sigma$-field generated by $\{x_1, x_2, ..., x_{\tau-1} \}$. Therefore, $\{\zeta_{t, \tau}\}_{\tau=1}^t$ is a martingale difference sequence. 
In what follows, we derive upper bounds of $\|\zeta_{t, \tau}\|_q$.  We start by observing that for any $\tau=1,2,...,t$, 
\begin{equation}
    \begin{aligned}
        \prod_{k=\tau}^t (1-\rho_k) 
        &= \prod_{k=\tau}^t(1- \frac{1}{k+1}) 
        = \prod_{k=\tau}^t \frac{k}{k+1} =\frac{\tau}{t+1}.
    \end{aligned}
\end{equation}
We can bound $\|\zeta_{t,1}\|_q$:
\begin{equation*}
    \begin{aligned}
        \|\zeta_{t, 1} \|_q \leq \frac{1}{t+1} \| \nabla f(\theta_1, x_1) - \nabla F(\theta_1)\|_q \leq \frac{\gradvar }{t+1} \triangleq c_{t,1},
    \end{aligned}
\end{equation*}
where the second inequality follows from Assumption~\ref{assumption-grad-bound}. For $\tau >1$, \begin{equation}\label{eq-gradient-error-lp-eta-term}
    \begin{aligned}
       \| \zeta_{t, \tau}\|_q  
       &\leq   \prod_{k=\tau}^t (1-\rho_k)\big(\| \nabla f(\theta_\tau, x_\tau) - \nabla f(\theta_{\tau-1}, x_\tau)\|_q + \|\nabla F(\theta_\tau) - \nabla F(\theta_{\tau-1})\|_q \big) + \dots \\
       &\quad + \rho_\tau \prod_{k=\tau+1}^t (1-\rho_k) \| \nabla f(\theta_\tau, x_\tau) - \nabla F(\theta_\tau)\|_q  \\
       &\leq 2\smooth \|\theta_\tau - \theta_{\tau-1} \|_p \prod_{k=\tau}^t (1-\rho_k) + \gradvar \rho_\tau \prod_{k=\tau+1}^t (1-\rho_k) \\
       &= 2\smooth \eta_{\tau-1} \| v_{\tau-1} - \theta_{\tau-1} \|_p \prod_{k=\tau}^{t} (1-\rho_k) + \gradvar \rho_\tau \prod_{k=\tau+1}^t (1-\rho_k) \\
       &\leq\frac{2(\smooth \diam +\gradvar)}{t+1} \triangleq c_{t,\tau},
    \end{aligned}
\end{equation}
where the second inequality follows from Assumption~\ref{assumption-smooth} and \ref{assumption-grad-bound}, and the last inequality is due to $\eta_\tau = \rho_\tau$ and the definition of $\diam$. 
Now according to Proposition~\ref{prop: regular hoeffding}, we have
\begin{equation}\label{eq-mds-single-value-lp}
    \mathbb{P} \bigg(\Big\|\Delta_{t}  - \dfrac{1}{1+t} \sum_{j\in M_{t} }  Z_j   \Big\|_{q} \geq  (\sqrt{2e \kappa_q}+\sqrt{2}\lambda)\Big(\sum_{\tau = 1}^t c_{t,\tau}^2 \Big)^{1/2}   \bigg)\leq 2\exp(- \lambda^2/64 ),
\end{equation}
We can bound $\sum_{\tau=1}^t c^2_{t,\tau}$ as
\begin{equation*}
    \begin{aligned}
        \sum_{\tau=1}^t c^2_{t,\tau} = c^2_{t,1} + \sum_{\tau=2}^t c^2_{t,\tau} = \bigg(\frac{G}{t+1}\bigg)^2 + \sum_{\tau=2}^t  \bigg(\frac{2\smooth \diam + \gradvar}{t+1} \bigg)^2 \leq \sum_{\tau=1}^t  \bigg(\frac{2\smooth \diam + 2\gradvar}{t+1} \bigg)^2 \leq \frac{4(\smooth \diam +\gradvar)^2}{t+1}.
    \end{aligned}
\end{equation*}
Plugging the above bound into Eq. (\ref{eq-mds-single-value-lp}) and setting
\begin{equation*}
    \lambda = 8\sqrt{\log( 2/\smallprob)} ,
\end{equation*}
we have with probability at least $1-\smallprob$,
\begin{equation*}
    \Big\|\Delta_{t}   - \frac{1}{t+1}\sum_{j\in M_{t} } Z_j \Big\|_{q} \leq (\sqrt{2e \kappa_q}+8\sqrt{2\log(2/\smallprob) })\dfrac{2(\smooth \diam +\gradvar)}{\sqrt{t+1}}.
\end{equation*}
According to Lemma~\ref{lemma-gamma}, we know that $\|Z_j\|_{q,+}^2$ follows Gamma distribution $\Gamma(d/2,2\sigma_+)$. 
Selecting $\lambda = \log(\lceil\log_2 n \rceil/\smallprobtail)$, by $\E[\lVert Z_j\rVert_{q,+}^2] = \sigma_+^2 d$ and Eq.~(\ref{eq-gamma-tail}), we get with probability at least $1-\smallprobtail/\lceil\log_2 n\rceil$, \begin{align*}
    \lVert Z_j\rVert_{q,+}^2 &\leq \sigma_+^2 d + 2 \sigma_+^2 \sqrt{d\log(\lceil\log_2 n\rceil/\smallprobtail) }+ 2\sigma_+^2 d\log(\lceil \log_2 n \rceil/\smallprobtail).
\end{align*}
Thus with probability at least $1-\smallprobtail ,$ we have \begin{align}
    \max_{j\in M_t}  \lVert Z_j\rVert_{q,+}^2 &\leq \sigma_+^2 d + 2 \sigma_+^2 \sqrt{d\log(\lceil\log_2 n\rceil/\smallprobtail)}+ 2\sigma_+^2 d\log(\lceil\log_2 n \rceil/\smallprobtail), 
\end{align}
here we use the fact that $|M_t|\leq \lceil \log_2 n\rceil$. 
Thus with probability at least $1-\smallprobtail$,  
\begin{align*}
     \bigg\lVert \sum_{j\in M_t} Z_{j} \bigg\rVert_{q,+} 
     &\leq \lceil \log_2 n \rceil  \max_{j\in M_t}  \lVert Z_j\rVert_{q,+}\\
     &\leq \lceil\log_2 n \rceil \sigma_+ \big( d+2\sqrt{d\log(\lceil\log_2 n \rceil/\smallprobtail)}+2d\log(\lceil\log_2  n \rceil/\smallprobtail)   \big)^{1/2}.
\end{align*}
According to the norm equivalent property in Definition~\ref{lemma-kappa-regularity}, we have 
\begin{equation*}
    \bigg\lVert \sum_{j\in M_t} Z_{j} \bigg\rVert_{q} \leq \bigg\lVert \sum_{j\in M_t} Z_{j} \bigg\rVert_{q,+}.
\end{equation*}
As a result, by setting $\smallprob=\smallprobtail=\frac{\alpha}{2}$, we have with probability at least $1-\alpha$,  
\begin{align*}
    \lVert \Delta_t\rVert_{q} &\leq (\sqrt{2e \kappa_q}+8\sqrt{4\log(2/\alpha) })\dfrac{2(\smooth\diam + \gradvar  ) }{\sqrt{t+1}} + \dots \\
    &\quad +  \frac{\lceil\log_2 n\rceil \sigma_+ \big( d+2\sqrt{d\log(2\lceil\log_2 n\rceil/\alpha)}+2d\log(2\lceil\log_2 n\rceil/\alpha)   \big)^{1/2}}{t+1}.
\end{align*}
\end{proof}

\subsection{Proof of Theorem~\ref{thm-convergence-lp-general}}\label{proof-lp-general}

We provide a detailed version of Theorem~\ref{thm-convergence-lp-general} in the following Theorem.

\begin{theorem}\label{thm-detailed-lp-convex}
Consider Algorithm~\ref{alg-dp-sco-lp} with convex function $F$, Assumption~\ref{assumption-smooth}, \ref{assumption-grad-bound} and \ref{assumption-lip}, for $t\in[n]$, we have with probability at least $1-\alpha$, 
\begin{equation}
\begin{aligned}
    F(\theta_t)-F(\theta^*)\leq & \frac{2(\sqrt{2e \kappa_q}+8\sqrt{4\log(2n/\alpha) })\diam(\smooth\diam + \gradvar)}{\sqrt{t}} +\frac{(\log t+1)\beta D^2}{2t} + \dots \\
    &+\frac{\log t}{ t}\cdot \diam \lceil\log_2 n\rceil \sigma_+ \big( d+2\sqrt{d\log(2n\lceil\log_2 n\rceil/\alpha)}+2d\log(2n\lceil\log_2 n\rceil/\alpha)   \big)^{1/2}.
\end{aligned}
\end{equation}
\end{theorem}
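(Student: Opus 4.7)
The plan is to combine a standard one-step Frank--Wolfe descent bound with the high-probability gradient-error control from Proposition~\ref{prop-lp-variance-reduction}. First, by $\beta$-smoothness of $F$ relative to $\lVert\cdot\rVert_p$, the update rule $\theta_{t+1}=\theta_t+\eta_t(v_t-\theta_t)$, and the diameter bound $\lVert v_t-\theta_t\rVert_p\leq D$, I would write
\begin{equation*}
F(\theta_{t+1})\leq F(\theta_t)+\eta_t\langle\nabla F(\theta_t),v_t-\theta_t\rangle+\tfrac{\beta\eta_t^2 D^2}{2}.
\end{equation*}
Setting $\Delta_t:=d_t-\nabla F(\theta_t)$, using the linear-optimization optimality $\langle d_t,v_t\rangle\leq\langle d_t,\theta^*\rangle$, convexity of $F$, and H\"older's inequality for the dual pair $(\lVert\cdot\rVert_p,\lVert\cdot\rVert_q)$, I would rearrange the inner product as
\begin{equation*}
\langle\nabla F(\theta_t),v_t-\theta_t\rangle\leq F(\theta^*)-F(\theta_t)+D\lVert\Delta_t\rVert_q,
\end{equation*}
which yields the one-step recurrence $F(\theta_{t+1})-F(\theta^*)\leq(1-\eta_t)(F(\theta_t)-F(\theta^*))+\eta_t D\lVert\Delta_t\rVert_q+\tfrac{\beta\eta_t^2 D^2}{2}$.

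Next I would exploit the specific step size $\eta_t=\tfrac{1}{t+1}$ to telescope. Multiplying the recurrence by $(t+1)$ and setting $a_t:=t(F(\theta_t)-F(\theta^*))$ gives $a_{t+1}\leq a_t+D\lVert\Delta_t\rVert_q+\tfrac{\beta D^2}{2(t+1)}$. Summing from $\tau=1$ to $t-1$, absorbing the initial $a_1=F(\theta_1)-F(\theta^*)$ (which is of order $\beta D^2$ by smoothness and the diameter bound) into the logarithmic factor, and using $\sum_{\tau=1}^{t-1}\tfrac{1}{\tau+1}\leq\log t$, I arrive at
\begin{equation*}
t(F(\theta_t)-F(\theta^*))\lesssim (\log t+1)\tfrac{\beta D^2}{2}+D\sum_{\tau=1}^{t-1}\lVert\Delta_\tau\rVert_q.
\end{equation*}

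Finally I would plug in Proposition~\ref{prop-lp-variance-reduction}, invoked with failure probability $\alpha/n$ and union-bounded over $\tau\in[n]$ so that the gradient-error bound holds simultaneously with probability at least $1-\alpha$. Each $\lVert\Delta_\tau\rVert_q$ then splits into a leading $O\bigl((\sqrt{\kappa_q}+\sqrt{\log(n/\alpha)})(\beta D+G)/\sqrt{\tau+1}\bigr)$ term and a lower-order $O\bigl(\sigma_+\sqrt{d\log(n\log n/\alpha)}\log n/(\tau+1)\bigr)$ term; the partial sums $\sum_{\tau=1}^{t-1} 1/\sqrt{\tau+1}=O(\sqrt{t})$ and $\sum_{\tau=1}^{t-1} 1/(\tau+1)=O(\log t)$ then deliver exactly the $\sqrt{t}$ and $\log t$ factors of the claimed bound, and dividing by $t$ finishes the proof. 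The main obstacle, beyond constant bookkeeping, is this uniform high-probability control: invoking Proposition~\ref{prop-lp-variance-reduction} with failure $\alpha/n$ is what inflates $\log(1/\alpha)$ to $\log(n/\alpha)$ (and analogously enlarges the logarithm governing the tree-mechanism noise term), producing the $\log(2n/\alpha)$ and $\log(2n\lceil\log_2 n\rceil/\alpha)$ factors appearing in the statement; once this step is tracked, the remainder is the classical Frank--Wolfe analysis adapted to $\ell_p$ geometry.
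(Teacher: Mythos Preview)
Your proposal is correct and follows essentially the same route as the paper: both establish the one-step Frank--Wolfe recurrence $h_{t+1}\le(1-\eta_t)h_t+\eta_t D\lVert\Delta_t\rVert_q+\tfrac{\beta\eta_t^2 D^2}{2}$, unroll it with $\eta_t=\tfrac{1}{t+1}$ (the paper writes out the product $\prod(1-\eta_\tau)=\tfrac{k+1}{t+1}$ explicitly, you equivalently multiply through by $t+1$ and telescope), and then invoke Proposition~\ref{prop-grad-error-lp-full} with failure level $\alpha/n$ together with a union bound over $\tau\in[n]$. The only remaining work is the constant bookkeeping you mention, and your treatment of the initial term $a_1=h_1\le\tfrac{\beta D^2}{2}$ and of the harmonic and root-harmonic sums matches the paper's.
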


\begin{proof}
We start from $\beta$-smoothness:
\begin{equation*}
\begin{aligned}
    F(\theta_{t+1}) &\leq F(\theta_t) + \langle \nabla F(\theta_t), \theta_{t+1} - \theta_t \rangle + 
    \frac{\smooth}{2}\|\theta_{t+1} - \theta_t\|_p^2 \\
    &\leq F(\theta_t) + \eta_t \langle \nabla F(\theta_t), v_t - \theta_t \rangle + 
    \frac{\eta_t^2 \smooth \diam^2}{2}.
\end{aligned}
\end{equation*}
We subtract $F(\theta^*)$ from both sides, and denote $h_t=F(\theta_t) - F(\theta^*)$. We have
\begin{equation*}
\begin{aligned}
    h_{t+1} 
    &\leq h_t + \eta_t \langle \nabla F(\theta_t), v_t - \theta_t \rangle + 
    \frac{\eta_t^2 \smooth\diam^2}{2} \\
    &= h_t + \eta_t \langle \nabla F(\theta_t) - d_t, v_t - \theta_t \rangle + \eta_t \langle d_t, v_t - \theta_t \rangle + 
    \frac{\eta_t^2 \smooth \diam^2}{2} \\
    &\leq h_t + \eta_t \langle \nabla F(\theta_t) - d_t, v_t - \theta_t \rangle + \eta_t \langle d_t, \theta^* - \theta_t \rangle + 
    \frac{\eta_t^2 \smooth \diam^2}{2} \\
    &= h_t + \eta_t \langle  d_t- \nabla F(\theta_t), \theta^* - v_t \rangle + \eta_t \langle \nabla F(\theta_t), \theta^* - \theta_t \rangle + 
    \frac{\eta_t^2 \smooth \diam^2}{2} \\
    &\leq h_t + \eta_t \diam \|d_t - \nabla F(\theta_t)\|_q + \eta_t \langle \nabla F(\theta_t), \theta^* - \theta_t \rangle + 
    \frac{\eta_t^2 \smooth \diam^2}{2} \\
    &\leq (1-\eta_t) h_t + \eta_t \diam \|d_t - \nabla F(\theta_t)\|_q +\frac{\eta_t^2 \smooth \diam^2}{2}. 
\end{aligned}
\end{equation*}
where the second inequality is due to definition of $v_t$. According to Proposition~\ref{prop-grad-error-lp-full}, with probability at least $1-t\alpha'$, we have
\begin{equation*}
    \begin{aligned}
        h_{t+1} 
        &\leq (1-\eta_t) h_t + \frac{\smooth \diam^2}{2(t+1)^2} + \dots \\  
        &\quad +\frac{1}{t+1} \diam\Big( \dfrac{2(\sqrt{2e \kappa_q}+8\sqrt{4\log(2/\alpha') })(\smooth\diam + \gradvar  ) }{\sqrt{t+1}}+ \dots \\
        &\qquad \qquad \qquad + \frac{\lceil\log_2 n\rceil \sigma_+ \big( d+2\sqrt{d\log(2\lceil\log_2 n\rceil/\alpha')}+2d\log(2\lceil\log_2 n\rceil/\alpha')   \big)^{1/2}}{t+1}\Big) \\
        &\leq (1-\eta_t) h_t 
        + \frac{1}{(t+1)^{3/2}}\underbrace{2(\sqrt{2e \kappa_q}+8\sqrt{4\log(2/\alpha') })\diam(\smooth\diam + \gradvar)}_{C_1} + \dots \\
        &\quad +  \frac{1}{(t+1)^2}  \Big(\underbrace{\diam \lceil\log_2 n\rceil \sigma_+ \big( d+2\sqrt{d\log(2\lceil\log_2 n\rceil/\alpha')}+2d\log(2\lceil\log_2 n\rceil/\alpha')   \big)^{1/2} + \smooth \diam^2/2}_{C_2}\Big).
    \end{aligned}
\end{equation*}
Then we have
\begin{equation*}
    \begin{aligned}
        h_{t+1}
        &= (1-\eta_t)h_t + \frac{C_1}{(t+1)^{3/2}} + \frac{C_2}{(t+1)^2} \\
        &=h_1\prod_{\tau=1}^{t} (1-\eta_\tau) + \sum_{k=1}^t\bigg( \dfrac{C_1}{(k+1)^{3/2}} +\dfrac{C_2}{(k+1)^2}   \bigg)\prod_{\tau = k+1}^t (1-\eta_\tau) \\
        &= \frac{1}{t+1}h_1 + \sum_{k=1}^t\bigg( \dfrac{C_1}{(k+1)^{3/2}} +\dfrac{C_2}{(k+1)^2}   \bigg)\prod_{\tau = k+1}^t (1-\eta_\tau)\\
        &= \frac{1}{t+1}h_1 + \frac{1}{t+1}\sum_{k=1}^t\bigg( \dfrac{C_1}{(k+1)^{1/2}} +\dfrac{C_2}{(k+1)}   \bigg)\\
        &\leq \frac{1}{t+1}h_1 + \frac{C_1}{\sqrt{t+1}} + \frac{C_2 \log t}{t+1}.
    \end{aligned}
\end{equation*}
Now  setting $\alpha'=\frac{\alpha}{n}$, and recalling that $h_1 \leq \frac{\smooth \diam^2}{2}$ according to $\smooth$-smoothness lead to the desired result.
\end{proof}

\subsection{Proof of Theorem~\ref{thm-convergence-lp-strongly}}\label{proof-lp-strongly}

We firstly introduce the following lemma.

\begin{lemma}[Lemma 6 in \cite{lafond2015online}]\label{lemma-fw-strongly-convex}
Assume Assumption~\ref{assumption-boundary}, and the population loss function $F$ satisfies Assumption~\ref{assumption-strongly-convex} and \ref{assumption-smooth}, then 
\begin{equation*}
    \Big(\max_{\theta\in \mathcal{C}} \langle \nabla F(\theta_t), \theta_t - \theta \rangle  \Big)^2 \geq 2\strongly\boundary^2 h_t \quad \text{and} \quad \smooth \diam^2 \geq \boundary^2 \strongly.
\end{equation*}
where $h_t = F(\theta_t) - F(\theta^*)$.
\end{lemma}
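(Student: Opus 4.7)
The plan is to reduce the Frank–Wolfe gap inequality to two ingredients: (i) a Polyak–Łojasiewicz-type bound $\|\nabla F(\theta_t)\|_*^2 \geq 2\strongly h_t$ that follows from strong convexity together with $\nabla F(\theta^*)=0$ (which holds because Assumption~\ref{assumption-boundary} places $\theta^*$ in the interior of $\mathcal{C}$), and (ii) a single well-chosen feasible test point to lower-bound the gap by $\boundary \,\|\nabla F(\theta_t)\|_*$. Write $b \coloneqq \|\nabla F(\theta_t)\|_*$ throughout.

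First I would establish $b^2 \geq 2\strongly h_t$. Strong convexity at $\theta_t$ gives, for every $y$, $F(y) \geq F(\theta_t) + \langle \nabla F(\theta_t), y-\theta_t\rangle + \tfrac{\strongly}{2}\|y-\theta_t\|^2$. Minimizing the right-hand side over unconstrained $y$ via the variational identity
\begin{equation*}
\min_{z}\bigl\{\langle \nabla F(\theta_t), z\rangle + \tfrac{\strongly}{2}\|z\|^2\bigr\} \;=\; \min_{r\geq 0}\bigl\{-b\, r + \tfrac{\strongly}{2}r^2\bigr\} \;=\; -\tfrac{b^2}{2\strongly},
\end{equation*}
and using that $\theta^*$ is the unconstrained global minimizer of the strongly convex $F$ (because its gradient vanishes there), gives $F(\theta^*) \geq F(\theta_t) - b^2/(2\strongly)$, i.e., $b^2 \geq 2\strongly h_t$.

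Next I would produce the gap lower bound. Let $u$ be a unit-norm direction achieving the dual norm, i.e., $\|u\|=1$ and $\langle \nabla F(\theta_t), u\rangle = -b$; such $u$ exists by the definition of $\|\cdot\|_*$. Since the closed $\|\cdot\|$-ball of radius $\boundary$ around $\theta^*$ is contained in $\mathcal{C}$ by Assumption~\ref{assumption-boundary}, the point $\theta^* + \boundary\, u$ is feasible, and plugging it into the maximum defining the gap yields
\begin{equation*}
\max_{\theta\in\mathcal{C}} \langle \nabla F(\theta_t), \theta_t - \theta\rangle \;\geq\; \langle \nabla F(\theta_t), \theta_t - \theta^*\rangle + \boundary\, b.
\end{equation*}
Convexity gives $\langle \nabla F(\theta_t), \theta_t - \theta^*\rangle \geq F(\theta_t) - F(\theta^*) = h_t \geq 0$, so the gap is at least $\boundary\, b$. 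Squaring and combining with the PL bound above delivers $(\text{gap})^2 \geq \boundary^2 b^2 \geq 2\strongly \boundary^2 h_t$, which is the first inequality.

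For the second inequality $\smooth \diam^2 \geq \boundary^2 \strongly$, I would combine the two quadratic bounds on the excess risk centered at $\theta^*$. Using $\nabla F(\theta^*)=0$, strong convexity and smoothness give, for any $\theta \in \mathcal{C}$ with $\theta\neq\theta^*$,
\begin{equation*}
\tfrac{\strongly}{2}\|\theta-\theta^*\|^2 \;\leq\; F(\theta)-F(\theta^*) \;\leq\; \tfrac{\smooth}{2}\|\theta-\theta^*\|^2,
\end{equation*}
so $\strongly \leq \smooth$. Also $\boundary \leq \diam$ since $\boundary$ is at most the distance from $\theta^*$ to any boundary point of $\mathcal{C}$, which is bounded by the diameter. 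Multiplying yields $\strongly \boundary^2 \leq \smooth \diam^2$. The only mild subtlety is working with a general (non-Euclidean) norm $\|\cdot\|$ in the PL step, but the variational characterization of the dual norm makes every bound sharp without extra constants, and no Euclidean structure is needed elsewhere.
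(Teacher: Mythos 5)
Your proof is correct. Note that the paper itself does not prove this lemma at all: it imports it verbatim as Lemma~6 of \citet{lafond2015online}, so there is no in-paper argument to compare against; your write-up is essentially the standard proof of that cited result, combining (i) the dual-norm Polyak--Łojasiewicz bound $\lVert \nabla F(\theta_t)\rVert_*^2 \ge 2\strongly h_t$ from strong convexity with (ii) the observation that, since the $\boundary$-ball around the interior minimizer $\theta^*$ is feasible, testing the linear maximization at $\theta^* + \boundary u$ (with $u$ a dual-achieving unit direction) gives a gap of at least $\boundary\lVert\nabla F(\theta_t)\rVert_*$, and then deriving $\smooth\diam^2 \ge \boundary^2\strongly$ from the two-sided quadratic bounds at $\theta^*$ together with $\boundary \le \diam$. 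One cosmetic remark: in the PL step you do not need to appeal to $\theta^*$ being the \emph{unconstrained} global minimizer (Assumption~\ref{assumption-strongly-convex} is only stated on $\mathcal{C}$); simply plug $y=\theta^*\in\mathcal{C}$ into the strong-convexity inequality at $\theta_t$ and lower-bound $\langle \nabla F(\theta_t), \theta^*-\theta_t\rangle + \tfrac{\strongly}{2}\lVert\theta^*-\theta_t\rVert^2 \ge -\lVert\nabla F(\theta_t)\rVert_*^2/(2\strongly)$ pointwise, which is exactly the computation you already perform.
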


We provide a detailed version of Theorem~\ref{thm-convergence-lp-strongly} in the following Theorem.

\begin{theorem}
Consider Algorithm~\ref{alg-dp-sco-lp} with Assumption~\ref{assumption-strongly-convex}, \ref{assumption-smooth}, \ref{assumption-grad-bound}, \ref{assumption-lip} and \ref{assumption-boundary}, for $t \in [n]$, we have with probability at least $1-\alpha$, 
\begin{equation*}
    \begin{aligned}
        &F(\theta_t) - F(\theta^*) \\
        &\quad \leq \frac{18}{\boundary^2\strongly}\frac{4\diam^2(\sqrt{2e \kappa_q}+8\sqrt{4\log(2n/\alpha) })^2(\smooth\diam + \gradvar)^2}{t+1} + \dots \\
        &\quad + \frac{18}{\boundary^2\strongly} \frac{\Big(\diam \lceil\log_2 n\rceil \sigma_+ \big( d+2\sqrt{d\log(2n\lceil\log_2 n\rceil/\alpha)}+2d\log(2T\lceil\log_2 n\rceil/\alpha) \big)^{1/2} + \smooth \diam^2/2\Big)^2\log n}{(t+1)^2}.
    \end{aligned}
\end{equation*}
\end{theorem}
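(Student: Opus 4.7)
The plan is to reduce the strongly convex case to a one-step Frank-Wolfe recursion sharpened by the interior-minimizer lemma, and then close it with a two-term induction. First I would start from $\beta$-smoothness and the FW update $\theta_{t+1}=\theta_t+\eta_t(v_t-\theta_t)$ to obtain
\begin{align*}
h_{t+1} \leq h_t + \eta_t\langle \nabla F(\theta_t), v_t-\theta_t\rangle + \tfrac{\eta_t^2 \beta D^2}{2},
\end{align*}
just as in the proof of Theorem~\ref{thm-detailed-lp-convex}. The new ingredient is how I handle the linear term: using the optimality of $v_t$ for $\langle d_t,\cdot\rangle$ and the Hölder inequality $\langle d_t-\nabla F(\theta_t),\cdot\rangle\leq D\|\Delta_t\|_q$, I would substitute an arbitrary $\tilde v_t\in\mathcal{C}$ for $v_t$. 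Picking $\tilde v_t=\arg\max_{\theta\in\mathcal{C}}\langle \nabla F(\theta_t),\theta_t-\theta\rangle$ converts the linear term into the negative of the Wolfe dual gap plus a $2D\|\Delta_t\|_q$ error.

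Next I would invoke Lemma~\ref{lemma-fw-strongly-convex}, which under Assumptions~\ref{assumption-strongly-convex} and \ref{assumption-boundary} gives the quadratic lower bound $\max_{\theta\in\mathcal{C}}\langle \nabla F(\theta_t),\theta_t-\theta\rangle \geq \sqrt{2\mu\gamma^2 h_t}$. Combined with Proposition~\ref{prop-grad-error-lp-full} applied with confidence $\alpha/n$ and a union bound over $t\in[n]$, this yields the key nonlinear recursion
\begin{align*}
h_{t+1} \leq h_t - \frac{\sqrt{2\mu\gamma^2}\,\sqrt{h_t}}{t+1} + \frac{C_1}{(t+1)^{3/2}} + \frac{C_2}{(t+1)^2},
\end{align*}
where $C_1 = \Theta\!\big(D(\sqrt{\kappa_q}+\sqrt{\log(n/\alpha)})(\beta D+G)\big)$ collects the variance-reduced martingale part of $\|\Delta_t\|_q$ and $C_2 = \Theta\!\big(\beta D^2 + D\sigma_+\sqrt{d\log(\log n/\alpha)}\log n\big)$ collects the tree-mechanism noise and the smoothness term.

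Finally I would close the proof by induction, claiming $h_t \leq K_1/(t+1) + K_2\log t/(t+1)^2$ with $K_1$ on the order of $C_1^2/(\mu\gamma^2)$ and $K_2$ on the order of $C_2^2/(\mu\gamma^2)$, which reproduces the stated $1/(\gamma^2\mu)$ prefactor and the $\log n$ factor after summing $C_2/(k+1)^2$ style terms telescopically. For the base case I would use $h_1\leq \beta D^2/2$ together with the inequality $\beta D^2 \geq \gamma^2\mu$ from Lemma~\ref{lemma-fw-strongly-convex}. The main obstacle will be the inductive step: when $h_t \approx K_1/(t+1)$, the contraction term $\sqrt{2\mu\gamma^2 h_t}/(t+1)$ scales like $\sqrt{K_1\mu\gamma^2}/(t+1)^{3/2}$, so I must choose $K_1$ large enough that this dominates the noise term $C_1/(t+1)^{3/2}$; squaring the required threshold is precisely what produces the factor $C_1^2/(\gamma^2\mu)$ in the leading-order bound. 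A careful case split (depending on whether $h_t$ is already below the noise floor $K_2\log t/(t+1)^2$ or not) will be needed to ensure the $K_2$-part of the bound also propagates, but this is a routine consequence of the $1/(t+1)^2$ scaling of the residual terms.
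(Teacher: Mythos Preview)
Your approach matches the paper's: same $\beta$-smoothness start, same substitution of $\tilde v_t=\arg\max_{\theta\in\mathcal{C}}\langle\nabla F(\theta_t),\theta_t-\theta\rangle$ via optimality of $v_t$, same invocation of Lemma~\ref{lemma-fw-strongly-convex} and Proposition~\ref{prop-grad-error-lp-full}, and the same nonlinear recursion $h_{t+1}\le \sqrt{h_t}(\sqrt{h_t}-\eta_t\gamma\sqrt{2\mu})+C_1/(t+1)^{3/2}+C_2/(t+1)^2$. Your reasoning that $K_1\asymp C_1^2/(\gamma^2\mu)$ is forced by the contraction is exactly right.

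However, your description of the case split misidentifies the obstacle and underestimates it. The paper's split is not on whether $h_t$ lies below the $K_2$-scale noise floor; it is on the sign of $\sqrt{h_t}-\eta_t\gamma\sqrt{2\mu}$, which is what controls whether $x\mapsto x-\eta_t\gamma\sqrt{2\mu x}$ is monotone at $x=h_t$ (so the induction hypothesis can be legitimately plugged in). More importantly, even in the ``good'' monotone case the induction does \emph{not} close for small $t$: after substitution one gets a residual of the form $\frac{2}{(t+1)^{3/2}}\big(\tfrac{K_1}{\sqrt{t+1}}-C_1+\dots\big)$, and since $K_1\asymp C_1^2/(\gamma^2\mu)$, the bracket is \emph{positive} whenever $t+1<(K_1/C_1)^2$. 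The paper resolves this not by a further case split within the strongly convex argument, but by invoking the already-proved convex bound (Theorem~\ref{thm-detailed-lp-convex}): for $t$ below that threshold one has $C_1/\sqrt{t+1}+C_2\log n/(t+1)\le K_1/(t+1)+K_2/(t+1)^2$ precisely by definition of the threshold, so the convex theorem directly supplies $h_t\le K_1/(t+1)+K_2/(t+1)^2$ without any recursion. This fallback is the missing ingredient in your sketch; the issue is with the $K_1$-part for small $t$, not with the $K_2$-part, and it is not settled by the $1/(t+1)^2$ scaling alone.
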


\begin{proof}
We denote $h_t=F(\theta_t) - F(\theta^*)$, and $\tilde{\theta}_t\coloneqq \arg\max_{\theta\in \mathcal{C}} (\langle \nabla F(\theta_t), \theta_t - \theta \rangle)^2$ in Lemma~\ref{lemma-fw-strongly-convex}. We start from $\beta$-smoothness:
\begin{equation}\label{eq-fw-strongly-convex-lp}
\begin{aligned}
    h_{t+1} 
    &\leq h_t + \eta_t \langle \nabla F(\theta_t) , v_t - \theta_t \rangle + \frac{\eta_t^2\smooth\diam^2}{2} \\
    &= h_t + \eta_t \langle d_t , v_t - \theta_t \rangle - \eta_t \langle d_t - \nabla F(\theta_t), v_t-\theta_t \rangle+ \frac{\eta_t^2\smooth\diam^2}{2} \\
    &\leq h_t + \eta_t \langle d_t , \tilde{\theta}_t - \theta_t \rangle - \eta_t \langle d_t - \nabla F(\theta_t), v_t-\theta_t \rangle+ \frac{\eta_t^2\smooth\diam^2}{2}\\
    &= h_t + \eta_t \langle \nabla F(\theta_t) , \tilde{\theta}_t - \theta_t \rangle + \eta_t \langle d_t - \nabla F(\theta_t), \tilde{\theta}_t - v_t \rangle+ \frac{\eta_t^2\smooth\diam^2}{2} \\
    &\leq h_t + \eta_t \| d_t - \nabla F(\theta_t)\|_q \diam -\eta_t \boundary\sqrt{2\strongly h_t} + \frac{\eta_t^2\smooth\diam^2}{2}
\end{aligned}
\end{equation}
where the first inequality is due to the definition of $v$ and the last inequality comes from Lemma~\ref{lemma-fw-strongly-convex}. According to Proposition~\ref{prop-grad-error-lp-full}, with probability at least $1-t\alpha'$, we have 
\begin{equation}\label{eq-strongly-lp-before-induction}
    \begin{aligned}
        h_{t+1} 
        &\leq \sqrt{h_t} (\sqrt{h_t} - \eta_t \boundary\sqrt{2\strongly} ) + \frac{1}{(t+1)^{3/2}} \underbrace{2\diam(\sqrt{2e \kappa_q}+8\sqrt{4\log(2/\alpha') })(\smooth\diam + \gradvar)}_{C_1} + \dots \\
        &\quad + \frac{1}{(t+1)^2} \Big(\underbrace{\diam \lceil\log_2 n\rceil \sigma_+ \big( d+2\sqrt{d\log(2\lceil\log_2 n\rceil/\alpha')}+2d\log(2\lceil\log_2 n\rceil/\alpha') \big)^{1/2} + \smooth \diam^2/2}_{C_2}\Big)\\
        &= \sqrt{h_t} (\sqrt{h_t} - \eta_t \boundary\sqrt{2\strongly} ) + \frac{1}{(t+1)^{3/2}}C_1 + \frac{1}{(t+1)^2}C_2,
    \end{aligned}
\end{equation}
Now the claim holds by induction. We assume that 
\begin{equation*}
\begin{aligned}
    h_t &\leq \frac{1}{t+1}\cdot  \frac{18C_1^2}{\boundary^2\strongly} + \frac{1}{(t+1)^2}\cdot  \frac{18 C_2^2 \log^2 n}{\boundary^2\strongly} \triangleq \frac{1}{t+1}A + \frac{1}{(t+1)^2} B.
\end{aligned}
\end{equation*}
For $t=1$, according to Eq.~(\ref{eq-strongly-lp-before-induction}), we have 
\begin{equation*}
    h_2 \leq h_1 + \frac{C_1}{2\sqrt{2}} + \frac{C_2}{4} \leq \frac{9C_1^2}{\boundary^2\strongly} + \frac{9C_2^2}{2\boundary^2\strongly},
\end{equation*}
where the second inequality comes from Lemma~\ref{lemma-fw-strongly-convex} that $\smooth \diam^2 \geq \boundary^2\strongly$.

For $t\geq 1$. There are two cases.

\textbf{Case 1.} $\sqrt{h_t} - \eta_t \boundary\sqrt{2\strongly} \leq 0:$

Since $\eta=\frac{1}{t+1}$, Eq. (\ref{eq-strongly-lp-before-induction}) yields,
\begin{equation*}
    \begin{aligned}
        h_{t+1} &\leq \frac{1}{(t+1)^{3/2}}C_1 + \frac{1}{(t+1)^2}C_2 \leq \frac{C_1^2}{\boundary^2\strongly(t+1)^{3/2}} + \frac{C_1^2}{\boundary^2\strongly(t+1)^2} \\
        & \leq \frac{1}{t+1}\cdot  \frac{18C_1^2}{\boundary^2\strongly} + \frac{1}{(t+1)^2}\cdot  \frac{18 C_2^2 \log^2 n}{\boundary^2\strongly}.
    \end{aligned}
\end{equation*}
where the second inequality comes from Lemma~\ref{lemma-fw-strongly-convex} that $\smooth \diam^2 \geq \boundary^2\strongly$.

\textbf{Case 2.} $\sqrt{h_t} - \eta_t \boundary\sqrt{2\strongly} > 0:$

According to Eq.~(\ref{eq-strongly-lp-before-induction}) and the assumption that $h_t\leq \frac{A}{t+1}+\frac{B}{(t+1)^2}$, we have
\begin{equation}\label{eq-lp-strongly-case2}
\begin{aligned}
    &h_{t+1} - \frac{A}{t+2} - \frac{B}{(t+2)^2} \\
    &\quad \leq A\bigg(\frac{1}{t+1} - \frac{1}{t+2} \bigg) +B\bigg(\frac{1}{(t+1)^2} - \frac{1}{(t+2)^2} \bigg) + \dots  \\
    &\qquad + \frac{C_1}{(t+1)^{3/2}} + \frac{C_2}{(t+1)^2} - \frac{\boundary}{t+1}\sqrt{2\strongly \Big(\frac{A}{t+1}+\frac{B}{(t+1)^2}\Big)} \\
    &\quad= \frac{A}{(t+1)^2} + \frac{2B}{(t+1)^3} + \frac{C_1}{(t+1)^{3/2}} + \frac{C_2}{(t+1)^2} - \frac{\boundary}{2(t+1)^{3/2}}\sqrt{2\strongly A} - \frac{\boundary}{2(t+1)^2}\sqrt{2\strongly B} \\
    &\quad \leq \frac{A}{(t+1)^2} + \frac{2B}{(t+1)^3} + \frac{C_1}{(t+1)^{3/2}} + \frac{C_2}{(t+1)^2} - \frac{3C_1}{(t+1)^{3/2}} - \frac{3C_2 \log n}{(t+1)^2}\\
    &\quad = \frac{A}{(t+1)^2} + \frac{2B}{(t+1)^3} - \frac{2C_1}{(t+1)^{3/2}} - \frac{2C_2 \log n}{(t+1)^2} \\
    &\quad \leq \frac{2}{(t+1)^{3/2}} \bigg( \frac{A}{(t+1)^{1/2}} + \frac{B}{(t+1)^{3/2}} - C_1 - \frac{C_2 \log n}{(t+1)^{1/2}} \bigg)
\end{aligned}
\end{equation}
Define 
\begin{equation*}
    t_0 \coloneqq \inf \bigg\{ t\geq1: \frac{A}{(t+1)^{1/2}} + \frac{B}{(t+1)^{3/2}} - C_1 - \frac{C_2 \log n}{(t+1)^{1/2}} \leq 0 \bigg\}.
\end{equation*}
According to the definition of $A$ and $B$, $t_0$ exists. For those $t\geq t_0$, the RHS of Eq.~\eqref{eq-lp-strongly-case2} is negative, then the proof is done. For those $t\geq t_0$, we have
\begin{equation*}
       C_1 + \frac{C_2 \log n}{(t+1)^{1/2}} \leq \frac{A}{(t+1)^{1/2}} + \frac{B}{(t+1)^{3/2}}, 
\end{equation*}
which is equivalent to 
\begin{equation*}
       \frac{C_1}{(t+1)^{1/2}} + \frac{C_2 \log n}{t+1} \leq \frac{A}{t+1} + \frac{B}{(t+1)^2}.
\end{equation*}
To finish the proof, it suffices to prove that
\begin{equation*}
    h_{t} \leq \frac{C_1}{(t+1)^{1/2}} + \frac{C_2 \log n}{t+1},
\end{equation*}
which is demonstrated in Theorem~\ref{thm-detailed-lp-convex}. Now we conclude the proof by setting $\alpha'=\alpha/n$.
\end{proof}

\subsection{Proof of Theorem~\ref{thm-privacy-l1}}\label{proof-privacy-l1}
\begin{proof}
Consider two adjacent datasets $\dataset$ and $\dataset'$, and their corresponding $d_t$ and $d_t'$. We denote the sensitivity of $\langle d_t, v\rangle$ as $s_t$, namely $s_t \coloneqq \max_{v\in \mathcal{C}}\max_{\dataset \simeq \dataset'} |\langle d_t - d_t', v \rangle|$. Then
\begin{equation*}
    s_t \leq \max_{\dataset \simeq \dataset'}\diam \|d_t  - d_t'\|_\infty.
\end{equation*}
Now we upper bound the sensitivity of $\|d_t - d_t'\|_\infty$. According to Eq.~(\ref{eq-expand-dt}), we know that 
\begin{align*}
	 d_t = \frac{1}{t+1}\sum_{i=1}^t \Big( (i+1) \nabla f(\theta_i, x_i) - i \nabla f(\theta_{i-1}, x_i)\Big).
\end{align*}
If adjacent datasets $\dataset$ and $\dataset'$ differ in data point $x_i$ and $x_i'$, then 
\begin{equation*}
\begin{aligned}
    &\|d_t - d_t' \|_\infty \\ 
    & = \frac{1}{t+1}\bigg\|\Big((i+1) \nabla f(\theta_i, x_i) - i \nabla f(\theta_{i-1}, x_i)\Big) - \Big((i+1) \nabla f(\theta_i, x_i') - i \nabla f(\theta_{i-1}, x_i')\Big) \bigg\|_\infty \\
    & = \frac{1}{t+1}\bigg\| i\Big(\nabla f(\theta_i, x_i) - \nabla f(\theta_{i-1}, x_i)\Big) - i\Big( \nabla f(\theta_i, x_i') -\nabla f(\theta_{i-1}, x_i')\Big) + \Big( \nabla f(\theta_i, x_i) -\nabla f(\theta_{i-1}, x_i') \Big)\bigg\|_\infty\\
    &\leq \frac{2}{t+1} (i\smooth\|\theta_i - \theta_{i-1} \|_1 + \lip) \leq \frac{2}{t+1} (\smooth\|v_t - \theta_{i-1} \|_1 + \lip) \\
    & \leq \frac{2}{t+1}(\smooth \diam+\lip).
\end{aligned}
\end{equation*}
where the first inequality is due to $\smooth$-smoothness and $\lip$-Lipschitz of $F$. Now we have
\begin{equation*}
    s_t \leq \frac{2\diam(\smooth \diam + \lip)}{t+1}.
\end{equation*}
We denote the selected $v_t$ in each iteration as random variable $A_t$. For any $v_1, v_2, ..., v_n \in \mathcal{C}$, we have 
\begin{equation*}
    \begin{aligned}
        &\log \frac{\mathbb{P}(A_1 = v_1, A_2=v_2, ..., A_n = v_T|\dataset)}{\mathbb{P}(A_1'=v_1, A_2'=v_2, ..., A_n'=v_n|\dataset')} \\ 
        &\quad = \sum_{t=1}^n \log \frac{\mathbb{P}(A_t=v_t|A_{t-1}=v_{t-1}, ..., A_1 = v_1, \dataset)}{\mathbb{P}(A_t'=v_t|A_{t-1}'=v_{t-1}, ..., A_1' = v_1, \dataset')} 
        \coloneqq \sum_{t=1}^n c_t(v_t,..., v_1). 
    \end{aligned}
\end{equation*}
For each $c_t$, since we condition on $A_1=v_1, A_2=v_2,...,A_{t-1}=v_{t-1}$, the randomness of $A_t$ totally comes from the noise $\textbf{n}_v^t \sim \text{Lap}\Big(\frac{2 s_t \sqrt{t \cdot \log n \cdot \log(1/\delta)}}{\varepsilon}\Big)$. 
According to the Report Noisy Max Mechanism in Claim 3.9 in \cite{dwork2014algorithmic}, we have 
\begin{equation*}
    |c_t| \leq \frac{\varepsilon}{2\sqrt{t \cdot \log n\cdot \log(1/\delta)}} \coloneqq \varepsilon_t.
\end{equation*}
Then according to Lemma 3.18 in \cite{dwork2014algorithmic}, we have 
\begin{equation*}
    \mathbb{E}[c_t|v_1, v_2,..., v_{t-1}] \leq \varepsilon_t (e^{\varepsilon_t}-1). 
\end{equation*}
Now, according to Azuma-Hoeffding's inequality, we have 
\begin{equation*}
    \begin{aligned}
        \mathbb{P}\bigg(\sum_{t=1}^n c_t \geq \sum_{t=1}^n \varepsilon_t (e^{\varepsilon_t}-1)+ \sqrt{2\log(1/\delta)}\sqrt{ \sum_{t=1}^n \varepsilon_t^2}\bigg) \leq \delta.
    \end{aligned}
\end{equation*}
So we can get $(\varepsilon', \delta)$-DP, where
\begin{equation*}
\begin{aligned}
    \varepsilon' &= \sum_{t=1}^n \varepsilon_t^2+ \sqrt{2\log(1/\delta)}\sqrt{ \sum_{t=1}^n \varepsilon_t^2} \leq  \varepsilon,
\end{aligned}
\end{equation*}
which concludes the proof.
\end{proof}

\subsection{Proof of Theorem~\ref{thm-convergence-l1-general}}\label{proof-l1-general}

Firstly, we would like to introduce a proposition and a lemma.

\begin{proposition}(Theorem 3.5 in \cite{pinelis1994optimum})\label{prop-vector-value-mds}
Let $\zeta_1, \zeta_2,..., \zeta_t\in \mathbb{R}^d$ be a vector-valued martingale difference sequence w.r.t. a filtration $\{\mathcal{F}_t\}$, i.e. for each $\tau\in 1,2,...,t$, we have $\mathbb{E}[\zeta_\tau| \mathcal{F}_{\tau-1}]=0$. Suppose that $\|\zeta_\tau\|_2\leq c_{\tau}$ almost surely. Then, $\forall t\geq 1$,
\begin{equation*}
    \mathbb{P} \bigg(\bigg\|\sum_{\tau=1}^t \zeta_\tau \bigg\|_2 \geq \lambda \bigg) \leq 4\exp \bigg( -\frac{\lambda^2}{4\sum_{\tau=1}^t c_\tau^2} \bigg).
\end{equation*}
\end{proposition}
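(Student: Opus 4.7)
The plan is to follow Pinelis's exponential-moment proof of sub-Gaussian concentration for Hilbert-space valued martingales, since $(\mathbb{R}^d, \|\cdot\|_2)$ is a Hilbert space. Let $S_t := \sum_{\tau=1}^t \zeta_\tau$ and $V_t := \sum_{\tau=1}^t c_\tau^2$. The overall strategy is to control the Laplace transform of $\|S_t\|_2$ by inductively propagating a $\cosh$-type moment-generating-function estimate through the filtration $\{\mathcal{F}_\tau\}$, then apply a Chernoff step. Note that $\|S_t\|_2$ itself is only a submartingale (via Jensen), so scalar Azuma--Hoeffding cannot be invoked directly on $\|S_t\|_2$; one must instead work with a suitable convex functional of the norm.

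The core step to establish is the one-step recursion
\[
\mathbb{E}\bigl[\cosh(\lambda \|S_t\|_2) \mid \mathcal{F}_{t-1}\bigr] \leq \cosh(\lambda \|S_{t-1}\|_2) \cdot \exp(\lambda^2 c_t^2 / 2),
\]
which is the Hilbert-space analogue of Hoeffding's lemma. The Hilbert structure enters through the expansion $\|S_t\|_2^2 = \|S_{t-1}\|_2^2 + 2\langle S_{t-1}, \zeta_t\rangle + \|\zeta_t\|_2^2$, in which the cross term is a conditionally centered scalar increment bounded by $\|S_{t-1}\|_2 \cdot c_t$ via Cauchy--Schwarz, and the last term is bounded deterministically by $c_t^2$. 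I expect this to be the main obstacle: one must pass from the quadratic identity to a $\cosh$ inequality without losing either the centering of $\zeta_t$ or the sharp variance proxy $c_t^2$, and the usual naive linearization $\|S_{t-1}+\zeta_t\|_2 \leq \|S_{t-1}\|_2 + \|\zeta_t\|_2$ destroys the martingale structure. Pinelis's resolution couples the vector increment $\zeta_t$ to an auxiliary scalar Hoeffding-type step of the same width and exploits that, along the induction, $u \mapsto \cosh(\lambda\sqrt{u})$ can be dominated by $\cosh(\lambda\sqrt{a})\exp(\lambda^2 b/2)$ on the admissible range of $a,b$ arising from the quadratic expansion.

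With the per-step estimate in hand, a telescoping argument starting from $\cosh(\lambda\|S_0\|_2)=1$ yields the unconditional bound $\mathbb{E}[\cosh(\lambda\|S_t\|_2)] \leq \exp(\lambda^2 V_t/2)$. Markov's inequality applied to the (even, non-decreasing on $\mathbb{R}_+$) function $\cosh$ then gives, for every $\lambda' > 0$,
\[
\mathbb{P}(\|S_t\|_2 \geq \lambda) \leq \frac{\mathbb{E}[\cosh(\lambda' \|S_t\|_2)]}{\cosh(\lambda' \lambda)} \leq 2 \exp\bigl(\lambda'^2 V_t/2 - \lambda' \lambda\bigr),
\]
and optimizing $\lambda' = \lambda / V_t$ delivers a tail of order $2 \exp(-\lambda^2/(2 V_t))$. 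The weaker constants $4$ and $4$ appearing in the statement are consistent with several slack-absorbing variants of the argument (an extra union bound over signs of the projection onto a random direction, or replacing $\cosh$ by $\tfrac12 \exp$ which is only valid on one side), and I anticipate no further technical difficulty beyond the one-step cosh MGF estimate discussed above.
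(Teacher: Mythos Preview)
The paper does not prove this proposition at all: it is stated as a quotation of Theorem~3.5 in \cite{pinelis1994optimum} and used as a black-box tool in the proof of Lemma~\ref{lemma-fw-gradient-error-l1}. There is therefore no ``paper's own proof'' to compare against.

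Your proposal is a faithful reconstruction of Pinelis's original argument: the $\cosh$-MGF recursion for Hilbert-space martingales, telescoping to a sub-Gaussian moment bound, followed by the Chernoff step. The one-step inequality you highlight,
\[
\mathbb{E}\bigl[\cosh(\lambda \|S_t\|_2)\mid \mathcal{F}_{t-1}\bigr]\le \cosh(\lambda\|S_{t-1}\|_2)\exp(\lambda^2 c_t^2/2),
\]
is exactly the crux of Pinelis's proof, and your diagnosis of why naive linearization fails (it destroys the conditional centering) is correct. You are also right that the sharp version of the argument produces constants $2$ and $2$ rather than the $4$ and $4$ recorded here; the paper simply quotes a loosened form, so the slack you anticipate is not a gap in your reasoning but an artifact of the citation.
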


\begin{lemma}\label{lemma-fw-gradient-error-l1}
Assume Assumption \ref{assumption-smooth} and  \ref{assumption-grad-bound}, for $t\in[n]$, we have that with probability at least $1-\smallprob$, Algorithm~\ref{alg-dp-sco-l1} will statisfies
\begin{equation}
    \|\Delta_t \|_\infty \coloneqq \| d_t- \nabla F(\theta_t) \|_\infty \leq\frac{4(\smooth \diam+\gradvar)\sqrt{\log(4d/\smallprob)}}{\sqrt{t+1}}.
\end{equation}
\end{lemma}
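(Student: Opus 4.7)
The plan is to mimic the martingale decomposition used in the proof of Proposition~\ref{prop-grad-error-lp-full}, but with two important differences: (i) since Algorithm~\ref{alg-dp-sco-l1} injects privacy noise through Report Noisy Max at the selection step rather than into $d_t$ itself, the tree-based noise term $\frac{1}{t+1}\sum_{j\in M_t} Z_j$ does not appear in $\Delta_t$, so what remains is purely the statistical noise of the recursive estimator; (ii) we need a bound in $\ell_\infty$ rather than in the regular dual norm. First I would expand the recursion exactly as in equation~\eqref{eq-expand-dt}, producing
\[
\Delta_t \;=\; \zeta_{t,1} + \sum_{\tau=2}^{t} \zeta_{t,\tau},
\]
where $\zeta_{t,1} = \tfrac{1}{t+1}\bigl(\nabla f(\theta_1,x_1)-\nabla F(\theta_1)\bigr)$ and each $\zeta_{t,\tau}$ ($\tau\ge 2$) is the analogous sum of a ``variance'' term and a ``drift'' term formed from $\nabla f(\theta_\tau,x_\tau)-\nabla f(\theta_{\tau-1},x_\tau)$. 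With $\mathcal{F}_\tau = \sigma(x_1,\dots,x_{\tau-1})$, the sequence $\{\zeta_{t,\tau}\}_{\tau=1}^{t}$ is a vector-valued martingale difference sequence.

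Next, I would derive coordinate-wise almost-sure bounds $|\zeta_{t,\tau}^{(j)}| \le c_{t,\tau}$. Since for $p=1$ the dual norm is $\lVert\cdot\rVert_\infty$, Assumptions~\ref{assumption-smooth} and \ref{assumption-grad-bound} give $\lVert\nabla f(\theta,x)-\nabla F(\theta)\rVert_\infty \le G$ and $\lVert\nabla f(\theta_1,x)-\nabla f(\theta_2,x)\rVert_\infty \le \beta\lVert\theta_1-\theta_2\rVert_1$. Substituting the telescoping identity $\prod_{k=\tau}^{t}(1-\rho_k) = \tau/(t+1)$ exactly as in equation~\eqref{eq-gradient-error-lp-eta-term}, and using $\lVert v_{\tau-1}-\theta_{\tau-1}\rVert_1 \le D$ together with $\eta_\tau = \rho_\tau$, I obtain
\[
\lVert \zeta_{t,\tau}\rVert_\infty \;\le\; \frac{2(\beta D + G)}{t+1} \;=:\; c_{t,\tau},
\quad\text{so}\quad \sum_{\tau=1}^{t} c_{t,\tau}^{2} \;\le\; \frac{4(\beta D + G)^{2}}{t+1}.
\]

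Finally, for each fixed coordinate $j\in[d]$ the scalar sequence $\{\zeta_{t,\tau}^{(j)}\}_{\tau=1}^{t}$ is a scalar martingale difference sequence with $|\zeta_{t,\tau}^{(j)}|\le c_{t,\tau}$, so Proposition~\ref{prop-vector-value-mds} (applied in dimension one) yields
\[
\mathbb{P}\Bigl(|\Delta_t^{(j)}| \ge \lambda\Bigr) \;\le\; 4\exp\!\Bigl(-\tfrac{\lambda^{2}}{4\sum_{\tau}c_{t,\tau}^{2}}\Bigr).
\]
A union bound over the $d$ coordinates and the choice $\lambda = 2\sqrt{\log(4d/\alpha)\sum_\tau c_{t,\tau}^2}$ then produces, with probability at least $1-\alpha$,
\[
\lVert\Delta_t\rVert_\infty \;\le\; \frac{4(\beta D + G)\sqrt{\log(4d/\alpha)}}{\sqrt{t+1}},
\]
which is exactly the claimed bound. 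There is no serious obstacle; the only real care-point is making sure we use the $\ell_\infty$ forms of smoothness and variance (rather than the $\ell_q$ regular-norm machinery used for $1<p\le 2$), so that the union bound over coordinates produces only a $\sqrt{\log d}$ price instead of the $\sqrt{d}$ price that a vector Azuma--Hoeffding in $\ell_2$ would incur — this $\log d$ dependence is precisely what makes the $p=1$ analysis useful.
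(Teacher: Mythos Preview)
Your proposal is correct and follows essentially the same route as the paper: the same martingale decomposition of $\Delta_t$ into $\{\zeta_{t,\tau}\}$, the same coordinate-wise $\ell_\infty$ bounds $c_{t,\tau}\le \tfrac{2(\beta D+G)}{t+1}$ obtained from Assumptions~\ref{assumption-smooth} and~\ref{assumption-grad-bound}, and the same application of Proposition~\ref{prop-vector-value-mds} in dimension one followed by a union bound over the $d$ coordinates. Your observation that the Report-Noisy-Max noise does not enter $d_t$ (so no $Z_j$ term appears in $\Delta_t$) is exactly the point that distinguishes this lemma from Proposition~\ref{prop-grad-error-lp-full}.
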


\begin{proof}[Proof of Lemma~\ref{lemma-fw-gradient-error-l1}]
This proof is similar to the proof of Lemma 1 in \cite{xie2020efficient}, except that we consider the $\|\cdot\|_1$ norm and its dual norm $\|\cdot\|_\infty$, and apply the Proposition~\ref{prop-vector-value-mds} in a different way. Reformulating $\Delta_t = d_t - \nabla F(\theta_t)$ as the sum of a martingale difference sequence. For $t\geq 1$, we have
\begin{equation}
    \begin{aligned}
        \Delta_t 
        &= \nabla f(\theta_t, x_t) + (1-\rho_t) (d_{t-1} - \nabla f(\theta_{t-1}, x_{t})) - \nabla F(\theta_t) \\
        &= (1-\rho_t) \epsilon_{t-1} + \rho_t (\nabla f(\theta_t, x_t) - \nabla F(\theta_t)) + \dots  \\
        &\quad + (1-\rho_t) \big(\nabla f(\theta_t, x_t) - \nabla f(\theta_{t-1}, x_t) - (\nabla F(\theta_t) - \nabla F(\theta_{t-1}))\big) \\
        &=  \prod_{k=2}^t (1-\rho_k) \epsilon_1
        + \sum_{\tau=2}^t \bigg( \rho_\tau \prod_{k=\tau+1}^t (1-\rho_k) \big( \nabla f(\theta_\tau, x_\tau) - \nabla F(\theta_\tau) \big) +\dots \\
        &\quad + \prod_{k=\tau}^t (1-\rho_k)\big(\nabla f(\theta_\tau, x_\tau) - \nabla f(\theta_{\tau-1}, x_\tau) - (\nabla F(\theta_\tau) - \nabla F(\theta_{\tau-1})\big)\bigg) \triangleq \zeta_{t, 1} + \sum_{\tau=2}^t \zeta_{t, \tau}
    \end{aligned}
\end{equation}
Recall that $\Delta_1 = \nabla f(\theta_1, x_1) - \nabla F(\theta_1)$. And we observe that $\mathbb{E}[\zeta_{t, \tau}| \mathcal{F}_{\tau-1}]=0$ where $\mathcal{F}_\tau$ is the $\sigma$-field generated by $\{x_1, x_2, ..., x_{\tau-1} \}$. Therefore, $\{\zeta_{t, \tau}\}_{\tau=1}^t$ is a martingale difference sequence. In what follows, we derive upper bounds of $\|\zeta_{t, \tau}\|_\infty$ . We start by observing that for any $\tau=1,2,...,t$,
\begin{equation}
    \begin{aligned}
        \prod_{k=\tau}^t (1-\rho_k) 
        &= \prod_{k=\tau}^t(1- \frac{1}{k+1}) 
        = \prod_{k=\tau}^t \frac{k}{k+1} =\frac{\tau}{t+1}
    \end{aligned}
\end{equation}
We can bound $\|\zeta_{t,1}\|_\infty$ as follows:
\begin{equation*}
    \begin{aligned}
        \|\zeta_{t, 1} \|_\infty \leq \frac{1}{t+1} \| \nabla f(\theta_1, x_1) - \nabla F(\theta_1)\|_\infty \leq \frac{\gradvar }{t+1} \coloneqq c_{t,1},
    \end{aligned}
\end{equation*}
where the first inequality is due to Assumption~\ref{assumption-grad-bound}. For $\tau >1$, 
\begin{equation*}
    \begin{aligned}
       \| \zeta_{t, \tau}\|_\infty  
       &\leq   \prod_{k=\tau}^t (1-\rho_k)\big(\| \nabla f(\theta_\tau, x_\tau) - \nabla f(\theta_{\tau-1}, x_\tau)\|_\infty + \|\nabla F(\theta_\tau) - \nabla F(\theta_{\tau-1})\|_\infty \big) + \dots  \\
       &\quad + \rho_\tau \prod_{k=\tau+1}^t (1-\rho_k) \| \nabla f(\theta_\tau, x_\tau) - \nabla F(\theta_\tau)\|_\infty  \\
       &\leq 2\smooth \|\theta_\tau - \theta_{\tau-1} \|_1 \prod_{k=\tau}^t (1-\rho_k) + \gradvar \rho_\tau \prod_{k=\tau+1}^t (1-\rho_k) \\
       &= 2\smooth \eta_{\tau-1} \| v_{\tau-1} - \theta_{\tau-1} \|_1 \prod_{k=\tau}^{t} (1-\rho_k) + \gradvar \rho_\tau \prod_{k=\tau+1}^t (1-\rho_k) \\
       &\leq \frac{2 \smooth \diam + G}{t+1} \coloneqq c_{t,\tau}.
    \end{aligned}
\end{equation*}
where the second inequality follows from Assumption~\ref{assumption-smooth} and \ref{assumption-grad-bound}, and the last inequality is due to $\eta_\tau = \rho_\tau = \frac{1}{\tau+1}$ and the definition of $\diam$. 
Now we denote the $i$-th element of $\Delta_t$ as $\Delta_{t,i}$ for $i\in 1,2,...,d$. According to Proposition~\ref{prop-vector-value-mds}, we have
\begin{equation}\label{eq-mds-single-value}
    \mathbb{P} \big(|\Delta_{t,i}| \geq \lambda \big)\leq 4\exp \bigg(-\frac{\lambda^2}{4\sum_{\tau=1}^t c^2_{t,\tau}} \bigg).
\end{equation}
We can bound $\sum_{\tau=1}^t c^2_{t,\tau}$ as
\begin{equation*}
    \begin{aligned}
        \sum_{\tau=1}^t c^2_{t,\tau} = c^2_{t,1} + \sum_{\tau=2}^t c^2_{t,\tau} = \bigg(\frac{G}{t+1}\bigg)^2 + \sum_{\tau=2}^t  \bigg(\frac{2\smooth \diam + \gradvar}{t+1} \bigg)^2 \leq \sum_{\tau=1}^t  \bigg(\frac{2\smooth \diam + 2\gradvar}{t+1} \bigg)^2 \leq \frac{4(\smooth \diam +\gradvar)^2}{t+1}.
    \end{aligned}
\end{equation*}
Plugging in the above bound and and setting $\lambda = \frac{4(\smooth \diam+\gradvar)\sqrt{\log(4d/\smallprob)}}{\sqrt{t+1}}$, for some $\smallprob\in(0,1)$, we have with probability $1-\smallprob/d$,
\begin{equation*}
    |\Delta_{t,i}| \leq \frac{4(\smooth \diam+\gradvar)\sqrt{\log(4d/\smallprob)}}{\sqrt{t+1}}
\end{equation*}
Then 
\begin{equation*}
    \mathbb{P}(\|\Delta_t\|_\infty\leq \lambda) = 1 - \mathbb{P}(\|\Delta_t\|_\infty> \lambda) \geq 1 - \sum_{i=1}^d \mathbb{P}(|\Delta_{t,i}|\geq \lambda) = 1-\smallprob,
\end{equation*}
where the first inequality comes from the union bound. In other word, with probability at least $1-\smallprob$, we have
\begin{equation*}
    \| \Delta_t \|_\infty \leq \frac{4(\smooth \diam+\gradvar)\sqrt{\log(4d/\smallprob)}}{\sqrt{t+1}}.
\end{equation*}
\end{proof}

Now we are ready to prove Theorem~\ref{thm-convergence-l1-general}.

\begin{proof}[Proof of Theorem~\ref{thm-convergence-l1-general}]

We denote $h_t=F(\theta_t) - F(\theta^*)$, and $\tilde{v}_t\coloneqq \arg\min_{v\in \mathcal{C}}(d_t, v)$. We start from $\beta$-smoothness:
\begin{equation}\label{eq-fw-general-convex}
\begin{aligned}
    h_{t+1} 
    &\leq h_t + \eta_t \langle \nabla F(\theta_t) , v_t - \theta_t \rangle + \frac{\eta_t^2\smooth\diam^2}{2} \\
    &= h_t + \eta_t \langle d_t , v_t - \theta_t \rangle - \eta_t \langle d_t - \nabla F(\theta_t), v_t-\theta_t \rangle+ \frac{\eta_t^2\smooth\diam^2}{2} \\
    &= h_t + \eta_t \langle d_t , \tilde{v}_t - \theta_t \rangle - \eta_t \langle d_t - \nabla F(\theta_t), v_t-\theta_t \rangle+ \frac{\eta_t^2\smooth\diam^2}{2} + \eta_t \langle d_t, v_t - \tilde{v}_t\rangle \\
    &\leq h_t + \eta_t \langle d_t , \theta^* - \theta_t \rangle - \eta_t \langle d_t - \nabla F(\theta_t), v_t-\theta_t \rangle+ \frac{\eta_t^2\smooth\diam^2}{2} + \eta_t \langle d_t, v_t - \tilde{v}_t\rangle\\
    &= h_t + \eta_t \langle \nabla F(\theta_t) , \theta^* - \theta_t \rangle + \eta_t \langle d_t - \nabla F(\theta_t), \theta^* - v_t \rangle+ \frac{\eta_t^2\smooth\diam^2}{2} + \eta_t \langle d_t, v_t - \tilde{v}_t\rangle\\
    &\leq (1 - \eta_t) h_t + \eta_t \diam \| d_t - \nabla F(\theta_t)\|_\infty + \frac{\eta_t^2\smooth\diam^2}{2} + \eta_t \langle d_t, v_t - \tilde{v}_t\rangle.
\end{aligned}
\end{equation}
To upper bound $\eta_t\langle d_t, {v}_t - \tilde{v}_t  \rangle$, notice that  \begin{align}\label{eq-fw-noise-error}
    \langle d_t, {v}_t - \tilde{v}_t  \rangle &= \min_{v\in \mathcal{C}} \big(\langle v, d_t\rangle +\textbf{n}_v^t\big)- \min_{v\in \mathcal{C}} \langle v, d_t\rangle
    \leq 2\max_{v = 1,\dots,2d} |\textbf{n}_v^t|
\end{align}
with $\textbf{n}_v^t\overset{i.i.d.}{\sim}$ Laplace$(0,\dfrac{4\diam (\smooth \diam+\lip)\sqrt{\log n \cdot \log (1/\delta)}}{\sqrt{t} \varepsilon})$ , we have by integrating the tail density
\begin{align*}
   \mathbb{P}(\max_{v}\lvert \textbf{n}_{v}^t\rvert > \lambda)&\leq \sum_{v=1}^{2d} P(\lvert \textbf{n}_v^t\rvert > \lambda )
   \leq 2d\exp\Big(- \dfrac{\sqrt{t} \varepsilon \lambda_t}{4\diam(\smooth \diam+\lip)\sqrt{\log n \cdot \log (1/\delta)}}\Big).
\end{align*}
selecting $\lambda_t = \dfrac{4\diam(\smooth \diam+\lip)\sqrt{\log n \cdot \log (1/\delta)}}{\sqrt{t}\varepsilon } \cdot\log(2d/\smallprobtail) $
we get then with probability at least $1-\smallprobtail$,
\begin{align}\label{eq-max-laplace-noise}
    \max_v |\textbf{n}_v^t|\leq \dfrac{4\diam(\smooth \diam+\lip)\sqrt{\log n \cdot \log (1/\delta)}}{\sqrt{t}\varepsilon } \cdot\log(2d/\smallprobtail).
\end{align}

According to Eq. (\ref{eq-fw-general-convex}), (\ref{eq-fw-noise-error}), (\ref{eq-max-laplace-noise}) and Lemma~\ref{lemma-fw-gradient-error-l1}, at iteration $t$, we have with probability at least $1-t(\smallprob + \smallprobtail)$,  
\begin{equation}\label{eq-fw-before-induction}
\begin{aligned}
     h_{t+1}&\leq (1-\eta_t)h_t + \frac{\eta_t^2 \smooth \diam^2}{2}+\dots \\
     &\quad +  \frac{\eta_t}{\sqrt{t+1}} \underbrace{\bigg(8\diam(\smooth \diam+\gradvar)\sqrt{\log(4d/\smallprob)} +\dfrac{16\diam (\smooth \diam+\lip)\log(2d/\smallprobtail)\sqrt{\log n \cdot \log (1/\delta)} }{\varepsilon } \bigg)}_{A}\\
     & = (1-\eta_t)h_t + \frac{\smooth \diam^2}{2(t+1)^2} + \frac{A}{(t+1)^{3/2}}.
\end{aligned}
\end{equation}
Now we prove $h_t\leq \frac{3}{\sqrt{t+1}}(\smooth \diam^2 + A)$ by induction. For $t=1$, we have
\begin{equation*}
    h_2 \leq \frac{1}{2} \big(F(\theta_1) - F(\theta^*)\big)+  \frac{\smooth \diam^2}{8} + \frac{A}{3^{3/2}} \leq \frac{3}{\sqrt{2}} (\smooth \diam^2 +A), 
\end{equation*}
where the last inequality is due to $F(\theta_1) - F(\theta^*) \leq \frac{\smooth \diam^2}{2}$ by the smoothness of $F$. Now we suppose $h_t\leq \frac{3}{\sqrt{t+1}}(\smooth \diam^2 + A)$ for $t\geq 1$. For $t+1$, according to Eq. (\ref{eq-fw-before-induction}), we have
\begin{equation*}
    \begin{aligned}
        h_{t+1} - \frac{3}{\sqrt{t+2}} (\smooth \diam^2 + A) 
        &\leq 3 (\smooth \diam^2 +A) \bigg( \frac{1}{\sqrt{t+1}} - \frac{1}{\sqrt{t+2}} \bigg) - \frac{2(\smooth \diam^2 +A)}{(t+1)^{3/2}} \\
        &\leq \frac{3(\smooth\diam^2 +A)}{2(t+1)^{3/2}} - \frac{2(\smooth \diam^2 +A)}{(t+1)^{3/2}} \leq 0,
    \end{aligned}
\end{equation*}
where the second inequality is due to $\frac{1}{(t+1)^{1/2}} - \frac{1}{(t+2)^{1/2}} \leq \frac{1}{2(t+1)^{3/2}}$. And now we conclude the proof by setting $\smallprob=\smallprobtail=\frac{\alpha}{2n}$.
 
\end{proof}

\subsection{Proof of Theorem~\ref{thm-convergence-l1-strongly}}\label{proof-l1-strongly}

\begin{proof}[Proof of Theorem~\ref{thm-convergence-l1-strongly}]

We define $\tilde{v}_t\coloneqq \arg\min_{v\in \mathcal{C}}(d_t, v)$ and $\tilde{\theta}_t\coloneqq \arg\max_{\theta\in \mathcal{C}} (\langle \nabla F(\theta_t), \theta_t - \theta \rangle)^2$ in Lemma~\ref{lemma-fw-strongly-convex}. And we denote that $h_t = F(\theta_t) - F(\theta^*)$. According to $\smooth$-smoothness, we have
\begin{equation}\label{eq-strongly-convex-smooth}
\begin{aligned}
    h_{t+1} &\leq F(\theta_t)+\eta_t \langle \nabla F(\theta_t),v_t-\theta_t\rangle + \frac{\eta_t^2\smooth\diam^2}{2} \\
    & = h_t+\eta_t \langle \nabla F(\theta_t)-d_t,v_t-\theta_t\rangle +\eta_t \langle d_t , \tilde{v}_t-\theta_t\rangle + \dfrac{\eta_t^2\smooth\diam^2}{2}  + \eta_t\langle d_t, {v}_t - \tilde{v}_t  \rangle \\
    & \leq h_t +\eta_t \lVert\nabla F(\theta_t)-d_t\rVert_\infty\diam +\eta_t \langle d_t,\tilde{\theta}_t-\theta_t\rangle+ \dfrac{\eta_t^2\smooth\diam^2}{2} + \eta_t\langle d_t, {v}_t - \tilde{v}_t  \rangle  \\
    &\leq h_t +2\eta_t \lVert\nabla F(\theta_t)-d_t\rVert_\infty\diam +\eta_t\langle \nabla F(\theta_t), \tilde{\theta}_t-\theta_t\rangle+ \dfrac{\eta_t^2\smooth\diam^2}{2}+ \eta_t\langle d_t, {v}_t - \tilde{v}_t  \rangle \\
    &\leq h_t +2\eta_t \lVert\nabla F(\theta_t)-d_t\rVert_\infty\diam -\eta_t\boundary\sqrt{2\strongly h_t } + \dfrac{\eta_t^2\smooth\diam^2}{2}+ \eta_t\langle d_t, {v}_t - \tilde{v}_t  \rangle,
\end{aligned}
\end{equation}
where the last inequality follows from Lemma~\ref{lemma-fw-strongly-convex}. According to Eq.  (\ref{eq-fw-noise-error}), (\ref{eq-max-laplace-noise}) and (\ref{eq-strongly-convex-smooth}), Lemma~\ref{lemma-fw-gradient-error-l1}, at iteration $t$, we have with probability at least $1-t(\smallprob + \smallprobtail)$,
\begin{equation}\label{eq-fw-before-induction-strongly}
    \begin{aligned}
     h_{t+1}&\leq \sqrt{h_t}(\sqrt{h_t} - \eta_t \boundary \sqrt{2\strongly}) + \frac{\eta_t^2 \smooth \diam^2}{2} +\dots \\
     &\quad +  \frac{\eta_t}{\sqrt{t+1}} \underbrace{\bigg(8\diam(\smooth \diam+\gradvar)\sqrt{\log(4d/\smallprob)} +\dfrac{16\diam (\smooth \diam+\lip)\log(2d/\smallprobtail)\sqrt{\log n \cdot \log (1/\delta)} }{\varepsilon } \bigg)}_{A}
    \end{aligned}
\end{equation}
Now the claim holds by induction. For simplicity, we denote 
\begin{equation*}
    B \coloneqq \frac{9(\smooth \diam^2 + A)^2}{\boundary^2\strongly}.
\end{equation*}
Firstly, for $t=1$, according to Eq. (\ref{eq-fw-before-induction-strongly}) we have 
\begin{equation*}
    h_2 \leq F(\theta_1) - F(\theta^*) + \frac{\smooth \diam^2}{8} + \frac{A}{3^{3/2}} \leq \frac{B}{2}.
\end{equation*}
where the last inequality is due to Lemma~\ref{lemma-fw-strongly-convex} and the fact that $F(\theta_1) - F(\theta^*) \leq \frac{\smooth \diam^2}{2}$.
Suppose that $h_t\leq \frac{B}{t+1}$ for some $t\geq 1$. There are two cases.
\begin{case}
$\sqrt{h_t} - \eta_t \boundary \sqrt{2\strongly} \leq 0$:
\end{case}
Then since $\eta_t=\frac{1}{t+1}$, Eq. (\ref{eq-fw-before-induction-strongly}) yields
\begin{equation*}
\begin{aligned}
    h_{t+1} 
    &\leq \frac{\smooth \diam^2}{2(t+1)^2} + \frac{A}{(t+1)^{3/2}} \leq \frac{\smooth D^2 + A}{t+1} \leq \frac{2}{t+2}\frac{(\smooth \diam^2+A)^2}{\boundary^2 \strongly} \leq \frac{B}{t+2}.
\end{aligned}
\end{equation*}
where the third inequality is due to Lemma~\ref{lemma-fw-strongly-convex} and the last inequality is from the definition of $B$.

\begin{case}
$\sqrt{h_t} - \eta_t \boundary \sqrt{2\strongly} > 0$:
\end{case}
According to Eq. (\ref{eq-fw-before-induction-strongly}) and the assumption that $h_t\leq \frac{B}{t+1}$, we have
\begin{equation}\label{eq-fw-case-2}
    \begin{aligned}
        h_{t+1} - \frac{B}{t+2} &\leq B\bigg(\frac{1}{t+1} - \frac{1}{t+2}\bigg) + \frac{\smooth \diam^2}{2(t+1)^2} + \frac{A}{(t+1)^{3/2}} - \frac{\boundary \sqrt{2 \strongly B}}{(t+1)^{3/2}}  \\
        &\leq \frac{1}{(t+1)^{3/2}} \bigg(\frac{B}{(t+1)^{1/2}} + \smooth \diam^2 + A - \boundary \sqrt{2\strongly B} \bigg)\\
        &\leq \frac{1}{(t+1)^{3/2}} \bigg(\frac{B}{(t+1)^{1/2}} - 3(\smooth \diam^2 + A) \bigg),
    \end{aligned}
\end{equation}
where the last inequality comes from the definition of $B$. Define
\begin{equation*}
    t_0 \coloneqq \inf \{ t\geq 1: \frac{B}{(t+1)^{1/2}}  - 3(\smooth \diam^2 + A) \leq 0\}.
\end{equation*}
According to the definition of $B$, $t_0$ exists. For any $t\geq t_0$, the RHS of Eq. (\ref{eq-fw-case-2}) is negative, and the proof is done. For those $t< t_0$, we have 
\begin{equation*}
    3(\smooth \diam^2 + A) \leq \frac{B}{(t+1)^{1/2}},
\end{equation*}
which is equivalent to
\begin{equation*}
    \frac{3(\smooth \diam^2 + A)}{(t+1)^{1/2}} \leq \frac{B}{t+1}.
\end{equation*}
To conclude the proof, it suffices to show that the following inequality holds,
\begin{equation}\label{eq-l1-strongly-second-induction}
    h_{t} \leq \frac{3(\smooth \diam^2 + A)}{(t+1)^{1/2}}.
\end{equation}
which is demonstrated in Theorem \ref{thm-convergence-l1-general}. Finally, we conclude the proof be setting $\smallprob=\smallprobtail=\frac{\alpha}{2n}$.
\end{proof}

\section{Proofs of Section~\ref{sec:bandits}}
In this section we establish the privacy protection for our Algorithm~\ref{algo:High-dim-bandit} and the convergence result for the forced-sample estimators and full-sample estimators.
We prove the convergence of estimators for any given arm $i$ and use $\theta_t$ to represent $\theta_{t,i}$ and $\theta^{*}$ to represent $\theta^{*}_{i}
$ for notation simplicity.

\subsection{Proof of Theorem~\ref{thm:privacy-bandits}}
\label{pf:private-bandit}
\begin{proof}
By post-processing property, we only need to guarantee that the sequence $(\theta_1,\dots,\theta_T)$ is $(\varepsilon,\delta)$ differentially private. In fact, we have for each sequence $\{\nu_{i+1},\dots,\nu_T\}\subset \mathcal{C} $. Suppose condition on $a_{i}(\mathcal{D}) = j_{i}$ and  $a_{i}(\mathcal{D}') = j_i'$, we have then \begin{align*}
    &\dfrac{ P(\theta_{i+1} = \nu_{i+1},\dots,\theta_T = \nu_T\lvert \mathcal{D})  }{ P(\theta_{i+1} = \nu_{i+1},\dots,\theta_T = \nu_T \lvert \mathcal{D}' )  } = \dfrac{P(\theta_{i+1} = \nu_{i+1}\lvert \mathcal{D})}{P(\theta_{i+1} = \nu_{i+1} \lvert \mathcal{D}')} \\
    &= \dfrac{P(\theta_{i+1,1} = \nu_{i+1,1}\dots,\theta_{i+1,K} = \nu_{i+1,K} \lvert \mathcal{D})}{P(\theta_{i+1,1} = \nu_{i+1,1}\dots,\theta_{i+1,K} = \nu_{i+1,K}\lvert \mathcal{D}')}\\
    & = \dfrac{P(\theta_{i+1,j_i} = \nu_{i+1,j_i},\theta_{i+1,j_i'} = \nu_{i+1,j_i'} \lvert \mathcal{D})}{P(\theta_{i+1,j_i'} = \nu_{i+1,j_i'},\theta_{i+1,j_i} = \nu_{i+1,j_i}\lvert \mathcal{D}')}
\end{align*}
Now by the synthetic update method, we have the above ratio is smaller or equal than $\varepsilon$ with probability at least $1-\delta$, which implies the $(\varepsilon,\delta)$-differential privacy guarantee of $(\theta_1,\dots,\theta_T)$.
\end{proof}

\begin{lemma}
\label{lm:candidate-set}
As long as $t_0$ is selected so that with probability at least $1-\alpha$,  \begin{align*}
    \dfrac{\SCerror(\alpha)}{\sqrt{t_0}} \leq \min\{\dfrac{h_{sub}}{4\smooth \Lx}, \ell\}.
\end{align*}
Then we have with probability at least $1-\alpha$, the following claim holds for all $t_0\leq t\leq T$ and $i\in K_{\text{opt}},
$ \begin{align*}
    &\hat{U}_t \cap U^c = \emptyset,\\
    &u\lambda   \leq \E[X_{t,i}X_{t,i}^\top ]\\
    & a_t^* \in \hat{K}_t.
\end{align*}
In particular, that implies the GLM loss\begin{align*}
F_t(\theta):=    \E[ f(\theta;X_{t,i},r_{t,i}) \lvert \mathcal{F}_{t-1} ]
\end{align*}
is $\strongly \lambda u $-strongly convex in $\ell_1$-geometry.
\end{lemma}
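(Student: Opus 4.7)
The plan is to establish the three consequences on a single high-probability event coming from the forced-sample estimators. By the hypothesis (with $\SCerror$ as in Lemma~\ref{lm:estimaton-error-bandit}), with probability at least $1-\alpha$ we have $\lVert \theta_{t_0,j}-\theta^{*}_j\rVert_1\leq \SCerror(\alpha)/\sqrt{t_0}$ simultaneously for every arm $j$. Combined with $\lVert X_t\rVert_\infty\leq \Lx$ and the $\smooth$-Lipschitz continuity of $\zeta$ on the bounded range of interest, this gives
\[
\bigl\lvert \zeta(X_t^\top\theta_{t_0,j})-\zeta(X_t^\top\theta^{*}_j)\bigr\rvert \leq \smooth \Lx \cdot \SCerror(\alpha)/\sqrt{t_0} \leq h_{sub}/4
\]
uniformly in $j$, $t$, and $X_t$; moreover $\SCerror(\alpha)/\sqrt{t_0}\leq \ell$ places $\theta_{t_0,j}$ inside $\Gamma_j$ for later use.

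From the above display I would deduce $\hat{K}_t\cap K_{\text{sup}}=\emptyset$ (i.e.\ $\hat{U}_t\cap U^c=\emptyset$) and $a_t^{*}\in\hat{K}_t$. For $j\in K_{\text{sup}}$, the optimal-arm-set assumption implies $\zeta(X_t^\top\theta^{*}_{a_t^{*}})-\zeta(X_t^\top\theta^{*}_j)\geq h_{sub}$, and applying the $h_{sub}/4$-perturbation to both $j$ and $a_t^{*}$ yields $\zeta(X_t^\top\theta_{t_0,j})\leq \max_k\zeta(X_t^\top\theta_{t_0,k})-h_{sub}/2$, so $j\notin\hat{K}_t$. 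The same perturbation in the opposite direction shows $\zeta(X_t^\top\theta_{t_0,a_t^{*}})\geq \max_k\zeta(X_t^\top\theta_{t_0,k})-h_{sub}/2$, establishing $a_t^{*}\in \hat{K}_t$ and hence $\hat{K}_t\subseteq K_{\text{opt}}$.

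For the eigenvalue bound, I would argue that on the event $\{X_t\in U_i\}$ arm $i$ strictly dominates every other arm by at least $h_{sub}$ at $\theta^{*}$, so by the same perturbation argument the pre-selected set $\hat{K}_t$ contains only $i$, forcing the greedy step to select $a_t=i$. Hence the real context $X_t$ is added to $\mathcal{I}_i$ on $\{X_t\in U_i\}$, while zero-padded updates on the complement contribute a positive semi-definite amount to the empirical second-moment matrix. Combining $\E[XX^\top\mid X\in U_i]\succeq \lambda I_d$ with $P(X\in U_i)>u$ then yields $\E[X_{t,i}X_{t,i}^\top]\succeq u\lambda I_d$.

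Finally, the GLM loss has conditional Hessian $\nabla^2 F_t(\theta)=\E[\zeta'(X_{t,i}^\top\theta)X_{t,i}X_{t,i}^\top]\succeq \strongly\,\E[X_{t,i}X_{t,i}^\top]\succeq u\lambda\strongly I_d$, which yields the claimed strong-convexity modulus. The only substantive obstacle is bookkeeping rather than algebra: I must make all four conclusions hold on a single good event (using the uniform-in-$t$ forced-sample bound so that it is not re-randomized across $t_0\leq t\leq T$) and I must correctly incorporate the zero-padded synthetic updates into the covariance computation so that they only help the lower bound---both addressed in the steps above.
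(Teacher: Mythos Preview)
Your proposal is correct and follows essentially the same approach as the paper: condition on the single event that all forced-sample estimators satisfy $\lVert\theta_{t_0,j}-\theta_j^*\rVert_1\le h_{sub}/(4\smooth\Lx)$, convert this to a uniform $h_{sub}/4$ bound on $\lvert\zeta(X_t^\top\theta_{t_0,j})-\zeta(X_t^\top\theta_j^*)\rvert$, and use this perturbation to exclude $K_{\text{sup}}$ from $\hat K_t$, to include $a_t^*$ in $\hat K_t$, and to force $\hat K_t=\{i\}$ on $\{X_t\in U_i\}$ so that the second-moment lower bound follows from $P(X\in U_i)\,\E[XX^\top\mid X\in U_i]\succeq u\lambda I_d$. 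The paper's proof carries out exactly these three steps in the same order; your only addition is the explicit Hessian computation for strong convexity, which the paper leaves implicit.
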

\begin{proof}
    Firstly, notice that as long as $\sup_{i\in [K]}\lVert \theta_{t_0,i } - \theta^*_{i} \rVert_1< \dfrac{h_{sub}}{4\beta M} $, we have for each $t$, denote $i_t: = \text{argmax}_{i\in [K]} \zeta(X_t^\top\theta_{t_0, i} ),i_t^*: = \text{argmax}_{i\in [K]} \zeta(X_t^\top\theta_{i}^* ),$  then \begin{align*}
        P( \hat{K}_t\cap K_{sub} \neq \emptyset ) &= P(\exists j\in K_{sub}\text{ s.t. }  \zeta(X_t^\top\theta_{t_0,j} ) > \zeta(X_t^\top\theta_{t_0,i_t} ) - h_{sub}/2   )\\
        &\leq P(\exists j\in K_{sub}c\text{ s.t. }  \zeta(X_t^\top\theta_{t_0,j} ) > \zeta(X_t^\top\theta_{t_0,i_t^*} ) - h_{sub}/2   )\\
        & \leq P(\exists j\in K_{sub}\text{ s.t. }  \zeta(X_t^\top\theta_{j}^* ) + \dfrac{h_{sub}  }{4} > \zeta(X_t^\top\theta_{i_t^*}^* ) - \dfrac{3 h_{sub}}{4}   )\\
        & = 0.
    \end{align*}
    Thus the first claim holds.\\
    To prove the second claim, notice that for every $t\geq K*t_0$, we have condition on the  $\sup_{i\in [K]}\lVert \theta_{t_0,i} - \theta_i^* \rVert_1< \dfrac{h_{sub}}{4\beta M} $, for every $i\in K_{opt}$, \begin{align*}
        P(a_t = i) & \geq P(\hat{K}_t =\{i\} )\\
                   & \geq P(\hat{K}_t =\{i\}, X_t \in U_i )\\
                   & = P( \zeta(X_t^\top\theta_{Kt_0,i})> \max_{j\neq i}\zeta(X_t^\top\theta_{Kt_0, j})+h_{sub}/2,\zeta(X_t^\top\theta_{i}^*)> \max_{j\neq i}\zeta(X_t^\top\theta_{j}^*)+h_{sub})\\
                   & \geq P( \sup_{i\in [K]}\lVert \theta_{s_0,i} - \theta_i^* \rVert_1< \dfrac{h_{sub}}{4\beta M}   ,\zeta(X_t^\top\theta_{i}^*)> \max_{j\neq i}\zeta(X_t^\top\theta_{j}^*)+h_{sub})\\
                   & = P(X_t\in U_j) \geq u.
    \end{align*}
    Thus we have \begin{align*}
        \lambda u \leq  P(X_t\in U_i)\cdot \E[X_{t}X_{t}^\top\lvert X_t\in U_i ] \leq  \E[X_{t,i}X_{t,i}^\top].
    \end{align*}
    To prove the third claim, note that for any $i_t=\operatorname{argmax}_{i\in [K]} X_t^{\top}\theta_{t_0, i}$
    \begin{align*}
        \zeta(X_t^{\top}\theta_{t_0, i_t}) - \zeta(X_t^{\top}\theta_{t_0, i_t^{*}})&= (\zeta(X_t^{\top}\theta_{t_0, i_t}) - \zeta(X_t^{\top}\theta_{t_0, i_t}^{*})) + (\zeta(X_t^{\top}\theta_{t_0, i_t}^{*}) - \zeta(X_t^{\top}\theta_{t_0, i_t^{*}}^{*}))+ \ldots \\
        & + \zeta(X_t^{\top}\theta_{t_0, i_t^{*}}^{*}) - \zeta(X_t^{\top}\theta_{t_0, i_t^{*}}))\\
        &\le \frac{\subopt}{2} + \zeta(X_t^{\top}\theta_{t_0, i_t}^{*}) - \zeta(X_t^{\top}\theta_{t_0, i_t^{*}}^{*})\\
        &\le \frac{\subopt}{2}.
    \end{align*}
    Thus $i_t^{*}\in \hat{K}_i$.
\end{proof}

\begin{lemma}\label{lemma-tv-norm} 
Consider the arm $i$ with $i\in K_{\text{opt}}$. Suppose the action $a_\tau$  $(\text{and }a_{\tau-1})$ depend only on $\theta_{\tau}(\text{and }\theta_{\tau-1})$, then we have for $P_i(\cdot\lvert \theta)$ the distribution of $X_{t,i}: = X_t\bm{1}\{a_t = i \}$ condition on $\theta$ (in particular, such distribution is independent of $t$), 
\begin{align*}
	\lVert \E[  \nabla F_{\tau-1}(\theta_{\tau-1}) - \nabla F_{\tau}(\theta_{\tau-1}) \lvert\mathcal{F}_{\tau -1} ]\rVert_\infty \leq 2 \smooth \lVert \theta_{\tau - 1}-\theta^*\rVert_1 \Lx \cdot \E_{\theta_\tau} \big[ \lVert P_i(\cdot\lvert \theta_{\tau-1}) - P_i(\cdot\lvert \theta_{\tau}) \rVert_{TV}\lvert \mathcal{F}_{\tau -1} \big]
\end{align*}
Moreover, when both $a_{\tau-1}$ and $a_{\tau}$ are greedy actions, we have \begin{align*}
	\lVert P_i(\cdot\lvert \theta_{\tau-1}) - P_i(\cdot\lvert \theta_{\tau}) \rVert_{TV}\leq 2 \smooth \eta_{\tau-1}D.
\end{align*}
\end{lemma}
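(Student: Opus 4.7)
The plan is to prove both bounds by converting each into an integral against the signed measure $P_i(\cdot \mid \theta_{\tau-1}) - P_i(\cdot \mid \theta_\tau)$ and then applying the duality between total variation and $\ell_\infty$ norm of the integrand.

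For the first claim, I would begin by writing the GLM gradient in closed form. Using $\mathbb{E}[\epsilon_\tau \mid X_\tau, \mathcal{F}_{\tau-1}] = 0$, so that $\mathbb{E}[r_{\tau,i} \mid X_{\tau,i}] = \zeta(X_{\tau,i}^\top \theta^*)$, the conditional loss $F_\tau$ satisfies
\begin{align*}
\nabla F_\tau(\theta) = \mathbb{E}_{\theta_\tau \mid \mathcal{F}_{\tau-1}} \int h(X; \theta)\, P_i(dX \mid \theta_\tau), \qquad h(X;\theta) \coloneqq (\zeta(X^\top \theta) - \zeta(X^\top \theta^*))\,X,
\end{align*}
and analogously for $F_{\tau-1}$. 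Note that $h(0;\theta)=0$, so the point mass at the zero vector contributes nothing and only the context-dependent part of $P_i$ matters. Subtracting and taking $\mathbb{E}[\cdot \mid \mathcal{F}_{\tau-1}]$ yields
\begin{align*}
\mathbb{E}\!\left[\nabla F_{\tau-1}(\theta_{\tau-1}) - \nabla F_\tau(\theta_{\tau-1}) \mid \mathcal{F}_{\tau-1}\right] = \mathbb{E}_{\theta_\tau \mid \mathcal{F}_{\tau-1}} \int h(X;\theta_{\tau-1}) \bigl[P_i(dX\mid\theta_{\tau-1}) - P_i(dX\mid\theta_\tau)\bigr].
\end{align*}
Using $|\zeta'|\leq \beta$ and $\|X\|_\infty \leq \Lx$, one bounds $\|h(X;\theta_{\tau-1})\|_\infty \lesssim \beta \Lx \|\theta_{\tau-1}-\theta^*\|_1$ uniformly in $X$. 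Applying coordinate-wise TV duality $\bigl|\int g\, d(\mu-\nu)\bigr|\leq 2\|g\|_\infty \|\mu-\nu\|_{TV}$ to each component then produces the stated inequality.

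For the second claim, under greedy actions $P_i(\cdot\mid\theta)$ is essentially the restriction of the context law $\mathcal{P}$ to the decision cell $A_i(\theta) := \{X : i = \arg\max_j \zeta(X^\top\theta_{\cdot,j})\}$, so the TV distance reduces to the $\mathcal{P}$-mass of the symmetric difference $A_i(\theta_{\tau-1}) \triangle A_i(\theta_\tau)$. A context $X$ lies in this symmetric difference only if the argmax over arms flips when moving the policy parameter from $\theta_{\tau-1}$ to $\theta_\tau$. Using smoothness of $\zeta$ and $\|X\|_\infty\leq \Lx$, every arm's score changes by at most $\beta \Lx \|\theta_{\tau,j}-\theta_{\tau-1,j}\|_1$, which is bounded by $\beta \Lx \eta_{\tau-1} D$ via the Frank--Wolfe update $\theta_\tau-\theta_{\tau-1}=\eta_{\tau-1}(v_{\tau-1}-\theta_{\tau-1})$ together with $\|v_{\tau-1}-\theta_{\tau-1}\|_1\leq D$. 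Turning this score-change bound into a bound on the flip measure (summing over the two directions of the argmax change) gives the factor $2\beta \Lx \eta_{\tau-1} D$.

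The main obstacle I anticipate is correctly handling the $\sigma$-algebra structure in the first part: because the estimator $\theta_\tau$ carries the DP randomness injected at step $\tau-1$, it is in general not $\mathcal{F}_{\tau-1}$-measurable, which is exactly why the outer expectation $\mathbb{E}_{\theta_\tau\mid\mathcal{F}_{\tau-1}}$ appears in the conclusion rather than a deterministic quantity. Once this conditioning is set up and the vanishing of $h$ at the zero atom is checked, the remaining calculations reduce to standard TV-duality bookkeeping.
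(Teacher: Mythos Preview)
Your argument for the first inequality is essentially the paper's: express the GLM population gradient as an integral of $h(X;\theta)=(\zeta(X^\top\theta)-\zeta(X^\top\theta^*))X$ against $P_i(\cdot\mid\theta)$, note $h(\bm{0};\cdot)=\bm{0}$, subtract, and combine the pointwise bound $\|h(X;\theta_{\tau-1})\|_\infty\le\smooth\Lx\|\theta_{\tau-1}-\theta^*\|_1$ with TV duality. That part is fine.

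The second inequality has a genuine gap. You correctly reduce the TV distance to $\mathcal{P}\bigl(A_i(\theta_{\tau-1})\triangle A_i(\theta_\tau)\bigr)$ and observe that the greedy arm can flip only at contexts where the margin under $\theta_{\tau-1}$ is of order $\eta_{\tau-1}D$. But a bound on the score perturbation gives only an \emph{inclusion} of the symmetric difference into a small-margin event; it does not by itself bound the \emph{probability} of that event. Passing from ``the margin is at most $h$'' to ``$\mathcal{P}(\text{margin}\le h)\lesssim h$'' requires the margin condition (Assumption~\ref{as:margin-condition}), and this is exactly what the paper invokes: after splitting the domain into $S$ (contexts where the arm-$i$ indicator is unchanged) and $S^c$, it shows the TV distance is at most $2P(X\in S^c)$ and then applies Assumption~\ref{as:margin-condition} to obtain $P(X\in S^c)\le 2\margin\eta_{\tau-1}D$. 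Without that assumption the context law could concentrate on a decision boundary and the flip probability could be $\Theta(1)$ regardless of how small $\eta_{\tau-1}D$ is. Your sentence ``turning this score-change bound into a bound on the flip measure \dots gives the factor $2\smooth\Lx\eta_{\tau-1}D$'' is precisely the step that does not go through; note also that the paper works with the linear scores directly (valid because $\zeta$ is strictly increasing), so the constant produced by its proof is $2\margin\eta_{\tau-1}D$, involving the margin parameter $\margin$ rather than $\smooth$.
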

\begin{proof}
	Denote $P_i(\cdot \lvert \theta )$ as the distribution of $X_{t,i} $ under greedy action condition on $\theta$, and $\E^{\tau-1}[\cdot]$ the expectation condition on $\mathcal{F}_{\tau -1}$, then notice that $\nabla f(\theta; \bm{0},r) = \bm 0 $ for every $\theta,r$,  we have
\begin{align*}
	&\lVert \E[D_\tau\lvert \mathcal{F}_{\tau -1} ] \rVert_\infty \\
	= &\big\lVert \E^{\tau-1}_{X_{\tau-1,i}}[ \E_{r}[ \nabla f(\theta_{\tau-1,i};X,r)\lvert X ] ] - \E^{\tau-1}_{X_{\tau,i}}[ \E_{r}[ \nabla f(\theta_{\tau-1,i};X,r)\lvert X ] ] \big\rVert_\infty \\
	= & \lVert \int_{\mathcal{X}} \big(\zeta(x^\top\theta_{\tau -1,i }) -\zeta(x^\top\theta^*_i)\big) xdP_i (x\lvert \theta_{\tau - 1 } )  - \int_{\Theta} \int_{\mathcal{X}} \big( \big(\zeta(x^\top\theta_{\tau -1,i }) -\zeta(x^\top\theta^*_i)\big) x dP_i( x\lvert \theta_{\tau} ) dP(\theta_{\tau })\rVert_\infty         \\
	\leq & \int_{\Theta}\bigg[\int_{\mathcal X } \lvert (\zeta(x^\top\theta_{\tau -1 ,i }) -\zeta(x^\top\theta^*_i)\rvert\cdot  \lVert x\rVert_\infty \cdot \lvert p_i(x\lvert \theta_{\tau,i}) - p_i(x\lvert \theta_{\tau -1,,i}) \rvert d\nu \bigg]dP(\theta_{\tau})\\
\leq & 2\smooth \lVert \theta_{\tau-1,i} - \theta^*_i\rVert\Lx  \cdot \E^{\tau-1}_{\theta_\tau}\big[ \lVert P_i(\cdot\lvert \theta_{\tau-1}) - P_i(\cdot\lvert \theta_{\tau}) \rVert_{TV}\big]	 
\end{align*}
Thus the first part is proved. On the other hand, notice that \begin{align*}
	  \lVert \theta_{\tau}- \theta_{\tau -1}\rVert_1 = \eta_{\tau -1} \lVert v_{\tau - 1} - \theta_{\tau-1}\rVert_1 \leq \eta_{\tau - 1}\diam
\end{align*}
we get then \begin{align*}
	 \lVert P_i(\cdot\lvert \theta_{\tau-1}) - P_i(\cdot\lvert \theta_{\tau}) \rVert_{TV} &  = \dfrac{1}{2} \int_{\mathcal{X}} \lvert p_i(x\lvert \theta_{\tau -1}) - p_i(x\lvert \theta_{\tau} )\rvert d\nu \\
	 &\leq \dfrac{1}{2} \big(\int_{S}+\int_{S^c}\big)\cdot \big\lvert p_i(x\lvert \theta_{\tau -1}) - p_i(x\lvert \theta_{\tau} )\big\rvert d\nu 
\end{align*}
with $S: = \big\{x\in \mathcal{X}: \bm{1} \{a(x\lvert \theta_{\tau-1}) = i\} = \bm{1}\{ a(x\lvert \theta_{\tau}) = i\} \big\}$ and $a(x\lvert \theta)\in[K]$ is the greedy action given  context $x$ and estimator $\theta$, in particular we have $\bm 0 \in S$. Clearly we have the distribution of $X\bm{1}\{a(X\lvert \theta_{\tau} )  = i\} $ equals to the distribution of $X\bm{1}\{a(X\lvert \theta_{\tau-1} )  = i\} $  on $S$, thus \begin{align*}
	 \dfrac{1}{2} \big(\int_{S}+\int_{S^c}\big)\cdot \big\lvert p_i(x\lvert \theta_{\tau -1}) - p_i(x\lvert \theta_{\tau} )\big\rvert d\nu  & = \dfrac{1}{2}\int_{S^c} \big\lvert p_i(x\lvert \theta_{\tau -1}) - p_i(x\lvert \theta_{\tau} )\big\rvert d\nu.
\end{align*}
On the other hand, if we denote $p(z)$ the distribution of $X$, then by $\bm 0 \notin S^c$, \begin{align*}
	\int_{S^c} \big\lvert p_i(x\lvert \theta_{\tau -1} ) -  p_i(x\lvert \theta_{\tau } )\big\rvert  d\nu &\leq 2 \int_{\mathcal{X}} \int_{S^c} p_i(x\lvert \theta_{\tau -1},z)p(z) d\nu dz  \\
	&= 2\int_{S^c} p(z)\int_{S^c}  p_i(x\lvert \theta_{\tau -1},z) d\nu dz  \\
	&\leq 2 \int_{S^c} p(z) dz\\
	& = 2 P(X\in S^c ).
\end{align*}
And by assumption~\ref{as:margin-condition}
\begin{align*}
	P(X\in S^c) =& P\big(\bm{1} \{a(X\lvert \theta_{\tau}) = i\}\neq_d \bm{1} \{a(X\lvert \theta_{\tau -1}) = i\}   \big)\\
	 =& P\big( a(X\lvert \theta_{\tau}) = i, a(X\lvert \theta_{\tau-1}) \neq  i  \big) + P\big( a(X\lvert \theta_{\tau}) \neq i, a(X\lvert \theta_{\tau-1}) =  i  \big)\\
	 \leq&  P( \max_{j\neq i} X^\top(\theta_{\tau,i}- \theta_{\tau,j})>0, \max_{j\neq i} X^\top(\theta_{\tau-1,i}- \theta_{\tau-1,j})\le 0 )\\
	 &+ P( \max_{j\neq i} X^\top(\theta_{\tau,i}- \theta_{\tau,j})>0, \max_{j\neq i} X^\top(\theta_{\tau-1,i}- \theta_{\tau-1,j})\le 0 )\\
	  \leq &  2 P(\max_{j\neq i}X^\top(\theta_{\tau-1,i} - \theta_{\tau-1,j})< \eta_{\tau -1}D)\\
	  \leq & 2 \margin \eta_{\tau - 1}D.	   \end{align*}
	   We get \begin{align*}
	   	\lVert P_i(\cdot\lvert \theta_{\tau-1}) - P_i(\cdot\lvert \theta_{\tau}) \rVert_{TV}\leq 2 \margin \eta_{\tau-1}D.
	   \end{align*}
\end{proof}

\noindent Next we provide a complete version of Lemma~\ref{lm:context-gradient-error-total-variation}.
\begin{lemma} 
\label{lm:estimaton-error-bandit}
For each arm $i\in K_{\text{opt}}$, suppose that the greedy action begins to be picked at time $t_0$, then for any $t>t_0$ we have with probability at least $1-\alpha$,  
\begin{align*}
\Delta_t  &\lesssim \dfrac{\beta}{t}\big(M D +\SCerror(\alpha/2(d+t_0))  (M+\sqrt{\log((d+T)/\alpha)} )\sqrt{t_0} + \diam \Lx \margin \sum_{\tau = t_0+1}^t \lVert \theta_{\tau - 1,i}-\theta_{i}^*\rVert_1  \big) + \ldots \\
& \quad + \dfrac{(MD  +\beta )\sqrt{\log((d+ T)/\alpha)}}{\sqrt{t}},
\end{align*}
where
\begin{align*}
        \Delta_t & = \lVert d_t - \nabla F_t(\theta_t)\rVert_\infty,\quad 
    \SCerror(\alpha)  = \sqrt{\frac{9(\smooth \diam^2 + A(\alpha))^2}{u \strongly \lambda}},
\end{align*}
and $A$ is stated in Theorem~\ref{thm-convergence-l1-general}.
\end{lemma}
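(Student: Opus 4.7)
\medskip

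\noindent\textbf{Proof plan for Lemma~\ref{lm:estimaton-error-bandit}.}

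The plan is to extend the telescoping decomposition used in Proposition~\ref{prop-grad-error-lp-full} to the non-stationary bandit setting, where the conditional distribution of the sample $x_\tau$ used to update arm $i$ depends on the current estimator $\theta_\tau$ through the greedy policy, i.e.\ $x_\tau\sim P_i(\cdot\mid\theta_\tau)$. Starting from $d_\tau=\nabla f(\theta_\tau,x_\tau)+(1-\rho_\tau)(d_{\tau-1}-\nabla f(\theta_{\tau-1},x_\tau))$ and subtracting $\nabla F_\tau(\theta_\tau)$, I would add and subtract $\nabla F_{\tau-1}(\theta_{\tau-1})$, $\nabla F_\tau(\theta_{\tau-1})$, and $(1-\rho_\tau)\nabla F_\tau(\theta_{\tau-1})$ so as to write the recursion
\[
\Delta_\tau=(1-\rho_\tau)\Delta_{\tau-1}+\underbrace{(1-\rho_\tau)\bigl(\nabla F_{\tau-1}(\theta_{\tau-1})-\nabla F_\tau(\theta_{\tau-1})\bigr)}_{\text{drift term}}+\underbrace{\xi_\tau}_{\text{martingale difference}},
\]
where $\xi_\tau$ collects all sampling noise terms centered at their conditional expectations under $P_i(\cdot\mid\theta_\tau)$, and is therefore a martingale difference with respect to $\mathcal{F}_{\tau-1}$, in direct analogy with the SCO proof. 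Unrolling the recursion and using $\prod_{k=\tau+1}^{t}(1-\rho_k)=\tau/(t+1)$ (as in Section~\ref{proof-prop-lp-grad-error}) yields $\Delta_t$ as a weighted sum of drift contributions plus a martingale.

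For the martingale piece, I would apply the vector Azuma--Hoeffding bound of Proposition~\ref{prop-vector-value-mds} coordinate-wise and take a union bound over the $d$ coordinates. The per-step bound on $\|\xi_\tau\|_\infty$ is $O((\smooth\diam+M)/(t+1))$ via the smoothness and Lipschitz assumptions, exactly as in equation~\eqref{eq-gradient-error-lp-eta-term} in the SCO case, producing the $(MD+\smooth)\sqrt{\log((d+T)/\alpha)}/\sqrt{t}$ contribution. For the drift piece, Lemma~\ref{lemma-tv-norm} bounds $\|\nabla F_{\tau-1}(\theta_{\tau-1})-\nabla F_\tau(\theta_{\tau-1})\|_\infty$ by $2\smooth\Lx\|\theta_{\tau-1}-\theta^*_i\|_1\cdot\|P_i(\cdot\mid\theta_{\tau-1})-P_i(\cdot\mid\theta_\tau)\|_{\mathrm{TV}}$, and, once $\tau>t_0$ so both $a_{\tau-1}$ and $a_\tau$ are greedy, the second half of Lemma~\ref{lemma-tv-norm} controls the TV by $2\margin\eta_{\tau-1}\diam$. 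Summing $\tau/(t+1)$ times this bound over $\tau=t_0+1,\dots,t$ and using $\eta_{\tau-1}=1/\tau$ yields the $\smooth\diam\Lx\margin\,\sum_{\tau>t_0}\|\theta_{\tau-1,i}-\theta^*_i\|_1/t$ term in the statement.

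Handling the range $\tau\le t_0$ is where the main technical care is needed, and I expect it to be the principal obstacle. During the forced-sampling phase each arm is pulled deterministically, so the samples are i.i.d.\ and the drift $\nabla F_{\tau-1}(\theta_{\tau-1})-\nabla F_\tau(\theta_{\tau-1})$ vanishes identically; at the single transition step $\tau=t_0+1$ from forced to greedy sampling, however, the TV bound of Lemma~\ref{lemma-tv-norm} no longer applies, so one must instead control this single drift term by $O(\smooth\Lx\|\theta_{t_0}-\theta^*_i\|_1)$ and invoke the forced-sampling estimation guarantee. That guarantee follows by combining Lemma~\ref{lm:candidate-set} (which gives $u\strongly\lambda$-strong convexity of $F_\tau$ over the warm-up phase) with the $\ell_1$ convergence rate of Theorem~\ref{thm-convergence-l1-general} applied to Algorithm~\ref{alg-dp-sco-l1}, yielding $\|\theta_{t_0}-\theta^*_i\|_1\le \SCerror(\alpha/(2(d+t_0)))/\sqrt{t_0}$ up to logs. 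Multiplying by the weight $t_0/(t+1)$ and combining with the martingale magnitude on the forced-sampling block produces the $\smooth\SCerror(\alpha/(2(d+t_0)))(M+\sqrt{\log((d+T)/\alpha)})\sqrt{t_0}/t$ term. Finally the isolated $\smooth M\diam/t$ term comes from the initial condition $\Delta_1$ weighted by $1/(t+1)$, and gathering all contributions after a union bound over the high-probability events (martingale concentration, $t_0$ forced-sample concentration events, and the $d$-coordinate union bound) with total failure budget $\alpha$ gives the stated inequality.
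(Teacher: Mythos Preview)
Your proposal is correct and follows essentially the same route as the paper's proof: both decompose $\Delta_t$ into a drift term $\nabla F_{\tau-1}(\theta_{\tau-1})-\nabla F_\tau(\theta_{\tau-1})$ handled via Lemma~\ref{lemma-tv-norm}, a martingale piece handled via coordinate-wise Azuma--Hoeffding with a union bound over $d$, and a special treatment of the single transition step from forced to greedy sampling using the warm-up guarantee $\lVert\theta_{t_0}-\theta^*_i\rVert_1\le \SCerror/\sqrt{t_0}$.

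One organizational difference worth noting: the paper splits the martingale into two sums (labelled I and II in its proof) and bounds the increment of part~I at $\tau=t_0$ via the crude inequality $\lVert D_\tau-\E[D_\tau\mid\mathcal{F}_{\tau-1}]\rVert_\infty\le 2\lVert D_\tau\rVert_\infty$, which forces the $\SCerror\sqrt{t_0}$ factor into the martingale variance and then requires a truncation trick ($M_{t_0}'=M_{t_0}\bm{1}\{\cdot\}$) because the resulting increment bound only holds with high probability rather than almost surely. Your combined martingale $\xi_\tau$ admits the tighter almost-sure bound $\lVert\xi_\tau\rVert_\infty\le O((\smooth\diam+G)/(t+1))$ directly from smoothness (since the centered increment is $\nabla f_\tau(\theta_\tau)-\nabla f_\tau(\theta_{\tau-1})-[\nabla F_\tau(\theta_\tau)-\nabla F_\tau(\theta_{\tau-1})]$, which is controlled by $2\smooth\eta_{\tau-1}\diam$ regardless of $\tau$), so the truncation is unnecessary and the $\SCerror\sqrt{t_0}$ contribution enters solely through the drift at the transition step. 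This is cleaner, and the stated lemma still follows since the statement is only up to constants.
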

\begin{proof}
	
For each arm $i$, denote $X_{t,i}: = X_{t}\bm{1}\{a_t = i\},r_{t,i}: = r_t \bm{1}\{a_t = i\}$. 
Moreover we introduce $f_t(\theta) \coloneqq f(\theta;X_{t,a_t},y_t)$, $F_t(\theta)\coloneqq \E[\nabla f_t(\theta)\lvert \mF_{t-1}]$ and  $\Delta_t \coloneqq d_t - \nabla F_t(\theta_t)$. Then 

\begin{equation}
\label{eq:eps}
\begin{aligned}
		\Delta_t &=  d_t- \nabla F_t(\theta_t) \\
		&=  \nabla f_t(\theta_t)+ (1-\rho_t)( {d}_{t-1}-\nabla f_t(\theta_{t-1}) )  -\nabla F_t(\theta_t)\\
		& =  (1-\rho_t)\epsilon_{t-1} + \rho_t( \nabla f_t(\theta_t)- \nabla F_t(\theta_t )  )\\
		&\quad + (1-\rho_t) \big( \nabla f_t(\theta_t) - \nabla f_t(\theta_{t-1}) - (\nabla F_t(\theta_t)- \nabla F_{t-1}(\theta_{t-1}))   \big) \\
		&\leq  \prod_{k=2}^t (1-\rho_k) \epsilon_1  + \sum_{\tau = 2}^t\prod_{k=\tau}^t (1-\rho_k)\underbrace{\big( \nabla f_{\tau}(\theta_\tau) - \nabla  f_{\tau}(\theta_{\tau-1}) - (\nabla F_\tau (\theta_\tau)-\nabla F_{\tau-1}(\theta_{\tau-1}))  \big)}_{D_\tau}\\
		&\quad + \sum_{\tau = 2}^t
\rho_\tau \prod_{k=\tau+1}^t	(1-\rho_k) \big(\nabla f_{\tau} (\theta_\tau) - \nabla F_\tau  (\theta_\tau)\big)\\
&= \prod_{k=2}^t(1-\rho_k) \epsilon_1 + \underbrace{\sum_{\tau = 2}^t \prod_{k=\tau}^t (1-\rho_k) \big( D_\tau - \E[D_\tau\lvert \mathcal{F}_{\tau - 1} ] \big)}_{\text{I}}  \\
	 &\quad + \underbrace{\sum_{\tau = 2}^t
\rho_\tau \prod_{k=\tau+1}^t	(1-\rho_k) \big(\nabla f_{\tau} (\theta_\tau) - \nabla F_\tau  (\theta_\tau)\big)}_{\text{II}}+ \underbrace{{\sum_{\tau = 2}^t \E[\prod_{k=\tau}^t (1-\rho_k) D_\tau\lvert \mathcal{F}_{\tau-1}]}}_{\text{III}}.
\end{aligned}
\end{equation}

First we bound III.
Now by the theorem~\ref{thm-convergence-l1-strongly}, we have with probability at least $1-(t_0 - 1)\smallprob$, \begin{align*}
 \lVert	 \theta_{s,i}- \theta_i^*\rVert_1 \leq \dfrac{\SCerror((t_0 - 1)\smallprob)}{\sqrt{s}},\quad  \forall s\leq t_0 - 1.
\end{align*} 
Thus by the Lemma~\ref{lemma-tv-norm} and the fact that \begin{align*}
	\E_{\theta_{t_0}} \big[ \lVert P_i(\cdot\lvert \theta_{t_0-1}) - P_i(\cdot\lvert \theta_{t_0)} \rVert_{TV}\lvert \mathcal{F}_{t_0 -1} \big]\leq 1,
\end{align*}
we have with probability at least $1- (t_0-1) \smallprob$,
 \begin{align*}
	\sum_{\tau = 2}^t \prod_{k=\tau }^t(1-\rho_k)\E[ D_{\tau}\lvert \mathcal{F}_{\tau - 1}] & = \dfrac{t_0+1}{t+1} 2\beta \lVert \theta_{t_0-1} - \theta^*\rVert_1  \Lx \ +  \sum_{\tau = t_0 + 1}^t \dfrac{\tau +1}{t+1} 2\beta \lVert \theta_{\tau-1} - \theta^*\rVert_1  \Lx \cdot 2\margin \eta_{\tau - 1} \diam  \\
	&\leq \dfrac{\beta M}{t+1}\bigg( {2 \SCerror((t_0 - 1)\smallprob)}\cdot \sqrt{t_0+1} + \sum_{\tau = t_0 + 1}^t ({\tau +1}) 2 \lVert \theta_{\tau-1} - \theta^*\rVert_1   \cdot 2\margin \eta_{\tau - 1} \diam  \bigg)  
\end{align*}

Now to bound $\Delta_t$, it sufficient to bound I$+$II, i.e.,\begin{align*}
	\sum_{\tau = 2}^t \prod_{k=\tau}^t (1-\rho_k) \big( D_\tau - \E[D_\tau\lvert \mathcal{F}_{\tau - 1} ] \big)+ \sum_{\tau = 2}^t
\rho_\tau \prod_{k=\tau+1}^t	(1-\rho_k) \big(\nabla f _{\tau}(\theta_\tau) - \nabla F_\tau  (\theta_\tau)\big).
\end{align*}
For the second summation, we have by the same argument as in proof of Lemma~\ref{lemma-fw-gradient-error-l1}, with probability at least $1-\smallprobtail$  \begin{align*}
	\lVert  \sum_{\tau = 2}^t
\rho_\tau \prod_{k=\tau+1}^t	(1-\rho_k) \big(\nabla f_{\tau} (\theta_\tau) - \nabla F_\tau  (\theta_\tau)\big)\rVert_\infty \leq \dfrac{4\Lx \diam \sqrt{\log (4d/\alpha_2)} }{\sqrt{t+1}} 
\end{align*}
For the first summation,
notice that \begin{align*}
    \lVert D_{\tau}\rVert_\infty &\leq 2\Lx  \diam\eta_\tau + \lVert \nabla F_\tau(\theta_{\tau-1}) - \nabla F_{\tau - 1}(\theta_{\tau - 1}) \rVert_\infty\\
    & = 2\Lx  \diam \eta_\tau + \E \big[ \lVert \E[\nabla F_\tau(\theta_{\tau-1})\lvert \mathcal{F}_{\tau - 1}] - \nabla F_{\tau - 1}(\theta_{\tau - 1}) \rVert_\infty\big]\\
    &\leq 2 MD\eta_\tau + 2 \beta \lVert \theta_{\tau -1,i}-\theta_i^*\rVert_1 M \cdot \E[\lVert P_i(\cdot\lvert \theta_{\tau-1}) - P_i(\cdot\lvert \theta_{\tau}) \rVert_{TV}]
\end{align*}
Now by \begin{align*}
    \E[\lVert P_i(\cdot\lvert \theta_{\tau-1}) - P_i(\cdot\lvert \theta_{\tau}) \rVert_{TV}]\leq \left\{\begin{matrix} 
    & 0  & \tau < t_0 \\
    & 1 & \tau = t_0, \\
    & D\margin \eta_\tau  & \tau>t_0.
    \end{matrix}\right.
\end{align*}
And $\lVert \theta_{\tau -1,i}-\theta_i^*\rVert_1\leq \diam $ 
we have by setting $M_\tau = \tau\big( D_{\tau}-\E[D_\tau \lvert \mathcal{F}_{i-1}]\big)$, then \begin{align*}
    \lVert M_\tau \rVert_\infty &\leq \left\{\begin{matrix} 
    & 2MD & \tau < t_0 \\
    & 2M(D+\beta \SCerror(\smallprob) \sqrt{t_0}) & \tau = t_0,\text{ with probability at least $1-\smallprob$} \\
    & 2MD(1+\beta)  & \tau>t_0.
    \end{matrix}\right.
\end{align*}
And thus apply Azuma-Hoeffding's inequality to each components $M_{\tau,i}$ with $M_{t_0}$ replaced by $M_{t_0}': = M_{t_0}\bm{1}\{  M_{t_0}\leq  2M(D+\beta \SCerror(\smallprob) \sqrt{t_0})  \}$, we have with probability at least $1-d\smallprob,$  \begin{align*}
  & \lvert M_{t_0,i}' +  \sum_{\tau \neq  t_0} M_{\tau,i} \rvert \\
  &\leq \bigg( 2M\sqrt{ t_0 D^2 + (D+\beta \SCerror(\smallprob) \sqrt{t_0})^2 + (t-t_0) D^2(1+\beta)^2   } + \E[M_{t_0,i}'\lvert \mathcal{F}_{t_0-1} ] \bigg) \cdot\sqrt{\log(1/\alpha_1)}\\
  &\leq 4M\big[ D(1+\smooth)\sqrt{t}+ (D+\beta\SCerror(\smallprob) \sqrt{t_0})\big]\cdot\sqrt{\log(1/\alpha_1)}
\end{align*}
normalizing the summation by $t$ and notice that $M_{t_0}'\neq M_{t_0}$ with probability at most $\alpha_1$, we have then with probability at least $1-d\alpha_1$ \begin{align*}
    \sum_{\tau = 2}^t \prod_{k=\tau}^t (1-\rho_k) \big( D_\tau - \E[D_\tau\lvert \mathcal{F}_{\tau - 1} ] \big) \leq \big( \dfrac{2MD(1+\beta) }{\sqrt{t}} + \dfrac{D+\beta \SCerror(\smallprob) \sqrt{t_0}}{t } \big) \cdot\sqrt{\log(1/\alpha_1)}, \forall i \in [d]
\end{align*}
Now combining all bounds and set $\smallprob = \alpha(d+t_0)/2,\smallprobtail = \alpha/2$, we get  with probability at least $1-\alpha$,\begin{align*}
   & \lVert \Delta_t \rVert\\
   &\leq  \dfrac{\Lx \beta \diam }{t+1} + \big( \dfrac{2MD(1+\beta) }{\sqrt{t}} + \dfrac{D+\SCerror(\alpha(d+t_0)/2) \beta \sqrt{t_0}}{t } \big) \cdot\sqrt{\log(2(d+t_0)/\alpha)} + \dfrac{4\Lx \diam \sqrt{\log (8d/\alpha)} }{\sqrt{t+1}} \\
   &\quad +\dfrac{\beta M}{t+1}\bigg( {3 \SCerror((t_0-1)\alpha/(2(d+t_0)))}\cdot \sqrt{t_0+1} + \sum_{\tau = t_0 + 1}^t ({\tau +1}) 2 \lVert \theta_{\tau-1,i} - \theta_{i}^*\rVert_1   \cdot 2\margin \eta_{\tau - 1} \diam  \bigg) \\
   &\lesssim \dfrac{\beta}{t}\big(M D +\SCerror((d+t_0)\alpha/2)  (M+\sqrt{\log((d+T)/\alpha)} )\sqrt{t_0} + \diam \Lx \margin \sum_{\tau = t_0+1}^t \lVert \theta_{\tau - 1}-\theta^*\rVert_1  \big) \\
   &\quad + \dfrac{(MD  +\beta )\sqrt{\log((d+ T)/\alpha)}}{\sqrt{t}}
\end{align*}
as claimed.
\end{proof}

\subsection{Proof of Theorem~\ref{lm:induction-text}}
\label{sec:Proof_of_bandit_estimation}
\begin{proof}
Suppose at time $t$, we have with probability at least $1-\alpha$ that \begin{align*}
	 h_\tau  \leq  \dfrac{\Cpara(\alpha)}{\tau}, \text{(thus }  \lVert \theta_\tau - \theta^*\rVert_1\leq \sqrt{\dfrac{\Cpara(\alpha)}{ u\lambda \strongly \tau}}),\quad \forall \tau \leq t-1,
\end{align*}
then condition on such event,  from Lemma~\ref{lm:estimaton-error-bandit} and the same argument in \eqref{eq-max-laplace-noise}, we have for $h_t\coloneqq F_t(\theta_t) - F_t(\theta^*)$, with probability at least $1-\alpha_1-\alpha_2$,  
\begin{align*}
h_{t}&\leq h_{t-1} +2\eta_t \lVert\nabla F_t(\theta_t)-d_t\rVert_\infty\diam -\eta_t\boundary\sqrt{2\strongly h_{t-1}} + \dfrac{\eta_t^2L\diam^2}{2}+ \eta_t\langle d_t, {v}_t - \tilde{v}_t \rangle\\
&\leq h_{t-1} - \eta_t\boundary \sqrt{2\mu h_{t-1}}+ 2\eta_{t}\dfrac{\diam \beta}{t}\big(M D +\SCerror(\alpha_1/2(d+t_0))  (M+\sqrt{\log((d+T)/\alpha_1)} )\sqrt{t_0}) + \ldots \\
&\quad + \frac{2 \eta_t \diam \Lx \margin \sqrt{t}\beta\Cpara^{1/2}}{\sqrt{\mu}t} + \dfrac{2 \eta_t D(MD  +\beta )\sqrt{\log((d+ T)/\alpha_1)}}{\sqrt{t}} + \frac{\eta_t^2 LD^2}{2} + \ldots \\
&\quad + \eta_t \dfrac{4\diam(\smooth \diam+\lip)\sqrt{\log T \cdot \log (1/\delta)}}{\sqrt{t}\varepsilon } \cdot 4\log(2d/\alpha_2)\\
&\coloneqq h_{t-1} - \eta_t\boundary \sqrt{2\mu h_{t-1}}+\frac{G_1\Cpara^{1/2}}{t^{3/2}} + \frac{G_2}{t^{3/2}\varepsilon} + \frac{G_3}{t^{3/2}} + \frac{G_4}{t^2}
,
\end{align*}
where 
\begin{align*}  
    G_1 &\coloneqq \frac{2 \beta \diam^2 \Lx \margin}{\sqrt{\mu}},\\
    G_2 &\coloneqq \dfrac{4\diam(\smooth \diam+\lip)\sqrt{\log T \cdot \log (1/\delta)}}{\varepsilon } \cdot 4\log(2d/\alpha_2),\\
    G_3 &\coloneqq D(MD  +\beta )\sqrt{\log((d+ T)/\alpha_1)},\\
    G_4 &\coloneqq D\beta \big(M D +\SCerror(\alpha_1/2(d+t_0))     (M+\sqrt{\log((d+T)/\alpha_1)} )\sqrt{t_0}\big) + \frac{L\diam^2}{2}.
\end{align*}
For notation simplicity we abbreviate $\Cpara$ for $\Cpara(\alpha)$ below.

\textbf{Case1: }$ h_{t-1}  - \eta_t\boundary \sqrt{2\mu h_{t-1}}\leq 0:$  i.e. 
\begin{align*}
 h_t&\le \frac{G_1\Cpara^{1/2}}{t^{3/2}} + \frac{G_2}{t^{3/2}\varepsilon} + \frac{G_3}{t^{3/2}} + \frac{G_4}{t^2}.
\end{align*} 
To ensure the induction, we need  \begin{align*}
	& \frac{G_1\Cpara^{1/2}}{t^{3/2}} + \frac{G_2}{t^{3/2}\varepsilon} + \frac{G_3}{t^{3/2}} + \frac{G_4}{t^2}\le \frac{\Cpara}{t}\\
    \iff & \frac{G_1\Cpara^{1/2}}{t^{1/2}} + \frac{G_2}{t^{1/2}\varepsilon} + \frac{G_3}{t^{1/2}} + \frac{G_4}{t}\le \Cpara
\end{align*}
As long as $t\ge \max\{G_1^2, \frac{G_2^2}{\log(dT/\alpha)}, G_3^2, \frac{G_4}{\log(dT/\alpha)}\}\Rightarrow t\ge \dfrac{\log(dT/\alpha)\log(T)}{\varepsilon^2}$, we can choose $\Cpara = \max\{4, \frac{9\log^2(dT/\alpha)}{\varepsilon^2}\}$ to satisfy the above inequality.\\
\textbf{Case2: $h_{t-1}-\eta_t\boundary \sqrt{2\mu h_{t-1}}> 0$ :} \begin{align*}
	 h_{t}  &= h_{t-1} - \eta_t\boundary \sqrt{2\mu h_{t-1}} + \frac{G_1\Cpara^{1/2}}{t^{3/2}} + \frac{G_2}{t^{3/2}\varepsilon} + \frac{G_3}{t^{3/2}} + \frac{G_4}{t^2}\\
	   &\leq \dfrac{\Cpara}{t} -\eta_{t} \gamma \frac{\sqrt{2\mu}\Cpara^{1/2}}{\sqrt{t}} + \frac{G_1\Cpara^{1/2}}{t^{3/2}} + \frac{G_2}{t^{3/2}\varepsilon} + \frac{G_3}{t^{3/2}} + \frac{G_4}{t^2},
\end{align*} 
i.e. \begin{align*}
	 h_{t+1} - \dfrac{\Cpara }{{ t+1}} &\leq   \big(\dfrac{\Cpara}{t} - \dfrac{\Cpara}{t+1}\big) -\eta_{t} \gamma \frac{\sqrt{2\mu}\Cpara^{1/2}}{\sqrt{t}} + \frac{G_1\Cpara^{1/2}}{t^{3/2}} + \frac{G_2}{t^{3/2}\varepsilon} + \frac{G_3}{t^{3/2}} + \frac{G_4}{t^2},
\end{align*}
i.e. we need to choose $\Cpara$ so that RHS is not greater than zero, \begin{align*}
\text{ i.e. }	& (\gamma\sqrt{2\mu}-G_1)\Cpara^{1/2}-\frac{\Cpara}{t^{1/2}}\ge \frac{G_2}{t^{1/2}\varepsilon} + \frac{G_3}{t^{1/2}} + \frac{G_4}{t}.
\end{align*}

Choosing $\boundary \geq  \frac{3G_1}{2\sqrt{2}\strongly}$, and  as long as \begin{align*}
    \Cpara &= \tilde{C}\frac{\log^2(dT/\alpha)}{\varepsilon^2} \text{ for some $\tilde{C}$ independent of $d, T$ and $\alpha$,} \\
    t&\ge \max\{\frac{G_2^2}{\log(dT/\alpha)}, G_3^2, \frac{G_4}{\log(dT/\alpha)}, \frac{\Cpara}{9\log(dT/\alpha)}\}\Rightarrow t\gtrsim \dfrac{\log(dT/\alpha)\log(T)}{\varepsilon^2},
\end{align*}
the claim holds. 

Finally we need to ensure the induction holds when $t=t_0$. Note that when $t\le t_0$, the convergence result is given by Theorem~\ref{thm-convergence-l1-strongly}.
Thus to ensure the induction holds we also need $\Cpara\ge \lambda \smooth \strongly (\SCerror(\alpha))^2$.
In conclusion, we choose $\alpha_1=\alpha_2=\frac{\alpha}{2T}$, $$\Cpara = \max\{4, \frac{9\log(dT)}{\varepsilon^2}, \frac{36\log^2(dT)}{(\varepsilon G_1)^2}, \lambda \smooth \strongly(\SCerror(\alpha))^2\}=O( \frac{\log^2(dT/\alpha)\log(T)}{\varepsilon^2}),$$ 
and 
$$t_0= \max\{G_1^2, \frac{G_2^2}{\log(dT/\alpha)}, G_3^2, \frac{G_4}{\log(dT/\alpha)}, \frac{\Cpara}{9\log(dT/\alpha)}\}=O( \frac{\log(dT/\alpha)\log(T)}{\varepsilon^2}),$$ 
to ensure the induction holds.
\end{proof}

\subsection{Proof of Theorem~\ref{thm:regret}}
\label{sec:proof-regret}
\begin{proof}
    We define event 
    \begin{align*}
    E_{t,1}&\coloneqq \big\{\hat{U}_{t} \cap U^c=\emptyset, a_{t}^{*} \in \hat{U}_{t} \big\},\\
    E_{t,2}&\coloneqq \big\{\sup_{i\in K_{opt}}\lVert \theta_{t,i} - \theta^{*}_i\rVert_1 \le \frac{\sqrt{\Cpara(\alpha/(4\lvert K_{opt}\rvert)}}{2\sqrt{t}}\big\},\\
    E_{t}&\coloneqq \big\{E_{t,1}, E_{t,2}, \Delta_t\le \frac{\Lx \smooth \sqrt{\Cpara(\alpha/(4\lvert K_{opt}\rvert))}}{\sqrt{t}} \big\}.
    \end{align*}
    and we use $\Cpara$ for $\Cpara(\alpha/(4\lvert K_{opt}\rvert))$ for notation simplicity in the following. 
    Using the similar argument as in Theorem~\ref{lm:induction-text}, we can verify that condition on the event $\sup_{i\in K_{opt}}\lVert \theta_{t,i} - \theta^{*}_i\rVert_1 \le \frac{\sqrt{\Cpara}}{2\sqrt{t}}$ and $\Delta_t\le \frac{\Lx \smooth \sqrt{\Cpara}}{\sqrt{t}}$, we must have $a_t = a_t^{*}$. Thus with probability at least $1-\frac{3\alpha}{4}$, the event $E_t \cup E_{t,1}^{c}\cup E_{t,2}^{c}$ holds, thus,
    \begingroup
    \allowdisplaybreaks
    \begin{align*}
        \text{Regret}(T) &= \big(\sum_{t\le t_0} + \sum_{t> t_0}\big) \big (\zeta(X_{t}^{\top} \theta^{*}_{t}) -\zeta(X_{t}^{\top} \theta^{*}_{a_t})\big ) \\
        &\le \sum_{t\le t_0} (\zeta(X_{t}^{\top} \theta^{*}_{t}) -\zeta(X_{t}^{\top} \theta^{*}_{a_t})) 
        + \sum_{t> t_0} (\zeta(X_{t}^{\top} \theta^{*}_{t}) -\zeta(X_{t}^{\top} \theta^{*}_{a_t}))  \bm{1}\{E_t \cup E_{t,1}^{c}\cup E_{t,2}^{c}\}. 
     \end{align*}
    \endgroup
    Note that the choice of $t_0=O(\frac{\log(dT/\alpha)\log(T)}{\varepsilon^2})$
    as stated in Appendix~\ref{sec:Proof_of_bandit_estimation} and Theorem~\ref{lm:induction-text} and the range of the reward can be bounded. 
    Now it remains to bound the second term. 
    Let $A_t = \frac{\Lx \smooth \sqrt{\Cpara}}{\sqrt{t}}\bm{1}\{\Delta_t\le \frac{\Lx \smooth \sqrt{\Cpara}}{\sqrt{t}}\}$ and the second term is upper bounded by $\sum_{t_0<t\le T}A_t$. 
    We have $\sum_{t_0<t\le T} (\frac{\Lx \smooth \sqrt{\Cpara}}{\sqrt{t}})^2 \le \Lx^2 \smooth^2 \Cpara \log(T)$. 
    Note that $P(A_t=\frac{\Lx \smooth \sqrt{\Cpara}}{\sqrt{t}})\le \margin \frac{\Lx \smooth \sqrt{\Cpara}}{\sqrt{t}}$ by Assumption~\ref{as:margin-condition} and thus $\E[\sum_{t_0<t\le T}A_t]\le \margin \Lx^2 \smooth^2 \Cpara\log(T)$. 
    We apply Hoeffding's inequality and can conclude that with probability at least $1-\alpha/4$
    \begin{align*}
        \sum_{t\ge t_0}A_t \le \margin \Lx^2 \smooth^2 \Cpara \log(T) + 2\Lx \smooth \sqrt{\Cpara\log(T)\log(4/\alpha)} .
        \end{align*}
    Putting all the terms together, and we arrive the desired conclusion.
    \end{proof}

\addcontentsline{toc}{section}{References}
\bibliography{ref}
\bibliographystyle{plainnat}
\end{document}